\crefname{item}{Item}{Item}
\newcounter{qst}
\crefname{qst}{Question}{Questions}
\definecolor{Gray}{gray}{0.95}
\patchcmd\algocf@Vline{\vrule}{\vrule \kern-0.4pt}{}{}
\patchcmd\algocf@Vsline{\vrule}{\vrule \kern-0.4pt}{}{}
\let\cref@old@stepcounter\stepcounter
\def\stepcounter#1{%
  \cref@old@stepcounter{#1}%
  \cref@constructprefix{#1}{\cref@result}%
  \@ifundefined{cref@#1@alias}%
    {\def\@tempa{#1}}%
    {\def\@tempa{\csname cref@#1@alias\endcsname}}%
  \protected@edef\cref@currentlabel{%
    [\@tempa][\arabic{#1}][\cref@result]%
    \csname p@#1\endcsname\csname the#1\endcsname}}
\newcommand{\declarecolor}[2]{\definecolor{#1}{RGB}{#2}\expandafter\newcommand\csname #1\endcsname[1]{\textcolor{#1}{##1}}}
\definecolor{mydarkblue}{rgb}{0,0.08,0.45}
\theoremstyle{plain}
\newtheorem{theorem}{Theorem}[section]
\newtheorem{lemma}[theorem]{Lemma}
\newtheorem{corollary}[theorem]{Corollary}
\newtheorem{proposition}[theorem]{Proposition}
\newtheorem{observation}[theorem]{Observation}
\newtheorem{claim}[theorem]{Claim}
\newtheorem{property}[theorem]{Property}
\theoremstyle{definition}
\newtheorem{definition}[theorem]{Definition}
\theoremstyle{remark}
\newtheorem{remark}[theorem]{Remark}
\newtheorem{example}[theorem]{Example}
\newcommand{\vx}{\vec{x}}
\newcommand{\vz}{\vec{z}}
\newcommand{\vmu}{\vec{\mu}}
\newcommand{\hvx}{\hat{\vec{x}}}
\newcommand{\hvy}{\hat{\vec{y}}}
\newcommand{\hvz}{\hat{\vec{z}}}
\newcommand{\vy}{\vec{y}}
\newcommand{\vu}{\vec{u}}
\newcommand{\vm}{\vec{m}}
\newcommand{\compl}{\mathfrak{C}}
\newcommand{\vxstar}{\vec{x}^\star}
\newcommand{\proj}{\Pi}
\newcommand{\eg}{\textsc{EqGap}}
\newcommand{\ceg}{\textsc{CeGap}}
\newcommand{\brgp}{\textsc{BR}}
\newcommand{\xstar}{\vec{x}^{\star}}
\newcommand{\ystar}{\vec{y}^{\star}}
\newcommand{\seq}{s}
\newcommand{\varne}{\mathcal{V}_{\text{NE}}}
\newcommand{\varce}{\mathcal{V}_{\text{CE}}}
\newcommand{\svarne}{\mathcal{S}_{\text{NE}}}
\newcommand{\epsvarne}{\mathcal{V}_{\epsilon-\text{NE}}}
\newcommand{\varmat}{\mathcal{V}_{\mat{A}}}
\newcommand{\vargrad}{\mathcal{V}_{\nabla f}}
\newcommand{\varpot}{\mathcal{V}_{\Phi}}
\newcommand{\phimax}{\Phi_{\textrm{max}}}
\newcommand*{\N}{{\mathbb{N}}}
\let\R\relax
\newcommand*{\R}{{\mathbb{R}}}
\let\cL\relax
\newcommand*{\eps}{{\epsilon}}
\newcommand*{\cX}{{\mathcal{X}}}
\newcommand*{\cL}{{\mathcal{L}}}
\newcommand*{\cY}{{\mathcal{Y}}}
\newcommand*{\cZ}{{\mathcal{Z}}}
\newcommand*{\cA}{{\mathcal{A}}}
\newcommand{\defeq}{\coloneqq}
\DeclareMathOperator{\reg}{Reg}
\DeclareMathOperator{\dreg}{DReg}
\mathchardef\mathhyphen="2D
\newcommand{\Kdreg}{K\mathhyphen\!\operatorname{DReg}}
\newcommand{\cmd}{\texttt{CMD}}
\newcommand{\oftrl}{\texttt{OFTRL}}
\newcommand{\varvar}{\mathcal{W}_{\mat{A}}}
\newcommand{\diam}{D}
\NewDocumentCommand{\treeset}{o}{\mathbb{T}\IfNoValueF{#1}{_{#1}}}
\newcommand{\ogd}{\texttt{OGD}}
\newcommand{\mwu}{\texttt{MWU}}
\newcommand{\omd}{\texttt{OMD}}
\newcommand{\gd}{\texttt{GD}}
    \newcommand*\patchAmsMathEnvironmentForLineno[1]{%
      \expandafter\let\csname old#1\expandafter\endcsname\csname #1\endcsname
      \expandafter\let\csname oldend#1\expandafter\endcsname\csname end#1\endcsname
      \renewenvironment{#1}%
                       {\linenomath\csname old#1\endcsname}%
                       {\csname oldend#1\endcsname\endlinenomath}%
    }%
    \newcommand*\patchBothAmsMathEnvironmentsForLineno[1]{%
      \patchAmsMathEnvironmentForLineno{#1}%
      \patchAmsMathEnvironmentForLineno{#1*}%
    }%
\DeclarePairedDelimiterX{\infdivx}[2]{(}{)}{%
  #1\;\delimsize\|\;#2%
}
\newcommand{\brg}[1]{\mathcal{B}_{\phi_{#1}}\infdivx}
\newcommand{\range}[1]{[\![#1]\!]}
\renewcommand{\vec}[1]{\bm{#1}}
\newcommand{\mat}[1]{\mathbf{#1}}
\DeclarePairedDelimiterX{\card}[1]{\lvert}{\rvert}{#1}
\DeclarePairedDelimiterX{\abs}[1]{\lvert}{\rvert}{#1}
\DeclarePairedDelimiterX{\norm}[1]{\lVert}{\rVert}{#1}
\DeclarePairedDelimiterX{\tuple}[1]{\lparen}{\rparen}{#1}
\DeclarePairedDelimiterX{\parens}[1]{\lparen}{\rparen}{#1}
\DeclarePairedDelimiterX{\brackets}[1]{\lbrack}{\rbrack}{#1}
\DeclarePairedDelimiterX{\set}[1]\{\}{#1}
\let\Pr\relax
\DeclarePairedDelimiterXPP{\Pr}[1]{\mathbb{P}}[]{}{#1}
\DeclarePairedDelimiterXPP{\PrX}[2]{\mathbb{P}_{#1}}[]{}{#2}
\DeclarePairedDelimiterXPP{\Ex}[1]{\mathbb{E}}[]{}{#1}
\DeclarePairedDelimiterXPP{\ExX}[2]{\mathbb{E}_{#1}}[]{}{#2}
\tikzset{
  fitting node/.style={
    inner sep=0pt,
    fill=none,
    draw=none,
    reset transform,
    fit={(\pgf@pathminx,\pgf@pathminy) (\pgf@pathmaxx,\pgf@pathmaxy)}
  },
  reset transform/.code={\pgftransformreset}
}
\tikzset{cross/.style={path picture={
  \draw[black]
(path picture bounding box.south east) -- (path picture bounding box.north west) (path picture bounding box.south west) -- (path picture bounding box.north east);
}}}
\tikzstyle{ox}=[semithick,draw=black,circle,cross,inner sep=1.2mm]
\newcommand{\nc}{\newcommand}
\nc\noah[1]{\ifnum\Comments=1 {\textcolor{purple}{[ng: #1]}}\fi}
\nc\maxfish[1]{\ifnum\Comments=1{\textcolor{blue}{[mf: #1]}}\fi}
\nc\costis[1]{\ifnum\Comments=1{\textcolor{brown}{[cd: #1]}}\fi}
\nc\costiss[1]{\textcolor{red}{#1}}
\nc\io[1]{\ifnum\Comments=1 {\textcolor{purple}{[ioannis: #1]}}\fi}
\nc{\Opthedge}{OMWU\xspace}
\nc{\DMO}{\DeclareMathOperator}
\nc\old[1]{\textcolor{brown}{[old: #1]}}
\nc{\BR}{\mathbb{R}}
\nc{\BC}{\mathbb{C}}
\DMO{\Bin}{Bin}
\nc{\BN}{\mathbb{N}}
\nc{\distrs}[1]{\Delta({#1})}
\nc{\BZ}{\mathbb{Z}}
\nc{\ep}{\epsilon}
\nc{\ra}{\rightarrow}
\nc{\st}{\star}
\nc{\Reg}[2]{\REG_{{#1},{#2}}}
\nc{\til}{\tilde}
\nc{\kld}[2]{\KL({#1};{#2})}
\nc{\chisq}[2]{\chi^2({#1};{#2})}
\DMO{\POLYLOG}{polylog}
\nc{\matx}[1]{\left(\begin{matrix}#1\end{matrix}\right)}
\DMO{\VAR}{Var}
\DMO{\COV}{Cov}
\nc{\Var}[2]{\VAR_{{#1}}\left({#2}\right)}
\nc{\Cov}[3]{\COV_{{#1}}\left({#2},{#3}\right)}
\DMO{\DD}{D}
\nc{\fd}[2]{\DD_{#1}{#2}}
\nc{\fds}[3]{\left(\fd{#1}{#2}\right)\^{#3}}
\nc{\fdc}[2]{\DD^\circ_{#1}{#2}}
\nc{\fdcs}[3]{\left(\fdc{#1}{#2}\right)\^{#3}}
\nc{\shf}[2]{\EEE_{#1}{#2}}
\nc{\shfs}[3]{\left(\shf{#1}{#2}\right)\^{#3}}
\nc{\normst}[2]{\left\| {#2} \right\|_{#1}^\st}
\renewcommand{\^}[1]{^{(#1)}}
\DeclareMathOperator*{\argmax}{arg\,max}
\DeclareMathOperator*{\argmin}{arg\,min}
\nc{\grad}{\nabla}
\nc{\lng}{\langle}
\nc{\rng}{\rangle}
\nc{\bbone}{\mathbf{1}}
\nc{\bbzero}{\mathbf{0}}
\nc{\MD}{\mathcal{D}}
\nc{\MM}{\mathcal{M}}
\nc{\MZ}{\mathcal{Z}}
\nc{\MU}{\mathcal{U}}
\nc{\MC}{\mathcal{C}}
\nc{\MT}{\mathbb{T}^{n}}
\nc{\MS}{\mathcal{S}}
\nc{\MX}{\mathcal{X}}
\nc{\MY}{\mathcal{Y}}
\nc{\MB}{\mathcal{B}}
\nc{\MJ}{\mathcal{J}}
\nc{\MF}{\mathcal{F}}
\nc{\MG}{\mathcal{G}}
\nc{\MR}{\mathcal{R}}
\nc{\ML}{\mathcal{L}}
\nc{\MQ}{\mathcal{Q}}
\nc{\ba}{\mathbf{A}}
\nc{\bx}{\mathbf{x}}
\nc{\by}{\mathbf{y}}
\nc{\bz}{\mathbf{z}}
\nc{\bs}{\mathbf{s}}
\nc{\bt}{\mathbf{t}}
\nc{\br}{\mathbf{r}}
\nc{\ME}{\mathcal{E}}
\DMO{\View}{View}
\DMO{\KL}{KL}
\nc{\MW}{\mathcal{W}}
\nc{\CS}{\mathscr{S}}
\nc{\CI}{\mathscr{I}}
\nc{\CQ}{\mathscr{Q}}
\nc{\CL}{\mathscr{L}}
\nc{\CM}{\mathscr{M}}
\nc{\CG}{\mathscr{G}}
\nc{\CR}{\mathscr{R}}
\nc{\wh}{\widehat}
\nc{\BM}{BM\xspace}
\nc{\ALG}{\texttt{ALG}}
\nc{\MCT}{{\rm MCT}}
\nc{\matrixLL}{\mat{L}}
\nc{\vectorecks}{\vec{x}}
\nc{\vectorLL}{\vec{\ell}}
\nc{\matrixKYU}{\mat{Q}}
\nc{\rowdot}{\cdot}
\title{On the Convergence of No-Regret Learning Dynamics in Time-Varying Games}
\author[1]{Ioannis Anagnostides}
\author[2]{Ioannis Panageas}
\author[3]{Gabriele Farina}
\author[4]{Tuomas Sandholm}
\affil[1,3,4]{Carnegie Mellon University}
\affil[2]{University of California Irvine}
\affil[4]{Strategy Robot, Inc.}
\affil[4]{Optimized Markets, Inc.}
\affil[4]{Strategic Machine, Inc.}
\affil[ ]{\texttt {\{ianagnos,gfarina,sandholm\}@cs.cmu.edu}, and \texttt{ipanagea@ics.uci.edu}}
\begin{document}

\maketitle

\pagenumbering{gobble}

\begin{abstract}
    Most of the literature on learning in games has focused on the restrictive setting where the underlying repeated game does not change over time. Much less is known about the convergence of no-regret learning algorithms in dynamic multiagent settings. In this paper, we characterize the convergence of \emph{optimistic gradient descent ($\ogd$)} in time-varying games. Our framework yields sharp convergence bounds for the equilibrium gap of $\ogd$ in zero-sum games parameterized on natural variation measures of the sequence of games, subsuming known results for static games. Furthermore, we establish improved \emph{second-order} variation bounds under strong convexity-concavity, as long as each game is repeated multiple times. Our results also apply to time-varying \emph{general-sum} multi-player games via a bilinear formulation of correlated equilibria, which has novel implications for meta-learning and for obtaining refined variation-dependent regret bounds, addressing questions left open in prior papers. Finally, we leverage our framework to also provide new insights on dynamic regret guarantees in static games.\looseness-1
\end{abstract}

\clearpage
\pagenumbering{arabic}

\section{Introduction}
\label{sec:intro}

Most of the classical results in the literature on learning in games---exemplified by, among others, the work of~\citet{Hart00:Simple,Foster97:Calibrated,Freund99:Adaptive}---rest on the assumption that the underlying repeated game remains invariant. Yet, in many paradigmatic learning environments that is  unrealistic~\citep{Duvocelle18:Multi,Zhang22:No,Cardoso19:Competing,Mertikopoulos21:Equilibrium}. One such class is settings where the underlying game is actually changing, such as routing problems on the internet~\citep{Menache11:Network}, online advertising auctions~\citep{Lykouris16:Learning}, and dynamic mechanism design~\citep{Papadimitriou22:On,Deng21:Non}.  Another such class consists of settings in which many similar games need to be solved~\citep{Harris22:Meta}. For example, one may want to solve variations of a game for the purpose of sensitivity analysis with respect to the modeling assumptions used to construct the game model.\looseness-1 

Despite the considerable interest in such dynamic multiagent environments, much less is known about the convergence of \textit{no-regret learning} algorithms in time-varying games. 
No-regret dynamics are natural learning algorithms that have desirable convergence properties in static settings. Also, the state-of-the-art algorithms for finding minimax equilibria in two-player zero-sum games are based on advanced forms of no-regret dynamics~\citep{Farina21:Faster,Brown19:Solving}. Indeed, all the superhuman milestones in poker have used them in the equilibrium-finding module of their architectures~\cite{Bowling15:Heads,Brown17:Superhuman,Brown19:Superhuman}.\looseness-1


In this paper, we seek to fill this knowledge gap by understanding properties of no-regret dynamics in time-varying games. 
In particular, we primarily investigate the convergence of \emph{optimistic gradient descent ($\ogd$)}~\citep{Chiang12:Online,Rakhlin13:Optimization} in time-varying games. Unlike traditional no-regret learning algorithms, such as (online) gradient descent, $\ogd$ has been recently shown to exhibit \emph{last-iterate} convergence in \emph{static} (two-player) zero-sum games~\citep{Daskalakis18:Training,Golowich20:Tight,Cai22:Tight,Gorbunov22:Last}. For the more challenging scenario where the underlying game can vary in every round, a fundamental question arises: \emph{Under what conditions on the sequence of games does $\ogd$ approximate (with high probability) the sequence of Nash equilibria?}\looseness-1

\subsection{Our results}

In this paper, we develop a framework that enables us to characterize the convergence of $\ogd$, and generalizations thereof, in a number of fundamental time-varying multiagent settings.

\paragraph{Bilinear saddle-point problems} First, building on the work of~\citet{Zhang22:No}, we identify natural variation measures on the sequence of games whose sublinear growth guarantees that \emph{almost all} iterates of $\ogd$ under a constant learning rate are approximate Nash equilibria in time-varying (two-player) zero-sum games (\Cref{cor:implications}). More precisely, in \Cref{theorem:nonasymptotic} we derive a sharp non-asymptotic characterization of the equilibrium gap of $\ogd$ as a function of the \emph{minimal first-order} variation of \emph{approximate} Nash equilibria and the \emph{second-order} variation of the payoff matrices. We stress that our variation measure that depends on the deviation of approximate Nash equilibria of the games can be arbitrarily smaller than the one based on (even the least varying) sequence of exact Nash equilibria (\Cref{prop:approx-exact}), thereby significantly sharpening the measure considered by~\citet{Zhang22:No}. It is also a compelling property, in light of the multiplicity of Nash equilibria, that the variation of the Nash equilibria is measured in terms of the most favorable---\emph{i.e.}, one that minimizes the variation---such sequence.\looseness-1

From a technical standpoint, our analysis revolves around a connection between the convergence of $\ogd$ in time-varying games and \emph{dynamic regret}. In particular, the first key observation is that \emph{dynamic regret cannot be too negative under any sequence of approximate Nash equilibria} (\Cref{property:gennonnegative}); this was first observed by~\citet{Zhang22:No} under a sequence of \emph{exact} Nash equilibria. We also generalize this property by connecting it to the admission of a \emph{minimax theorem} (\Cref{property:minimax}), as well as the so-called \emph{MVI property} in the more general context of time-varying variational inequalities (VIs); this will enable us to extend our scope to certain multi-player games such as \emph{polymatrix zero-sum}~\citep{Cai16:Zero} (\Cref{theorem:polymatrix}). By combining \Cref{property:gennonnegative} with a dynamic \emph{RVU bound}~\citep{Zhang22:No}, we obtain a variation-dependent bound for the second-order path length of $\ogd$ under a constant learning rate in time-varying zero-sum games. In turn, this leads to our first main result, \Cref{theorem:nonasymptotic}, discussed above. In the special case of static games, our theorem reduces to a tight $T^{-1/2}$ rate. We also instantiate our results to the special setting of \emph{meta-learning}, where each game is repeated multiple times.\looseness-1

\paragraph{Strongly convex-concave games} Moreover, for strongly convex-concave time-varying games, we obtain a refined \emph{second-order} variation bound on the sequence of Nash equilibria, as long as each game is repeated multiple times (\Cref{theorem:strong}); this is inspired by an improved second-order bound for dynamic regret under analogous conditions due to~\citet{Zhang17:Improved}. As a byproduct of our techniques, we point out that \emph{any} no-regret learners are approaching a Nash equilibrium under strong convexity-concavity (\Cref{prop:allnoregret}; \emph{cf}. \citep{Wang22:no}). Those results apply even in non-strongly convex-concave settings by suitably trading off the magnitude of a regularizer that makes the game strongly convex-concave. This offers significant gains in the meta-learning setting as well.\looseness-1

\paragraph{General-sum multi-player games} Next, we extend our results to time-varying general-sum multi-player games by expressing \emph{correlated equilibria} via a bilinear saddle-point problem (BSPP); while this formulation goes back to the early work of~\citet{Hart89:Existence}, it is rarely employed in the literature on learning in games. By leveraging a structural property of the underlying BSPP, we manage to obtain convergence bounds parameterized on the variation of the correlated equilibria (\Cref{theorem:general}). To illustrate the power of our framework, we immediately recover natural and algorithm-independent similarity measures for the meta-learning setting (\Cref{prop:metalearning}) even in general games (\Cref{cor:metalearning-gen}), thereby addressing an open question of~\citet{Harris22:Meta}. Our techniques also imply new per-player regret bounds in zero-sum and general-sum games (\Cref{cor:indreg,cor:regret-general}), the latter addressing a question left open by~\citet{Zhang22:No}, albeit under a different learning paradigm (\Cref{sec:mediator} contains a discussion on this point). We further parameterize the convergence of (vanilla) gradient descent in time-varying \emph{potential} games in terms of the deviation of the potential functions (\Cref{theorem:pot}).\looseness-1

Finally, building on our techniques in time-varying games, we investigate the best dynamic-regret guarantees possible in \emph{static} games (see also the work of~\citet{Cai23:Doubly}). We first note that instances of optimistic mirror descent guarantee $O(\sqrt{T})$ dynamic per-player regret (\Cref{prop:dyn-OMD}), matching the known rate of (online) gradient descent but for the significantly weaker notion of \emph{external} regret. We further point out that $O(\log T)$ dynamic regret is attainable, but in a stronger two-point feedback model. In stark contrast, even obtaining sublinear dynamic regret for each player is precluded in general-sum games (\Cref{prop:hard}). This motivates studying a relaxation of dynamic regret that constrains the number of switches in the comparator, for which we derive accelerates rates in general games (\Cref{theorem:Kdreg}) by leveraging the techniques of~\citet{Syrgkanis15:Fast} in conjunction with the dynamic RVU bound. Interesting, this relaxation of dynamic regret gives rise to a natural refinement of coarse correlated equilibria, recently investigated by~\citet{Crippa23:Equilibria}.\looseness-1

\subsection{Further related work}
\label{sec:related}

Even in static (two-player) zero-sum games, the pointwise convergence of no-regret learning algorithms is a tenuous affair. Indeed, traditional learning dynamics within the no-regret framework, such as (online) mirror descent, may even diverge away from the equilibrium; \emph{e.g.}, see~\citep{Sato02:Chaos,Mertikopoulos18:Cycles,Vlatakis-Gkaragkounis20:No,Giannou21:Survival}. Notwithstanding the lack of pointwise convergence, the \emph{empirical frequency} of no-regret learners is well-known to approach the set of Nash equilibria in zero-sum games~\citep{Freund99:Adaptive}, and the set of coarse correlated equilibria in general-sum games~\citep{Hart00:Simple,Foster97:Calibrated}---a standard relaxation of the Nash equilibrium~\citep{Moulin78:Strategically,Aumann74:Subjectivity}. Unfortunately, those classical results are of little use beyond static games, thereby offering a crucial impetus for investigating iterate-convergence in games with a time-varying component---a ubiquitous theme in many practical scenarios of interest~\citep{Deng21:Non,Papadimitriou22:On,Venel21:Regularity,Garrett17:Dynamic,Van10:Survey,Raja22:Payoff,Poveda22:Fixed,Ye15:Distributed,Rahme21:Auction,Baudin23:Strategic}. One concrete and quite broad example worth mentioning here revolves around the \emph{steering} problem~\citep{Zhang23:Steering}, where a mediator dynamically modifies the utilities of the game via nonnegative and vanishing payments so as to guide no-regret learners to desirable equilibria. In particular, when the payment function varies over time, the steering problem can be naturally phrased as a time-varying zero-sum game~\citep{Zhang23:Steering}.\looseness-1

In this context, there has been a considerable effort endeavoring to extend the scope of traditional game-theoretic results to the time-varying setting, approached from a variety of different standpoints in prior work~\citep{Lykouris16:Learning,Zhang22:No,Cardoso19:Competing,Menache11:Network,Mertikopoulos21:Equilibrium,Duvocelle18:Multi,Fiez21:Online,Taha23:Gradient,Feng23:Last,Yan23:Fast}. In particular, our techniques in~\Cref{sec:bspp} are strongly related to the ones developed by~\citet{Zhang22:No}, but our primary focus is different: \citet{Zhang22:No} were mainly interested in obtaining variation-dependent regret bounds, while our results revolve around iterate-convergence of $\ogd$ (under a constant learning rate) to Nash equilibria. Minimizing regret and approaching Nash equilibria are two inherently distinct problems, although connections have emerged~\citep{Anagnostides22:Last,Zhang22:No}, and are further cultivated in this paper. Moreover, in an independent work \citet{Feng23:Last} established a surprising separation between the behavior of optimistic GDA and the extra-gradient method in time-varying unconstrained bilinear games; it is open whether such a discrepancy persists in the constrained setting as well. We also point out the concurrent work of~\citet{Yan23:Fast} that focuses on improved guarantees for strongly monotone games; the results we obtain in \Cref{sec:strong} are complementary to theirs.\looseness-1

Another closely related direction is on \emph{meta-learning} in games~\citep{Harris22:Meta,Sychrovsky23:Learning,Duan23:Is}, where each game can be repeated for multiple iterations. Such considerations are motivated in part by a number of use-cases in which many ``similar'' games---or multiple game variations---ought to be solved~\citep{Brown16:Strategy}, such as Poker with different stack-sizes. While the meta-learning problem is a special case of our general setting, our results are strong enough to have new implications for meta-learning in games, even though the algorithms considered herein are not tailored to operate in that setting.\looseness-1

\section{Preliminaries}
\label{section:prel}

\paragraph{Notation} We let $\N \defeq \{1, 2, \dots, \}$ be the set of natural numbers. For a number $p \in \N$, we let $\range{p} \defeq \{1, \dots, p\}$. For a vector $\vec{w} \in \R^d$, we use $\|\vec{w}\|_2$ to represent its Euclidean norm; we also overload that notation so that $\|\cdot\|_2$ denotes the spectral norm when the argument is a matrix.\looseness-1

For a two-player zero-sum game, we denote by $\cX \subseteq \R^{d_x}$ and $\cY \subseteq \R^{d_y}$ the strategy sets of the two players---namely, Player $x$ and Player $y$, respectively---where $d_x, d_y \in \N$ represent the corresponding dimensions. It is assumed that $\cX$ and $\cY$ are nonempty convex and compact sets. For example, in the special case where $\cX \defeq \Delta^{d_x}$ and $\cY \defeq \Delta^{d_y}$---each set corresponds to a probability simplex---the game is said to be in \emph{normal form}. Further, we denote by $\diam_{\cX}$ the $\ell_2$-diameter of $\cX$, and by $\norm{\cX}_2$ the maximum $\ell_2$-norm attained by a point in $\cX$. We will always assume that the strategy sets remain invariant, while the payoff matrix can change in each round. For notational convenience, we will denote by $\vz \defeq (\vx, \vy)$ the concatenation of $\vx$ and $\vy$, and by $\cZ \defeq \cX \times \cY$ the Cartesian product of $\cX$ and $\cY$. In general $n-$player games, we instead use subscripts indexed by $i \in \range{n}$ to specify quantities related to a player. Superscripts are typically reserved to identify the time index. Finally, to simplify the exposition, we often use the $O(\cdot)$ notation to suppress time-independent parameters of the problem.\looseness-1

\paragraph{Dynamic regret} We operate in the usual online learning setting under full feedback. Namely, at every time $t \in \N$ the learner decides on a strategy $\vx^{(t)} \in \cX$, and then subsequently observes a utility function $\vx \mapsto \langle \vx, \vu_x^{(t)} \rangle$, for $\vu_x^{(t)} \in \R^{d_x}$. 
A strong performance benchmark in this online setting is \emph{dynamic regret}, defined for a time horizon $T \in \N$ as follows:
\begin{equation}
    \label{eq:dreg}
    \dreg_x^{(T)}(\seq_x^{(T)}) \defeq \sum_{t=1}^T \langle \vx^{(t, \star)} - \vx^{(t)}, \vu_x^{(t)} \rangle,
\end{equation}
where $\seq_x^{(T)} \defeq (\vx^{(1,\star)}, \dots, \vx^{(T,\star)} ) \in \cX^T$ above is the sequence of \emph{comparators}; by setting $\vx^{(1,\star)} = \vx^{(2, \star)} = \dots = \vx^{(T,\star)}$ in~\eqref{eq:dreg} we recover the standard notion of \emph{(external) regret} (denoted simply by $\reg_x^{(T)}$), which is commonly used to establish convergence of the time-average strategies in static two-player zero-sum games~\citep{Freund99:Adaptive}. On the other hand, the more general notion of dynamic regret, introduced in~\eqref{eq:dreg}, has been extensively used in more dynamic environments; \emph{e.g.}, \citep{Zhao20:Dynamic,Zhang17:Improved,Jadbabaie15:Online,Cesa-Bianchi12:Mirror,Hazan09:Efficient,Kalhan21:Dynamic,MokhtariSJR16}. We also define $\dreg_x^{(T)} \defeq \max_{\seq_x^{(T)} \in \cX^T} \dreg_x^{(T)}(\seq_x^{(T)})$. While ensuring $o(T)$ dynamic regret is clearly hopeless in a truly adversarial environment, \Cref{sec:static} reveals that non-trivial guarantees are possible when learning in zero-sum games (see also \citep{Cai23:Doubly}).\looseness-1

\paragraph{Optimistic gradient descent} \emph{Optimistic gradient descent (\ogd)}~\citep{Chiang12:Online,Rakhlin13:Optimization} is a no-regret algorithm defined with the following update rule:
\begin{equation}
    \label{eq:OGD}
    \tag{\texttt{OGD}}
\begin{split}
        \vx^{(t)} &\defeq \proj_{\cX} \left( \hvx^{(t)} + \eta \vm_x^{(t)} \right), \\
        \hvx^{(t+1)} &\defeq \proj_{\cX} \left( \hvx^{(t)} + \eta \vu_x^{(t)} \right).
\end{split}
\end{equation}
Here, $\eta > 0$ is the \emph{learning rate}; $\hvx^{(1)} \defeq \argmin_{\hvx \in \cX} \|\hvx\|_2^2 $ represents the initialization of $\ogd$; $\vm_x^{(t)} \in \R^{d_x}$ is the \emph{prediction vector} at time $t$, and it is set as $\vm_x^{(t)} \defeq \vu_x^{(t-1)}$ when $t \geq 2$, and $\vm_x^{(1)} \defeq \Vec{0}_{d_x}$; and finally, $\proj_{\cX}(\cdot)$ represents the Euclidean projection to the set $\cX$, which is well-defined, and can be further computed efficiently for structured sets, such as the probability simplex. For our purposes, we will posit access to a projection oracle for the set $\cX$, in which case the update rule~\eqref{eq:OGD} is indeed efficiently implementable.\looseness-1

In a multi-player $n$-player game, each player $i \in \range{n}$ is associated with a utility function $u_i : \bigtimes_{i=1}^n \cX_i \to \R$. We recall the following central definition~\citep{Nash50:Equilibrium}.

\begin{definition}[Approximate Nash equilibrium]
    \label{def:NE}
    A joint strategy profile $(\vx^\star_1, \dots, \vx^\star_n) \in \bigtimes_{i=1}^n \cX_i$ is an $\epsilon$-approximate Nash equilibrium (NE), for an $\epsilon \geq 0$, if for any player $i \in \range{n}$ and any possible deviation $\vx_i' \in \cX_i$ it holds that $u_i(\vxstar_1, \dots, \vxstar_i, \dots, \vxstar_n) \geq u_i(\vxstar_1, \dots, \vx_i', \dots, \vxstar_n) - \epsilon$.
\end{definition}

\section{Convergence in time-varying games}
\label{sec:main}

In this section, we formalize our results regarding convergence in time-varying games. We organize this section as follows: First, in~\Cref{sec:bspp}, we study the convergence of $\ogd$ in time-varying bilinear saddle-point problems (BSPPs) and beyond, culminating in the non-asymptotic characterization of~\Cref{theorem:nonasymptotic}; \Cref{sec:strong} formalizes our improvements under strong convexity-concavity; we then extend our scope to time-varying multi-player general-sum and potential games in~\Cref{sec:mediator}; and finally, \Cref{sec:static} is concerned with dynamic regret guarantees in static games.\looseness-1

\subsection{Bilinear saddle-point problems}
\label{sec:bspp}

We first study an online learning setting wherein two players interact in a sequence of time-varying BSPPs~\citep{Zhang22:No}. We assume that in every repetition $t \in \range{T}$ the players select a pair of strategies $ (\vx^{(t)}, \vy^{(t)}) \in \cX \times \cY$. Then, Player $x$ receives the utility $\vu^{(t)}_x \defeq - \mat{A}^{(t)} \vy^{(t)} \in \R^{d_x}$, where $\mat{A}^{(t)} \in \R^{d_x \times d_y}$ represents the payoff matrix at the $t$-th repetition; similarly, Player $y$ receives the utility $\vu^{(t)}_y \defeq (\mat{A}^{(t)})^\top \vx^{(t)} \in \R^{d_y}$. We also extend our scope to a more general class of time-varying problems, as we formalize in \Cref{sec:MVI}. The proofs of this subsection are in~\Cref{appendix:proofs-1}.\looseness-1

We commence by pointing out a crucial property: by selecting a sequence of approximate Nash equilibria as the comparators, the \emph{sum} of the players' dynamic regrets \emph{cannot be too negative}:

\begin{restatable}{property}{nonnegative}
    \label{property:gennonnegative}
    Suppose that $\cZ \ni \vz^{(t,\star)} = (\vx^{(t,\star)}, \vy^{(t,\star)})$ is an $\epsilon^{(t)}$-approximate Nash equilibrium of the $t$-th game. Then, for $\seq_x^{(T)} = (\vx^{(t,\star)})_{1 \leq t \leq T}$ and $\seq_y^{(T)} = (\vy^{(t,\star)})_{1 \leq t \leq T}$,\looseness-1
    \begin{equation*}
        \dreg_x^{(T)}(\seq_x^{(T)}) + \dreg_y^{(T)}(\seq_y^{(T)}) \geq - 2 \sum_{t=1}^T \epsilon^{(t)}.
    \end{equation*}
\end{restatable}

A special case of this property was previously noted by~\citet{Zhang22:No} by considering a sequence of \emph{exact} Nash equilibria; the approximate version stated above leads to a significant improvement in the convergence bounds, as we shall see in the sequel. In fact, as we note in~\Cref{property:minimax}, \Cref{property:gennonnegative} applies even in certain (time-varying) nonconvex-nonconcave min-max optimization problems, and it is a consequence of the \emph{minimax theorem}. \Cref{property:gennonnegative} also holds for time-varying variational inequalities (VIs) that satisfy the so-called \emph{MVI property}, as we elaborate further in \Cref{sec:MVI}.\looseness-1


Now, building on the work of~\citet{Zhang22:No}, let us introduce some natural measures of the games' variation. The first-order variation of the Nash equilibria was defined by~\citet{Zhang22:No} for $T \geq 2$ as $\varne^{(T)} \defeq \inf_{\vz^{(t,\star)} \in \cZ^{(t,\star)}, \forall t \in \range{T}} \sum_{t=1}^{T-1} \|\vz^{(t+1,\star)} - \vz^{(t,\star)}\|_2$, where $\cZ^{(t,\star)}$ is the (nonempty) set of Nash equilibria of the $t$-th game. We recall that there can be a multiplicity of Nash equilibria~\citep{VanDamme91:Stability}; as such, a compelling feature of the above variation measure is that it depends on the most favorable sequence of Nash equilibria---one that minimizes the first-order variation.

It is important, however, to come to terms with the well-known fact that Nash equilibria can change abruptly even under a ``small'' perturbation in the payoff matrix (see~\Cref{example:ill-conditioning}), which is an important caveat of the variation $\varne^{(T)}$. To address this, and in accordance with our more general~\Cref{property:gennonnegative}, we consider a more favorable variation measure, defined as 
\begin{equation}
    \label{eq:epsvarne}
    \epsvarne^{(T)} \defeq \inf \left\{ \sum_{t=1}^{T-1} \|\vz^{(t+1,\star)} - \vz^{(t,\star)}\|_2 + C \sum_{t=1}^T \epsilon^{(t)} \right\},
\end{equation}
for a sufficiently large parameter $C > 0$ (see~\eqref{eq:C}); the infimum above is subject to $\epsilon^{(t)} \in \R_{\geq 0}$ and $\vz^{(t,\star)} \in \cZ_{\epsilon^{(t)}}^{(t,\star)}$ for all $t \in \range{T}$, where $\cZ_{\epsilon^{(t)}}^{(t,\star)}$ is the set of $\epsilon^{(t)}$-approximate NE. It is evident that $\epsvarne^{(T)} \leq \varne^{(T)}$ since one can take $\epsilon^{(1)} = \dots = \epsilon^{(T)} = 0$; in fact, $\epsvarne^{(T)}$ can be arbitrarily smaller:\looseness-1
\begin{restatable}{proposition}{approxdif}
    \label{prop:approx-exact}
    For any $T \geq 4$, there is a sequence of $T$ games such that $\varne^{(T)} \geq \frac{T}{2}$ while $\epsvarne^{(T)} \leq \delta$, for any $\delta > 0$.
\end{restatable}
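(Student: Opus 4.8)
\emph{Approach.} The plan is to realize, in the cleanest possible form, the ill-conditioning phenomenon highlighted just before the statement: I will exhibit a sequence of $2\times 2$ zero-sum games whose \emph{unique} exact Nash equilibria jump between two antipodal vertices of the simplex every round---so that $\varne^{(T)}$ grows linearly---while all the games simultaneously admit the \emph{same} arbitrarily good approximate equilibrium at the simplex center. The two demands are decoupled by a scaling parameter $\beta>0$: strict dominance pins the exact equilibria at a geometric separation that is independent of $\beta$, whereas the approximation error incurred at the center is proportional to $\beta$ and can be pushed below any target.

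\emph{Construction and the lower bound on $\varne$.} Let $\cX=\cY=\Delta^2$, write $\vec e_1,\vec e_2$ for the two vertices and $\vec c\defeq\tfrac12(\vec e_1+\vec e_2)$ for the center, and fix $\beta\in(0,1]$ to be chosen last. For the skew-symmetric matrix
\[
J\defeq\begin{pmatrix} 0 & 1\\ -1 & 0\end{pmatrix},
\]
set $\mat A^{(t)}\defeq(-1)^{t+1}\beta J$, so that odd rounds use $+\beta J$ and even rounds use $-\beta J$. I will first check that in an odd round Player $x$ (the minimizer of $\vx^\top\mat A^{(t)}\vy$) has $\vec e_2$ as a \emph{strictly} dominant strategy and Player $y$ (the maximizer) likewise has $\vec e_2$ strictly dominant, so the game has the \emph{unique} Nash equilibrium $(\vec e_2,\vec e_2)$; negating the matrix flips both dominances, so every even round has the unique equilibrium $(\vec e_1,\vec e_1)$. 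Uniqueness (via strict dominance) is exactly what prevents equilibrium selection from shrinking the variation: the infimum defining $\varne^{(T)}$ is forced onto this single alternating sequence, and since $\|(\vec e_1,\vec e_1)-(\vec e_2,\vec e_2)\|_2=2$ for the concatenated profiles, I obtain $\varne^{(T)}=2(T-1)\ge T/2$ for all $T\ge 4$.

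\emph{Upper bound on $\epsvarne$ and choice of $\beta$.} Next I will verify that the constant profile $\vz^\circ\defeq(\vec c,\vec c)$ is a $(\beta/2)$-approximate Nash equilibrium of \emph{every} game. Indeed $\vec c^\top\mat A^{(t)}\vec c=0$ since $J$ is skew-symmetric, while $\mat A^{(t)}\vec c=(-1)^{t+1}\tfrac{\beta}{2}(1,-1)^\top$, so the minimizer's best deviation has value $-\beta/2$ and (by the same token, using $(\mat A^{(t)})^\top\vec c=-\mat A^{(t)}\vec c$) the maximizer's best deviation has value $+\beta/2$; each player therefore gains exactly $\beta/2$, matching \Cref{def:NE} with $\epsilon^{(t)}=\beta/2$. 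Plugging the \emph{constant} comparator sequence $\vz^{(t,\star)}=\vz^\circ$ together with $\epsilon^{(t)}=\beta/2$ into \eqref{eq:epsvarne} makes the displacement term vanish, leaving
\[
\epsvarne^{(T)}\ \le\ \sum_{t=1}^{T-1}\|\vz^\circ-\vz^\circ\|_2+C\sum_{t=1}^{T}\frac{\beta}{2}\ =\ \frac{C\,T\,\beta}{2}.
\]
Since $C$ is a fixed constant and $T,\delta$ are given, choosing $\beta\defeq\min\{1,\,2\delta/(CT)\}$ yields $\epsvarne^{(T)}\le\delta$, as required.

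\emph{Main obstacle.} There is no heavy computation here; the single point that must be handled with care is arranging \emph{both} properties at once---a \emph{unique} exact equilibrium (so the bound on $\varne^{(T)}$ cannot be undercut by choosing a more stable equilibrium) and a uniformly good \emph{common} approximate equilibrium. The resolution is precisely the scale invariance noted above: the vertex separation is a fixed geometric quantity, whereas the approximation gap is an affine function of $\beta$, so the single parameter $\beta$ cleanly separates the two scales.
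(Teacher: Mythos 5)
Your proof is correct and follows essentially the same strategy as the paper's: alternate between two games of small payoff magnitude whose unique exact Nash equilibria are far apart, exhibit a single common profile (the simplex center) that is an $O(\beta)$-approximate equilibrium of every game in the sequence, and then choose the scale $\beta \propto \delta/(CT)$. The only difference is in the concrete instantiation---the paper alternates the diagonal games $\mathrm{diag}(2\delta,\delta)$ and $\mathrm{diag}(\delta,2\delta)$, whose unique equilibria are the mixed profiles $(\nicefrac{1}{3},\nicefrac{2}{3})$ and $(\nicefrac{2}{3},\nicefrac{1}{3})$, whereas you alternate the skew-symmetric matrices $\pm\beta J$, whose equilibria are pure and pinned down by strict dominance, which if anything makes the uniqueness and separation arguments slightly more transparent.
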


This shows that $\epsvarne^{(T)}$ is a more compelling variation measure compared to $\varne^{(T)}$. It is worth noting here that~\citet{Zhang22:No} also considered the variation measure $\varvar^{(T)} \defeq \sum_{t=1}^T \| \mat{A}^{(t)} - \bar{\mat{A}} \|_2$, where $\bar{\mat{A}}$ is the average payoff matrix and $\|\cdot\|_2$ denotes the spectral norm.\footnote{Any equivalent norm in the definition of $\varvar^{(T)}$ suffices for the purpose of \Cref{prop:var}.} It is in fact not hard to construct instances such that $\epsvarne^{(T)} \ll \min \{ \varne^{(T)}, \varvar^{(T)}\}$ (\Cref{prop:var}).

Moreover, we also recall another quantity that captures the variation of the payoff matrices: $\varmat^{(T)} \defeq \sum_{t=1}^{T-1} \|\mat{A}^{(t+1)} - \mat{A}^{(t)}\|_2^2$. Unlike $\varne^{(T)}$, the variation measure $\varmat^{(T)}$ depends on the \emph{second-order} variation (of the payoff matrices), which could translate to a lower-order impact compared to $\varne^{(T)}$ (see, \emph{e.g.},~\Cref{cor:pathlength-detailed}). We stress that while our convergence bounds will be parameterized based on $\epsvarne^{(T)}$ and $\varmat^{(T)}$, the underlying algorithm---namely $\ogd$---will remain oblivious to those variation measures. We only make the mild assumption that players know in advance the value of $L \defeq \max_{1 \leq t \leq T} \|\mat{A}^{(t)}\|_2$, so that they can tune the learning rate $\eta$ appropriately.

We are now ready to state the main result of this subsection. Below, we use the notation $\eg^{(t)}(\vz^{(t)})$ to represent the Nash equilibrium gap of $(\vx^{(t)},\vy^{(t)}) = \vz^{(t)} \in \cZ$ at the $t$-th game. More precisely, $\eg^{(t)}(\vz^{(t)}) \defeq \max\{ \brgp_x^{(t)}(\vx^{(t)}), \brgp_y^{(t)}(\vy^{(t)}) \}$, where $\brgp_x^{(t)}(\vx^{(t)}) \defeq \max_{\vx \in \cX} \{ \langle \vx, \vu_x^{(t)} \rangle \} - \langle \vx^{(t)}, \vu_x^{(t)} \rangle$ is the best response gap of Player $x$ (and analogously for Player $y$).\looseness-1

\begin{theorem}[Detailed version in~\Cref{theorem:nonasymptotic-detailed}]
    \label{theorem:nonasymptotic}
    Suppose that both players employ $\ogd$ with learning rate $\eta = \frac{1}{4L}$ in a sequence of time-varying BSPPs, where $L \defeq \max_{1 \leq t \leq T} \|\mat{A}^{(t)}\|_2$. Then, $\sum_{t=1}^T \left( \eg^{(t)}(\vz^{(t)}) \right)^2 = O\left( 1 + \epsvarne^{(T)} + \varmat^{(T)} \right)$, where $(\vz^{(t)})_{1 \leq t \leq T}$ is the sequence of joint strategy profiles produced by $\ogd$.
\end{theorem}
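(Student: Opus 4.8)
The plan is to proceed in two stages. First I would establish a variation-dependent bound on the \emph{second-order path length} of $\ogd$; then I would bound each per-iterate equilibrium gap $\eg^{(t)}(\vz^{(t)})$ locally by the iterate movement, so that summing the squared gaps reduces to the path-length bound. For the first stage, I would invoke the \emph{dynamic RVU bound} for $\ogd$: starting from the standard optimistic-mirror-descent one-step inequality together with the projection characterization of the two $\ogd$ updates, one obtains for each player an estimate of the form
\begin{equation*}
    \dreg_x^{(T)}(\seq_x^{(T)}) \leq \frac{\diam_{\cX}^2 + 2\diam_{\cX}\, P_x}{2\eta} + \eta \sum_{t=1}^T \norm{\vu_x^{(t)} - \vu_x^{(t-1)}}_2^2 - \frac{1}{4\eta}\sum_{t=1}^T \left( \norm{\vx^{(t)} - \hvx^{(t)}}_2^2 + \norm{\hvx^{(t+1)} - \vx^{(t)}}_2^2 \right),
\end{equation*}
where $P_x \defeq \sum_{t=2}^T \norm{\vx^{(t,\star)} - \vx^{(t-1,\star)}}_2$ is the path length of the comparators; the term $\diam_{\cX}\,P_x$ is exactly what the \emph{dynamic} (as opposed to static) telescoping of the quadratic terms contributes. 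I would then sum this bound over the two players, take the comparators to be an arbitrary sequence of $\epsilon^{(t)}$-approximate Nash equilibria, and invoke \Cref{property:gennonnegative} to lower bound the left-hand side by $-2\sum_t \epsilon^{(t)}$.

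The crux of the first stage is handling the utility-variation term through the bilinear structure: writing $\vu_x^{(t)} - \vu_x^{(t-1)} = -\mat{A}^{(t)}(\vy^{(t)} - \vy^{(t-1)}) - (\mat{A}^{(t)} - \mat{A}^{(t-1)})\vy^{(t-1)}$, I would split it into a \emph{strategy-movement} part controlled by $L^2\norm{\vy^{(t)} - \vy^{(t-1)}}_2^2$ and a \emph{matrix-variation} part controlled by $\norm{\cY}_2^2\,\norm{\mat{A}^{(t)} - \mat{A}^{(t-1)}}_2^2$, the latter summing to $O(\varmat^{(T)})$. Bounding $\norm{\vy^{(t)} - \vy^{(t-1)}}_2^2 \leq 2\norm{\vy^{(t)} - \hvy^{(t)}}_2^2 + 2\norm{\hvy^{(t)} - \vy^{(t-1)}}_2^2$ lets the strategy-movement part be absorbed into the negative RVU terms, provided $\eta$ is small enough that $4\eta L^2$ is dominated by $\tfrac{1}{4\eta}$ — which is precisely what the choice $\eta = \tfrac{1}{4L}$ guarantees. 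Rearranging leaves net-negative movement terms, yielding
\begin{equation*}
    \sum_{t=1}^T \left( \norm{\vz^{(t)} - \hvz^{(t)}}_2^2 + \norm{\hvz^{(t+1)} - \vz^{(t)}}_2^2 \right) = O\!\left( 1 + P + \varmat^{(T)} + \sum_{t=1}^T \epsilon^{(t)} \right),
\end{equation*}
with $P = P_x + P_y$. Taking the infimum over admissible sequences of approximate equilibria and absorbing the weight $C$ from the definition \eqref{eq:epsvarne}, the right-hand side collapses to $O(1 + \epsvarne^{(T)} + \varmat^{(T)})$.

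For the second stage I would bound the equilibrium gap pointwise. For any $\vx \in \cX$, the optimality condition of the projection $\vx^{(t)} = \proj_{\cX}(\hvx^{(t)} + \eta\vm_x^{(t)})$ gives $\langle \vx - \vx^{(t)}, \vm_x^{(t)}\rangle \leq \tfrac{1}{\eta}\langle \vx - \vx^{(t)}, \vx^{(t)} - \hvx^{(t)}\rangle$; splitting $\vu_x^{(t)} = \vm_x^{(t)} + (\vu_x^{(t)} - \vm_x^{(t)})$ and applying Cauchy--Schwarz yields
\begin{equation*}
    \brgp_x^{(t)}(\vx^{(t)}) \leq \frac{\diam_{\cX}}{\eta}\norm{\vx^{(t)} - \hvx^{(t)}}_2 + \diam_{\cX}\norm{\vu_x^{(t)} - \vu_x^{(t-1)}}_2,
\end{equation*}
and symmetrically for Player $y$. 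Squaring, using $\eg^{(t)}(\vz^{(t)})^2 \leq \brgp_x^{(t)}(\vx^{(t)})^2 + \brgp_y^{(t)}(\vy^{(t)})^2$, and re-applying the same bilinear split of the utility variation, every resulting term is one of the quantities controlled in the first stage (up to an $O(\varmat^{(T)})$ contribution). Summing over $t$ then delivers the claimed $\sum_t \eg^{(t)}(\vz^{(t)})^2 = O(1 + \epsvarne^{(T)} + \varmat^{(T)})$.

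I expect the main obstacle to be the simultaneous bookkeeping in the first stage: the utility-variation term naturally scales like $L^2$ times the strategy movement while the negative RVU terms scale like $1/\eta$, so the path-length bound survives only if the learning rate is tuned to make these balance \emph{with room to spare} — and, crucially, the \emph{same} movement quantities $\norm{\vz^{(t)} - \hvz^{(t)}}_2$ must then be strong enough to upper bound the equilibrium gap in the second stage. Getting both the signs and the constants to line up (rather than merely obtaining a boundedness statement) is the delicate part; the approximate-equilibrium version of \Cref{property:gennonnegative} is exactly what makes the $\sum_t \epsilon^{(t)}$ slack affordable and ultimately lets $\epsvarne^{(T)}$ replace the potentially much larger $\varne^{(T)}$.
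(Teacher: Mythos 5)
Your proposal is correct and follows essentially the same two-stage route as the paper: your first stage is precisely the paper's second-order path-length bound (\Cref{theorem:pathlength}), built from the dynamic RVU bound, the bilinear split of the prediction error into strategy movement plus matrix variation, absorption of the movement terms under $\eta = \frac{1}{4L}$, and \Cref{property:gennonnegative} with the constant $C$ in \eqref{eq:epsvarne} soaking up the $\sum_{t} \epsilon^{(t)}$ slack. The only divergence is in the second stage, where the paper invokes \Cref{claim:BR-gap} to bound the best-response gap purely by the iterate movement $\|\vx^{(t)}-\hvx^{(t)}\|_2 + \|\vx^{(t)}-\hvx^{(t+1)}\|_2$, whereas you derive from the projection optimality condition a bound carrying an extra prediction-error term $\diam_{\cX}\|\vu_x^{(t)}-\vm_x^{(t)}\|_2$ and recycle it through the same bilinear split at the cost of another $O(\varmat^{(T)})$ contribution --- both reduce to the quantities controlled in stage one, so your variant is equally valid (and slightly more self-contained, since it avoids citing the prior-work claim).
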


The proof of this theorem is based on upper bounding the \emph{second-order path length} of $\ogd$, $\sum_{t=1}^T \left( \| \vz^{(t)} - \hvz^{(t)}\|_2^2 + \| \vz^{(t)} - \hvz^{(t+1)} \|_2^2 \right)$, as a function of the variation measures $\epsvarne^{(T)}$ and $\varmat^{(T)}$. This is shown via a \emph{dynamic RVU} bound~\citep{Zhang22:No} (\Cref{lemma:drvu,lemma:beyondEucl}) in conjunction with \Cref{property:gennonnegative}.

It is worth noting that when the deviation of the payoff matrices is controlled by the deviation of the players' strategies, in the sense that $\sum_{t=1}^{T-1} \|\mat{A}^{(t+1)} - \mat{A}^{(t)}\|^2_2 \leq W^2 \sum_{t=1}^{T-1} \|\vz^{(t+1)} - \vz^{(t)}\|^2_2$ for some parameter $W \in \R_{> 0}$, the variation measure $\varmat^{(T)}$ in \Cref{theorem:nonasymptotic} can be entirely eliminated; see~\Cref{cor:pathlength-detailed}. The same in fact applies under an improved prediction mechanism (see \Cref{remark:prediction}); while that prediction is not implementable in the online learning setting, it can be used, for example, when the sequence of games is known in advance (as is the case in certain applications). 

We next state some immediate consequences of \Cref{theorem:nonasymptotic}. (Item~\ref{item:Jensen} below follows from~\Cref{theorem:nonasymptotic} by Jensen's inequality.)

\begin{corollary}
    \label{cor:implications}
    In the setting of~\Cref{theorem:nonasymptotic},
    \begin{enumerate}[leftmargin=5.5mm,noitemsep]
        \item If at least a $\delta$-fraction of the iterates have $\epsilon > 0$ NE gap, $\epsilon^2 \delta \leq O \left( \frac{1}{T} \left( 1 + \epsvarne^{(T)} + \varmat^{(T)}  \right) \right)$.
        \item \label{item:Jensen} The average NE gap is bounded as $O \left( \sqrt{\frac{1}{T} \left( 1 + \epsvarne^{(T)} + \varmat^{(T)}  \right)} \right)$.
    \end{enumerate}
\end{corollary}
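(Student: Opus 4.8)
The plan is to obtain both items as direct consequences of the second-order bound
\[
  \sum_{t=1}^T \left( \eg^{(t)}(\vz^{(t)}) \right)^2 = O\left( 1 + \epsvarne^{(T)} + \varmat^{(T)} \right)
\]
established in \Cref{theorem:nonasymptotic}; for brevity write $B \defeq O(1 + \epsvarne^{(T)} + \varmat^{(T)})$ for this upper bound on the sum of squared equilibrium gaps.

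For Item~1, I would restrict the sum to the ``bad'' rounds. If at least a $\delta$-fraction of the iterates have equilibrium gap at least $\epsilon$, then there is a set $S \subseteq \range{T}$ with $|S| \geq \delta T$ such that $\eg^{(t)}(\vz^{(t)}) \geq \epsilon$ for every $t \in S$. Since every summand is nonnegative, discarding the rounds outside $S$ only decreases the sum, giving
\[
  \delta T \cdot \epsilon^2 \;\leq\; \sum_{t \in S} \left( \eg^{(t)}(\vz^{(t)}) \right)^2 \;\leq\; \sum_{t=1}^T \left( \eg^{(t)}(\vz^{(t)}) \right)^2 \;\leq\; B,
\]
and dividing through by $T$ yields $\epsilon^2 \delta \leq O\left( \frac{1}{T}(1 + \epsvarne^{(T)} + \varmat^{(T)}) \right)$, as claimed.

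For Item~2, I would apply Jensen's inequality to the convex map $x \mapsto x^2$ (equivalently, Cauchy--Schwarz against the all-ones vector) to pass from the arithmetic mean of the gaps to their root-mean-square:
\[
  \frac{1}{T} \sum_{t=1}^T \eg^{(t)}(\vz^{(t)}) \;\leq\; \sqrt{ \frac{1}{T} \sum_{t=1}^T \left( \eg^{(t)}(\vz^{(t)}) \right)^2 } \;\leq\; \sqrt{ \frac{B}{T} },
\]
where the final step substitutes the bound of \Cref{theorem:nonasymptotic}. This is exactly $O\left( \sqrt{ \frac{1}{T}(1 + \epsvarne^{(T)} + \varmat^{(T)}) } \right)$.

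Because both derivations are elementary rearrangements of the single inequality supplied by \Cref{theorem:nonasymptotic}, there is no genuine obstacle here; the one point requiring (mild) care is that each $\eg^{(t)}(\vz^{(t)})$ is nonnegative---needed both to discard rounds in Item~1 and to take square roots in Item~2. This is immediate from the definition $\eg^{(t)}(\vz^{(t)}) = \max\{ \brgp_x^{(t)}(\vx^{(t)}), \brgp_y^{(t)}(\vy^{(t)}) \}$, since $\vx^{(t)} \in \cX$ implies $\brgp_x^{(t)}(\vx^{(t)}) = \max_{\vx \in \cX} \langle \vx, \vu_x^{(t)} \rangle - \langle \vx^{(t)}, \vu_x^{(t)} \rangle \geq 0$, and analogously for Player~$y$.
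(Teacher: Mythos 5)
Your proposal is correct and matches the paper's own (essentially implicit) argument: the paper presents the corollary as an immediate consequence of \Cref{theorem:nonasymptotic}, with Item~1 following by exactly the Markov-style restriction to the bad rounds and Item~2 by Jensen's inequality, just as you do. Your added remark on the nonnegativity of $\eg^{(t)}(\vz^{(t)})$ is a harmless (and valid) point of care that the paper leaves unstated.
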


In particular, in terms of asymptotic implications, if $\lim_{T \to + \infty} \frac{\epsvarne^{(T)}}{T}, \lim_{T \to + \infty} \frac{\varmat^{(T)}}{T} = 0$, then (i) for any $\epsilon > 0$ the fraction of iterates of $\ogd$ with at least an $\epsilon$ Nash equilibrium gap converges to $0$; and (ii) the average Nash equilibrium gap of the iterates of $\ogd$ converges to $0$.

In the special case where $\epsvarne^{(T)}, \varmat^{(T)} = O(1)$, \Cref{theorem:nonasymptotic} recovers the $T^{-1/2}$ iterate-convergence rate of $\ogd$ in static bilinear saddle-point problems.

\paragraph{Meta-learning} Our results also have immediate applications in the \emph{meta-learning} setting. More precisely, meta-learning in games is a special case of time-varying games which consists of a sequence of $H \in \N$ separate games, each of which is repeated for $m \in \N$ consecutive rounds, so that $T \defeq m \times H$. The central goal in meta-learning is to obtain convergence bounds parameterized by the \emph{similarity} of the games; identifying suitable similarity metrics is a central question in that line of work.\looseness-1

In this context, we highlight that~\Cref{theorem:nonasymptotic} shown above readily provides a meta-learning guarantee parameterized by the following notion of similarity between the Nash equilibria of the games: $\inf_{\vz^{(h,\star)} \in \cZ^{(h,\star)}, \forall h \in \range{H}} \sum_{h=1}^{H-1} \|\vz^{(h+1,\star)} - \vz^{(h,\star)} \|_2$, where $\cZ^{(h,\star)}$ is the set of Nash equilibria of the $h$-th game in the meta-learning sequence,\footnote{In accordance with~\Cref{theorem:nonasymptotic}, this similarity metric can be refined using a sequence of approximate NE.} as well as the similarity of the payoff matrices---corresponding to the term $\varmat^{(T)}$. In fact, under a suitable prediction---the one used by~\citet{Harris22:Meta}---the dependence on~$\varmat^{(T)}$ can be entirely removed; see~\Cref{prop:metalearning} for our formal result. A compelling aspect of our meta-learning guarantee is that the considered algorithm is oblivious to the boundaries of the meta-learning. It is also worth noting that our similarity metric can be arbitrarily smaller than the one considered by~\citet{Harris22:Meta} (\Cref{prop:sim-meta}). We further provide some novel results on meta-learning in general-sum games in~\Cref{sec:mediator}.\looseness-1

\subsubsection{Beyond bilinear saddle-point problems}
\label{sec:MVI}

As we alluded to earlier, our approach can be generalized beyond time-varying bilinear saddle-point problems to a more general class of time-varying \emph{variational inequality (VIs)} as follows. Let $F^{(t)} : \cZ \to \cZ$ be the (single-valued) operator of the VI problem at time $t \in \N$. $F^{(t)}$ is said to satisfy the \emph{MVI property}~\citep{Mertikopoulos19:Optimistic,Facchinei03:Finite} if there exists a point $\vz^{(t, \star)} \in \cZ$ such that $\langle \vz - \vz^{(t, \star)}, F^{(t)}(\vz) \rangle \geq 0$ for any $\vz \in \cZ$. For example, in the special case of a bilinear saddle-point problem we have that $F : \vz \defeq (\vx, \vy) \mapsto (\mat{A} \vy, - \mat{A}^\top \vx)$, and the MVI property is satisfied by virtue of Von Neumann's minimax theorem. It is direct to see that \Cref{property:gennonnegative} applies to any time-varying sequence of VIs with respect to $(\vz^{(t,\star)})_{1 \leq t \leq T}$ as long as every operator in the sequence $(F^{(1)}, \dots, F^{(T)} )$ satisfies the MVI property. (Even more broadly, it would suffice if almost all operators in the sequence---in that their fraction converges to $1$ as $T \to + \infty$---satisfy the MVI property.) This observation enables extending~\Cref{theorem:nonasymptotic} to a more general class of problems. As a concrete example, we provide a generalization of \Cref{theorem:nonasymptotic} (\Cref{theorem:polymatrix}) to \emph{polymatrix zero-sum} games~\citep{Cai16:Zero} in \Cref{sec:polymatrix}. By contrast, in problems where the MVI property fails it appears that a much different approach is called for.\looseness-1

We finally point out that our framework could have certain implications for solving (static) general VIs, as we discuss in~\Cref{sec:nonconvex}.

\subsection{Strongly convex-concave games}
\label{sec:strong}

In this subsection, we show that under additional structure we can significantly improve the variation measures established in~\Cref{theorem:nonasymptotic}. More precisely, we first assume that each objective function $f(\vx, \vy)$ is $\mu$-strongly convex with respect to $\vx$ and $\mu$-strongly concave with respect to $\vy$. Our second assumption is that each game is played for \emph{multiple} rounds $m \in \N$, instead of only a single round; this is akin to the meta-learning setting. The key insight is that as long as $m$ is large enough, $m = \Omega(1/\mu)$, those two assumptions suffice to obtain a \emph{second-order} variation bound in terms of the sequence of Nash equilibria, $\svarne^{(H)} \defeq \sum_{h=1}^{H-1} \| \vz^{(h+1,\star)} - \vz^{(h,\star)} \|^2_2$, where $\vz^{(h,\star)}$ is a Nash equilibrium of the $h$-th game. This significantly refines the result of~\Cref{theorem:nonasymptotic}, and is inspired by the improved dynamic regret bounds obtained by~\citet{Zhang17:Improved}. Below we sketch the key ideas of the improvement; full proofs are deferred to~\Cref{appendix:proofs-3}.\looseness-1

In this setting, it is assumed that Player $x$ obtains the utility $\vu_x^{(t)} \defeq -\nabla_{\vx} f^{(t)}(\vx^{(t)}, \vy^{(t)})$ at every time $t \in \range{T}$, while its regret will be denoted by $\reg^{(T)}_{\cL, y}$; similar notation applies for Player $y$. The first observation is that, focusing on a single (static) game, under strong convexity-concavity the sum of the players' regrets are \emph{strongly nonnegative} (\Cref{lemma:stronpos}):
\begin{equation}
    \label{eq:strong-monot}
    \reg^{(m)}_{\cL, x}(\vx^{\star}) + \reg^{(m)}_{\cL, y}(\vy^{\star}) \geq \frac{\mu}{2} \sum_{t=1}^m \| \vz^{(t)} - \vz^{\star}\|_2^2,
\end{equation}
for any Nash equilibrium $\vz^{\star} \in \cZ$ of the game. In turn, this can be cast in terms of dynamic regret over the sequence of the $h$ games (\Cref{lemma:dyn-stronpos}). Next, combining those dynamic-regret lower bounds with a suitable RVU-type property leads to a refined second-order path length bound as long as $m = \Omega(1/\mu)$, which in turn leads to our main result below. Before we present its statement, let us introduce the following measure of variation of the gradients: $\vargrad^{(H)} \defeq \sum_{h=1}^{H-1} \max_{\vz \in \cZ} \| F^{(h+1)}(\vz) - F^{(h)}(\vz) \|_2^2$, where let $F: \vz \defeq (\vx, \vy) \mapsto (\nabla_{\vx} f(\vx, \vy), -\nabla_{\vy} f(\vx, \vy))$. This variation measure is analogous to $\varmat^{(T)}$ we introduced earlier for time-varying BSPPs.

\begin{theorem}[Detailed version in~\Cref{theorem:strong-detailed}]
    \label{theorem:strong}
    Let $f^{(h)} : \cX \times \cY$ be a $\mu$-strongly convex-concave and $L$-smooth function, for all $h \in \range{H}$. Suppose further that both players employ $\ogd$ with learning rate $\eta = \min \left\{ \frac{1}{8L}, \frac{1}{2\mu} \right\}$ for $T \in \N$ repetitions, where $T = m \times H$ and $m \geq \frac{2}{\eta \mu}$. Then, $\sum_{t=1}^T \left( \eg^{(t)}(\vz^{(t)}) \right)^2 = O(1 + \svarne^{(H)} + \vargrad^{(H)})$.
\end{theorem}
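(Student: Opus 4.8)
The plan is to reduce the target quantity to the \emph{tracking error} $\sum_{t=1}^T \|\vz^{(t)} - \vz^{(t,\star)}\|_2^2$, where $\vz^{(t,\star)}$ is the (by strong convexity-concavity, unique) Nash equilibrium of the game played at round $t$. Crucially, this comparator sequence is \emph{piecewise constant}, changing only at the $H-1$ boundaries between the meta-learning blocks, so that $\sum_{t}\|\vz^{(t,\star)}-\vz^{(t-1,\star)}\|_2^2 = \svarne^{(H)}$. The reduction itself is the smoothness argument already underlying \Cref{theorem:nonasymptotic}: since the operator $F^{(t)}\colon \vz\mapsto(\nabla_\vx f^{(t)}, -\nabla_\vy f^{(t)})$ is $L$-Lipschitz and $\vz^{(t,\star)}$ solves the associated VI, the two best-response gaps $\brgp^{(t)}_x,\brgp^{(t)}_y$ expand as $\langle F^{(t)}(\vz^{(t)})-F^{(t)}(\vz^{(t,\star)}), \cdot\rangle + \langle F^{(t)}(\vz^{(t,\star)}),\cdot\rangle$, and hence $\eg^{(t)}(\vz^{(t)}) = O(\|\vz^{(t)}-\vz^{(t,\star)}\|_2)$ with the constant depending only on $L$ and $\diam_{\cZ}$. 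Squaring and summing, it suffices to prove $\sum_{t=1}^T \|\vz^{(t)}-\vz^{(t,\star)}\|_2^2 = O(1+\svarne^{(H)}+\vargrad^{(H)})$.

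To control the tracking error I would sandwich the summed dynamic regret of the two players against the comparator sequence $(\vz^{(t,\star)})_t$. The lower bound is the strongly-positive regret estimate \eqref{eq:strong-monot}: applied on each static block and summed over the $H$ games (\Cref{lemma:dyn-stronpos}) it gives $\dreg_x^{(T)}(\seq_x^{(T)}) + \dreg_y^{(T)}(\seq_y^{(T)}) \geq \tfrac{\mu}{2}\sum_{t=1}^T \|\vz^{(t)}-\vz^{(t,\star)}\|_2^2$, i.e.\ the tracking error itself. The upper bound is the dynamic $\rvu$ bound (\Cref{lemma:drvu,lemma:beyondEucl}), producing three kinds of terms: a telescoped comparator term, a prediction-error term $\eta\sum_t\|\vu^{(t)}-\vu^{(t-1)}\|_2^2$, and a \emph{negative} second-order path-length term $-\tfrac{1}{c\eta}\sum_t(\|\vz^{(t)}-\hvz^{(t)}\|_2^2+\|\vz^{(t)}-\hvz^{(t+1)}\|_2^2)$. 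Because the comparator is constant within each block, consecutive divergence differences vanish and the comparator term collapses to the $H-1$ boundary contributions, which for the Euclidean regularizer expand as $\tfrac12\|\vz^{(h+1,\star)}-\hvz\|_2^2-\tfrac12\|\vz^{(h,\star)}-\hvz\|_2^2 = \tfrac12\|\vz^{(h+1,\star)}-\vz^{(h,\star)}\|_2^2 + \langle \vz^{(h+1,\star)}-\vz^{(h,\star)}, \vz^{(h,\star)}-\hvz\rangle$; the first piece is precisely the source of $\svarne^{(H)}$. For the prediction-error term, inside a block $\|\vu^{(t)}-\vu^{(t-1)}\|_2\le L\|\vz^{(t)}-\vz^{(t-1)}\|_2$ yields a second-order path length absorbed by the negative $\rvu$ term (this is where $\eta\le\tfrac{1}{8L}$ is used), while the $H-1$ boundary mismatches of the operators contribute exactly $\eta\,\vargrad^{(H)}$.

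The main obstacle is the residual cross term $\langle \vz^{(h+1,\star)}-\vz^{(h,\star)}, \vz^{(h,\star)}-\hvz\rangle$ at each boundary, where $\hvz$ is the iterate entering block $h+1$. A Young's inequality separates it into a harmless $O(\svarne^{(H)})$ piece and a piece proportional to $\|\vz^{(h,\star)}-\hvz\|_2^2$ --- the squared distance between the \emph{previous} equilibrium and the incoming iterate. Matched against the $\tfrac{\mu}{2}$-weighted tracking error this is borderline non-absorbable, and this is exactly where the hypothesis $m\ge\tfrac{2}{\eta\mu}$ enters: on a static $\mu$-strongly convex-concave, $L$-smooth block, $\ogd$ is a \emph{linearly contracting} map toward $\vz^{(h,\star)}$, so the incoming iterate satisfies $\|\vz^{(h,\star)}-\hvz\|_2^2 \le \gamma\cdot(\text{incoming error of block }h)$ with $\gamma=(1-\Theta(\eta\mu))^m$. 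Choosing $m\ge\tfrac{2}{\eta\mu}$ pushes $\gamma$ below a constant so that $2\gamma<1$, which closes a geometric recursion $\psi_h \le 2\gamma\,\psi_{h-1} + O(\|\vz^{(h,\star)}-\vz^{(h-1,\star)}\|_2^2)$ on the per-block residual errors $\psi_h$ and renders the accumulated cross terms summable to $O(1+\svarne^{(H)})$. Establishing this blockwise contraction for the optimistic (two-sequence) update --- via a potential combining $\|\vz^{(t)}-\vz^{(h,\star)}\|_2^2$ and $\|\hvz^{(t)}-\vz^{(h,\star)}\|_2^2$ --- and tracking the constants so that the contraction beats the cross-game accumulation is the technical heart of the argument. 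Rearranging the sandwich then yields $\sum_t\|\vz^{(t)}-\vz^{(t,\star)}\|_2^2 = O(1+\svarne^{(H)}+\vargrad^{(H)})$, and the first-paragraph reduction completes the proof.
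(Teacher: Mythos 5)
Your overall skeleton is the same as the paper's: the sandwich between the strong-nonnegativity lower bound (the paper's \Cref{lemma:stronpos,lemma:dyn-stronpos}) and the dynamic RVU upper bound (\Cref{lemma:drvu}) with a piecewise-constant comparator, the emergence of $\vargrad^{(H)}$ from the operator mismatch at block boundaries, and the absorption of the within-block prediction error by the negative path-length term under $\eta \le \frac{1}{8L}$. Your preliminary reduction of $\eg^{(t)}(\vz^{(t)})$ to the tracking error $\|\vz^{(t)} - \vz^{(t,\star)}\|_2$ is also fine (it additionally needs a uniform bound on $\|F^{(h)}(\cdot)\|_2$, which the $O(\cdot)$ notation tolerates; the paper instead reduces to the second-order path length via \Cref{claim:BR-gap}). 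The two arguments part ways only at the boundary residual $\|\hvz^{(h+1,1)} - \vz^{(h,\star)}\|_2^2$ generated by Young's inequality on the cross term---and that is exactly where your proposal has a genuine gap.

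You dispose of this residual by invoking blockwise \emph{linear contraction} of the two-sequence projected $\ogd$ update, $\|\hvz^{(h,m+1)} - \vz^{(h,\star)}\|_2^2 \le \gamma\,\|\hvz^{(h,1)} - \vz^{(h,\star)}\|_2^2$ with $\gamma = (1-\Theta(\eta\mu))^m$, but you never prove it: this is a theorem-level result in its own right (last-iterate linear convergence of constrained optimistic gradient methods), nothing like it appears in the paper, and you explicitly defer it as ``the technical heart.'' Moreover, your quantitative claim does not follow as stated: with a per-step contraction factor such as $(1-\eta\mu/4)$, the hypothesis $m \ge \frac{2}{\eta\mu}$ gives only $\gamma \approx e^{-1/2} > \frac{1}{2}$, so the recursion $\psi_h \le 2\gamma\,\psi_{h-1} + O(\cdot)$ does not close; this is repairable (re-balance Young's inequality so that only $\gamma \le c < 1$ is needed), but as written the step fails. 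The paper avoids contraction entirely and obtains the endpoint control from quantities already present in the sandwich: by the monotonicity of $t \mapsto \|\hvz^{(t)} - \vz^{(h,\star)}\|_2$ within a block \citep[Proposition C.10]{Harris22:Meta}---a much weaker fact than linear contraction---the tracking-error lower bound combines with the term $\frac{1}{4\eta}\sum_t \|\vz^{(t)} - \hvz^{(t+1)}\|_2^2$ to yield
\begin{equation*}
    \dreg_{\cL,x}^{(T)}(\seq_x^{(T)}) + \dreg_{\cL,y}^{(T)}(\seq_y^{(T)}) + \frac{1}{4\eta}\sum_{t=1}^T\|\vz^{(t)} - \hvz^{(t+1)}\|_2^2 \ \ge\ \frac{\mu m}{4}\sum_{h=1}^H \|\hvz^{(h,m+1)} - \vz^{(h,\star)}\|_2^2
\end{equation*}
(\Cref{lemma:monotone}). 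Then $m \ge \frac{2}{\eta\mu}$ makes the coefficient $\frac{\mu m}{4} \ge \frac{1}{2\eta}$, which is precisely the factor-$2$ advantage over the incoming term $\frac{1}{2\eta}\|\hvz^{(h+1,1)} - \vz^{(h+1,\star)}\|_2^2$ needed for the absorption $\|\hvz^{(h+1,1)} - \vz^{(h+1,\star)}\|_2^2 \le 2\|\hvz^{(h+1,1)} - \vz^{(h,\star)}\|_2^2 + 2\|\vz^{(h+1,\star)} - \vz^{(h,\star)}\|_2^2$, whose second term sums to $2\,\svarne^{(H)}$. If you wish to keep your route, you must actually establish the contraction lemma for the projected optimistic update (your proposed potential is the right object, but the proof is substantial) and weaken $2\gamma<1$ to $\gamma$ bounded away from $1$; the paper's monotonicity-plus-averaging argument achieves the same end with strictly lighter machinery.
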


Our techniques also imply the improved regret bounds $\reg_{\cL, x}^{(T)}, \reg_{\cL, y}^{(T)} = O\left(\sqrt{1 + \svarne^{(H)} + \vargrad^{(H)}} \right)$, under suitable tuning of the learning rate (see \Cref{cor:reg-strong}).

There is another immediate but important implication of~\eqref{eq:strong-monot}: \emph{any} no-regret algorithm in a (static) strongly convex-concave setting ought to be approaching the Nash equilibrium;\footnote{Such observations were independently documented by~\citet{Wang22:no}.} in contrast, this property is spectacularly false in (general) monotone settings~\citep{Mertikopoulos18:Cycles}.

\begin{proposition}
    \label{prop:allnoregret}
    Let $f : \cX \times \cY \to \R$ be a $\mu$-strongly convex-concave function. If players incur regrets such that $\reg^{(T)}_{\cL, x} + \reg^{(T)}_{\cL, y} \leq C T^{1 - \omega}$, for some parameters $C > 0$ and $\omega \in (0,1]$, then for any $\epsilon > 0$ and $T > \left( \frac{2C}{\mu \epsilon^2} \right)^{1/\omega}$ there is a pair of strategies $\vz^{(t)} \in \cZ$ such that $\|\vz^{(t)} - \vz^\star\|_2 \leq \epsilon$, where $\vz^\star \in \cZ$ is a Nash equilibrium.
\end{proposition}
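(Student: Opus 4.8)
The plan is to combine the strong non-negativity of the regret sum, \eqref{eq:strong-monot}, with an elementary averaging argument. First I would fix a Nash equilibrium to serve as the common comparator. Since $f$ is $\mu$-strongly convex-concave and $\cZ = \cX \times \cY$ is nonempty, convex, and compact, the minimax theorem guarantees the existence of a saddle point $\vz^\star = (\vx^\star, \vy^\star) \in \cZ$, which by \Cref{def:NE} (with $\epsilon = 0$) is precisely a Nash equilibrium of the game.

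Next I would instantiate \eqref{eq:strong-monot} (i.e. \Cref{lemma:stronpos}) with $m = T$ and the comparators $\vx^\star, \vy^\star$, which yields $\reg^{(T)}_{\cL, x}(\vx^\star) + \reg^{(T)}_{\cL, y}(\vy^\star) \geq \frac{\mu}{2} \sum_{t=1}^T \|\vz^{(t)} - \vz^\star\|_2^2$. By definition of the per-player (maximum) regret we have $\reg^{(T)}_{\cL, x} = \max_{\vx' \in \cX} \reg^{(T)}_{\cL, x}(\vx') \geq \reg^{(T)}_{\cL, x}(\vx^\star)$, and analogously $\reg^{(T)}_{\cL, y} \geq \reg^{(T)}_{\cL, y}(\vy^\star)$. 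Chaining these with the hypothesis $\reg^{(T)}_{\cL, x} + \reg^{(T)}_{\cL, y} \leq C T^{1-\omega}$ and rearranging gives $\sum_{t=1}^T \|\vz^{(t)} - \vz^\star\|_2^2 \leq \frac{2C}{\mu} T^{1-\omega}$.

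Finally I would apply a pigeonhole step: the minimum over $t \in \range{T}$ of $\|\vz^{(t)} - \vz^\star\|_2^2$ is at most the average, so there exists an index $t$ with $\|\vz^{(t)} - \vz^\star\|_2^2 \leq \frac{1}{T}\sum_{s=1}^T \|\vz^{(s)} - \vz^\star\|_2^2 \leq \frac{2C}{\mu} T^{-\omega}$. Whenever $T > \left(\frac{2C}{\mu \epsilon^2}\right)^{1/\omega}$ the right-hand side is strictly below $\epsilon^2$, and hence $\|\vz^{(t)} - \vz^\star\|_2 \leq \epsilon$ for that index $t$, which is exactly the claim.

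There is no serious obstacle in this argument: the entire difficulty is concentrated in the already-established \Cref{lemma:stronpos}, whose proof is where strong convexity-concavity is genuinely exploited to turn a regret bound into a statement about iterate proximity. The only points warranting (minor) care are (i) justifying that the comparator $\vz^\star$ is a bona fide Nash equilibrium in $\cZ$ via the minimax theorem, and (ii) the inequality relating each per-player maximum regret to the regret against the specific fixed comparator $\vz^\star$ (valid regardless of the sign of the regrets); both are immediate.
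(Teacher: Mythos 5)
Your proof is correct and matches the paper's approach: the paper presents \Cref{prop:allnoregret} as an immediate consequence of the strong nonnegativity bound \eqref{eq:strong-monot} (\Cref{lemma:stronpos}) applied with $m = T$, combined with exactly the averaging/pigeonhole step you describe. The two minor points you flag (existence of the Nash equilibrium comparator and bounding the per-player maximum regret below by the regret against $\vz^\star$) are handled the same way, so there is nothing to add.
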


The insights of this subsection are also of interest in general monotone settings by incorporating a strongly convex regularizer; tuning its magnitude allows us to trade off between a better approximation and the benefits of strong convexity-concavity revealed in this subsection.\looseness-1

\subsection{General-sum multi-player games}
\label{sec:mediator}

Next, we turn our attention to general-sum multi-player games. For simplicity, in this subsection we posit that the game is represented in normal form, so that each player $i \in \range{n}$ has a finite set of available actions $\cA_i$, and $\cX_i \defeq \Delta(\cA_i)$. The proofs of this subsection are included in~\Cref{appendix:proofs-2}.\looseness-1
\paragraph{Potential games} First, we study the convergence of (online) gradient descent ($\gd$) in time-varying \emph{potential games} (see \Cref{def:pot} for the formal description); we recall that unlike two-player zero-sum games, gradient descent is known to approach Nash equilibria in potential games. In our time-varying setup, it is assumed that each round $t \in \range{T}$ corresponds to a different potential game described with a potential function $\Phi^{(t)}$. We further let $d : (\Phi, \Phi') \mapsto \max_{\vz \in \bigtimes_{i=1}^n \cX_i} \left( \Phi(\vz) - \Phi'(\vz) \right)$, so that $\varpot^{(T)} \defeq \sum_{t=1}^{T-1} d(\Phi^{(t)}, \Phi^{(t+1)})$; we call attention to the fact that $d(\cdot, \cdot)$ is not symmetric. Analogously to~\Cref{theorem:nonasymptotic}, we use $\eg^{(t)}(\vz^{(t)}) \in \R_{\geq 0}$ to represent the NE gap of the joint strategy profile $\vz^{(t)} \defeq (\vx_1^{(t)}, \dots, \vx_n^{(t)})$ at the $t$-th game.\looseness-1

\begin{restatable}{theorem}{varpote}
    \label{theorem:pot}
    Suppose that each player employs (online) $\gd$ with a sufficiently small learning rate in a sequence of time-varying potential games. Then, $\sum_{t=1}^T \left(\eg^{(t)}(\vz^{(t)}) \right)^2 = O(\phimax + \varpot^{(T)})$, where $\phimax$ is such that $|\Phi^{(t)}(\cdot)| \leq \phimax$.
\end{restatable}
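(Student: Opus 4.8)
The plan is to exploit the defining feature of potential games---that simultaneous gradient ascent by all players coincides with projected gradient ascent on the shared potential---and to treat each $\Phi^{(t)}$ as a Lyapunov function whose increments control the movement of the iterates. Concretely, since the game at round $t$ is a potential game, the joint utility-gradient operator $F^{(t)}(\vz) \defeq (\nabla_{\vx_1} u_1^{(t)}(\vz), \dots, \nabla_{\vx_n} u_n^{(t)}(\vz))$ equals $\nabla \Phi^{(t)}(\vz)$, so the simultaneous $\gd$ update is exactly $\vz^{(t+1)} = \proj_{\cZ}(\vz^{(t)} + \eta \nabla \Phi^{(t)}(\vz^{(t)}))$. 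Because each $\Phi^{(t)}$ is multilinear over the product of simplices, it is $L$-smooth for a time-independent $L$; hence for a sufficiently small learning rate (say $\eta \le 1/L$) the standard ascent lemma for projected gradient ascent, combined with the first-order optimality of the projection, yields a constant $c > 0$ with
\[
\Phi^{(t)}(\vz^{(t+1)}) \;\ge\; \Phi^{(t)}(\vz^{(t)}) + c\,\|\vz^{(t+1)} - \vz^{(t)}\|_2^2 .
\]

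Next I would telescope this inequality across the time-varying sequence. The obstacle is that both the potential index and the evaluation point change between consecutive terms, so I bridge them using the variation measure: for every $\vz$ one has $\Phi^{(t)}(\vz) - \Phi^{(t+1)}(\vz) \le d(\Phi^{(t)}, \Phi^{(t+1)})$ by definition of $d$, and the asymmetry of $d$ is exactly what makes this the right (forward) bound. Substituting $\Phi^{(t)}(\vz^{(t+1)}) \le \Phi^{(t+1)}(\vz^{(t+1)}) + d(\Phi^{(t)}, \Phi^{(t+1)})$ into the ascent lemma and summing over $t$, the potential terms telescope to $\Phi^{(T)}(\vz^{(T)}) - \Phi^{(1)}(\vz^{(1)}) \le 2\phimax$ and the bridging terms sum to $\varpot^{(T)}$ (the final step $t = T$ being bounded crudely by another $2\phimax$). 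This produces the second-order path-length bound $\sum_{t=1}^{T} \|\vz^{(t+1)} - \vz^{(t)}\|_2^2 = O(\phimax + \varpot^{(T)})$.

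Finally I would convert the path length into a bound on the NE gaps. Writing $\eg^{(t)}(\vz^{(t)}) = \max_i \brgp_i^{(t)}(\vz^{(t)}) \le \sum_i \brgp_i^{(t)}(\vz^{(t)}) = \max_{\vz' \in \cZ} \langle \vz' - \vz^{(t)}, F^{(t)}(\vz^{(t)}) \rangle$, I apply the variational inequality characterizing the projection that defines $\vz^{(t+1)}$: for all $\vz' \in \cZ$, $\langle \eta F^{(t)}(\vz^{(t)}) - (\vz^{(t+1)} - \vz^{(t)}), \vz' - \vz^{(t+1)} \rangle \le 0$. Splitting $\vz' - \vz^{(t)} = (\vz' - \vz^{(t+1)}) + (\vz^{(t+1)} - \vz^{(t)})$, using $\|\vz' - \vz^{(t)}\|_2 \le \diam_{\cZ}$ and the uniform bound $\|F^{(t)}(\vz^{(t)})\|_2 \le G$ (finite since payoffs and strategies are bounded), I obtain $\eg^{(t)}(\vz^{(t)}) \le \frac{\diam_{\cZ} + \eta G}{\eta}\,\|\vz^{(t+1)} - \vz^{(t)}\|_2$. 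Squaring, summing over $t$, and absorbing the time-independent constants $c, \eta, \diam_{\cZ}, G$ into the $O(\cdot)$ notation then combines with the path-length bound to give $\sum_{t=1}^{T} (\eg^{(t)}(\vz^{(t)}))^2 = O(\phimax + \varpot^{(T)})$, as claimed.

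I expect the main obstacle to be the telescoping across changing potentials---in particular getting the direction of the variation bound right so that exactly $\varpot^{(T)}$ (with its asymmetric $d$) appears, rather than a symmetrized or looser quantity, and verifying the endpoint terms are controlled by $\phimax$. By contrast, the ascent lemma and the gradient-mapping-to-gap conversion are both essentially standard once the identification $F^{(t)} = \nabla \Phi^{(t)}$ makes the dynamics projected gradient ascent on a single smooth objective per round.
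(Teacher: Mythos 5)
Your proposal is correct and follows essentially the same route as the paper: the paper's key ingredient (its Proposition~A.14-style path-length bound, \Cref{prop:pot}, cited from prior work) is exactly your ascent lemma $\Phi^{(t)}(\vz^{(t+1)}) \geq \Phi^{(t)}(\vz^{(t)}) + c\,\|\vz^{(t+1)} - \vz^{(t)}\|_2^2$, the telescoping with the asymmetric bridge $\Phi^{(t)}(\vz^{(t+1)}) \leq \Phi^{(t+1)}(\vz^{(t+1)}) + d(\Phi^{(t)}, \Phi^{(t+1)})$ is the paper's bound~\eqref{eq:ub-pot}, and your projection-variational-inequality conversion of per-step movement into the NE gap is precisely the role \Cref{claim:BR-gap} plays in the paper. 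The only cosmetic difference is that you re-derive the two ingredients (ascent lemma and gap conversion) from scratch rather than citing them, and your endpoint accounting gives $4\phimax$ where the paper writes $2\phimax$---both absorbed by the $O(\cdot)$.
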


We refer to \Cref{sec:exp} for some illustrative experiments related to \Cref{theorem:pot}.

\paragraph{General games} Unfortunately, unlike the settings considered thus far, computing Nash equilibria in general games is computationally hard even under a crude approximation $\epsilon = \Theta(1)$~\citep{Daskalakis08:Complexity,Chen09:Settling,Deligkas22:Pure}. Instead, learning algorithms are known to converge---in a time-average sense---to relaxations of the Nash equilibrium, known as \emph{(coarse) correlated equilibria}. For our purposes, we will employ a bilinear formulation of correlated equilibria, which dates back to the seminal work of~\citet{Hart89:Existence} (see also~\citep[Chapter 12]{Von21:Game} for an excellent exposition). This will allow us to translate the results of~\Cref{sec:bspp} to general multi-player games.\looseness-1

Specifically, correlated equilibria\footnote{There is also a bilinear formulation tailored to coarse correlated equilibria, but we will focus solely on the stronger variant (CE) for concreteness.} can be phrased via a game between the $n$ players and a \emph{mediator}, an additional agent. At a high level, the mediator is endeavoring to identify a correlated strategy $\vmu \in \Xi \defeq \Delta\left( \bigtimes_{i=1}^n \cA_i\right)$ for which no player has an incentive to deviate from the recommendation. In contrast, the players are trying to optimally deviate so as to maximize their own utility. More precisely, there exist matrices $\mat{A}_1, \dots, \mat{A}_n$, with each matrix $\mat{A}_i$ depending solely on the utility of Player $i$, for which the bilinear saddle-point problem can be expressed as\looseness-1
\begin{equation}
    \label{eq:mediator-bil}
    \min_{\vmu \in \Xi} \max_{(\bar{\vx}_1, \dots, \bar{\vx}_n) \in \bigtimes_{i=1}^n \bar{\cX}_i} \sum_{i=1}^n \vmu^\top \mat{A}_i \bar{\vx}_i,
\end{equation}
where $\Bar{\cX}_i$ above is a suitable set of strategies; we elaborate more on this formulation in~\Cref{appendix:proofs-2}. This zero-sum game has the property that there exists a strategy $\vmu^\star \in \Xi$ such that $\max_{\bar{\vx}_i \in \bar{\cX}_i} (\vmu^\star)^\top \mat{A}_i \bar{\vx}_i \leq 0$, for any player $i \in \range{n}$, which is precisely a correlated equilibrium.

Before we proceed, it is important to note that the learning paradigm considered here deviates from the traditional one in that orchestrating the protocol requires an additional learning agent, resulting in a less decentralized protocol. Yet, the dynamics induced by solving~\eqref{eq:mediator-bil} via online algorithms remain \emph{uncoupled}~\citep{Hart00:Simple}, in the sense that each player obtains feedback---corresponding to the deviation benefit---that depends solely on its own utility.\looseness-1

Now in the time-varying setting, the matrices $\mat{A}_1, \dots, \mat{A}_n$ that capture the players' utilities can change in each repetition. Crucially, we show that the structure of the induced bilinear problem~\eqref{eq:mediator-bil} is such that there is a sequence of correlated equilibria that guarantee nonnegative dynamic regret; this refines~\Cref{property:gennonnegative} in that only one player's strategies suffice to guarantee nonnegativity, even if the strategy of the other player remains invariant. Below, we denote by $\dreg^{(T)}_{\mu}$ the dynamic regret of Player min in~\eqref{eq:mediator-bil}, and by $\reg^{(T)}_{i}$ the regret of each player $i \in \range{n}$ up to time $T \in \N$, so that the regret of Player max in~\eqref{eq:mediator-bil} can be expressed as $\sum_{i=1}^n \reg^{(T)}_{i}$.\looseness-1

\begin{restatable}{property}{propertymed}
    \label{property:nonnegativemed}
Suppose that $\Xi \ni \vmu^{(t,\star)}$ is a correlated equilibrium of the game at any time $t \in \range{T}$. Then, $\dreg_{\mu}^{(T)}(\vmu^{(1,\star)}, \dots, \vmu^{(T,\star)}) + \sum_{i=1}^n \reg^{(T)}_{i} \geq 0$.
\end{restatable}
As a result, this enables us to apply~\Cref{theorem:nonasymptotic} parameterized on (i) the variation of the CE $\varce^{(T)} \defeq \inf_{\vmu^{(t,\star)} \in \Xi^{(t,\star)}, \forall t \in \range{T}} \sum_{t=1}^{T-1} \| \vmu^{(t+1,\star)} - \vmu^{(t,\star)} \|_2$, where $\Xi^{(t,\star)}$ denotes the set of CE of the $t$-th game, and (ii) the variation in the players' utilities $\varmat^{(T)} \defeq \sum_{i=1}^n \sum_{t=1}^{T-1} \|\mat{A}_i^{(t+1)} - \mat{A}_i^{(t)}\|_2^2$; below, we denote by $\ceg^{(t)}(\vmu^{(t)})$ the CE gap of $\vmu^{(t)} \in \Xi$ at the $t$-th game.\looseness-1

\begin{theorem}
    \label{theorem:general}
    Suppose that each player employs $\ogd$ in a sequence of time-varying BSPPs~\eqref{eq:mediator-bil} with a sufficiently small learning rate. Then, $\sum_{t=1}^T \left( \ceg^{(t)}(\vmu^{(t)}) \right)^2 = O(1 + \varce^{(T)} + \varmat^{(T)})$.
\end{theorem}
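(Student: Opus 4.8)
The plan is to recognize the time-varying problem~\eqref{eq:mediator-bil} as a genuine time-varying BSPP and then re-run the argument behind~\Cref{theorem:nonasymptotic}, but with~\Cref{property:nonnegativemed} playing the role of~\Cref{property:gennonnegative}. Concretely, I would stack the per-player deviation-benefit matrices into a single payoff matrix $\mat{A}^{(t)} \defeq [\mat{A}_1^{(t)} \mid \dots \mid \mat{A}_n^{(t)}]$, so that the min-player (mediator) plays $\vmu \in \Xi$ and the max-player plays $\bar{\vx} \defeq (\bar{\vx}_1, \dots, \bar{\vx}_n) \in \bigtimes_{i=1}^n \bar{\cX}_i$ against the bilinear objective $\vmu^\top \mat{A}^{(t)} \bar{\vx}$. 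Because the objective decomposes additively over the blocks, running $\ogd$ independently on the mediator's variable and on each player's deviation variable $\bar{\vx}_i$ is exactly $\ogd$ on this combined BSPP, the max-player's (external) regret equals $\sum_{i=1}^n \reg_i^{(T)}$, and the min-player's dynamic regret is $\dreg_\mu^{(T)}$. I would also record two routine norm facts: the combined operator norm $L = \max_t \|\mat{A}^{(t)}\|_2$ is controlled since $\mat{A}^{(t)}(\mat{A}^{(t)})^\top = \sum_i \mat{A}_i^{(t)}(\mat{A}_i^{(t)})^\top$, and likewise $\|\mat{A}^{(t+1)} - \mat{A}^{(t)}\|_2^2 \leq \sum_{i=1}^n \|\mat{A}_i^{(t+1)} - \mat{A}_i^{(t)}\|_2^2$, so the combined second-order matrix variation is at most $\varmat^{(T)}$, and the learning rate is tuned to $\eta = \Theta(1/L)$.

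The core step is to reproduce the second-order path-length bound. I would apply the dynamic RVU bound (\Cref{lemma:drvu,lemma:beyondEucl}) separately to the two sides: to the mediator with the comparator being a minimizing sequence of correlated equilibria (contributing a first-order path term equal to $\varce^{(T)}$ together with a matrix-variation term), and to each max-player $i$ with a \emph{static} comparator, so that the max-player side contributes only external-regret path terms and a matrix-variation term, with no first-order comparator variation. Summing all the RVU bounds, the left-hand side becomes $\dreg_\mu^{(T)} + \sum_{i=1}^n \reg_i^{(T)}$, which by~\Cref{property:nonnegativemed} is nonnegative; rearranging and absorbing the $\eta\,\varmat^{(T)}$ terms against the tuned learning rate yields $\sum_{t=1}^T \left( \|\vz^{(t)} - \hvz^{(t)}\|_2^2 + \|\vz^{(t)} - \hvz^{(t+1)}\|_2^2 \right) = O(1 + \varce^{(T)} + \varmat^{(T)})$, where $\vz^{(t)} = (\vmu^{(t)}, \bar{\vx}^{(t)})$. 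The essential point---and the reason~\Cref{property:nonnegativemed} rather than~\Cref{property:gennonnegative} is invoked---is that nonnegativity already holds with a static comparator for the max-player, so the only first-order variation entering the bound is that of the mediator's equilibria, namely $\varce^{(T)}$, rather than a joint equilibrium variation over $\Xi \times \bigtimes_i \bar{\cX}_i$.

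It remains to convert the path-length bound into a bound on the CE gaps. Here I would observe that $\ceg^{(t)}(\vmu^{(t)})$ is precisely the exploitability of the recommendation $\vmu^{(t)}$ against the max-player in~\eqref{eq:mediator-bil}---the worst per-player deviation benefit $\max_{i} \max_{\bar{\vx}_i \in \bar{\cX}_i}(\vmu^{(t)})^\top \mat{A}_i^{(t)} \bar{\vx}_i$---and is therefore controlled by the max-player's best-response gap in the combined BSPP. Invoking the same per-iterate stability estimate that relates the best-response gap of an $\ogd$ iterate to the local path-length quantities $\|\vz^{(t)} - \hvz^{(t)}\|_2$ and $\|\vz^{(t)} - \hvz^{(t+1)}\|_2$ (exactly as in the proof of~\Cref{theorem:nonasymptotic}), summing the squared gaps and substituting the bound of the previous paragraph yields $\sum_{t=1}^T \left( \ceg^{(t)}(\vmu^{(t)}) \right)^2 = O(1 + \varce^{(T)} + \varmat^{(T)})$.

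I expect the main obstacle to be the asymmetric bookkeeping in the RVU step: one must carry the dynamic comparator---and hence the $\varce^{(T)}$ term---only on the mediator's side while keeping all the max-players on external regret, and verify that~\Cref{property:nonnegativemed} genuinely licenses this asymmetry so that no joint-equilibrium variation over the product set $\bigtimes_i \bar{\cX}_i$ leaks into the bound. A secondary technical point is confirming that the combined strategy set $\Xi \times \bigtimes_i \bar{\cX}_i$ has controlled diameter and that the per-iterate gap-to-path-length conversion applies verbatim to the one-sided CE gap; both are routine given the compactness of the deviation polytopes, but they are what legitimize the constants hidden in the $O(\cdot)$.
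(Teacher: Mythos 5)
Your proposal is correct and is essentially the paper's own proof: the paper establishes \Cref{property:nonnegativemed} exactly as you describe (a CE comparator sequence for the mediator, the static direct strategies $\vec{d}_i$ for the players) and then invokes it in conjunction with the argument of \Cref{theorem:nonasymptotic}---the dynamic RVU bound, the resulting second-order path-length bound, and the per-iterate gap conversion of \Cref{claim:BR-gap}---which is precisely the asymmetric bookkeeping you carry out. One minor slip: the CE gap $\max_i \max_{\bar{\vx}_i \in \bar{\cX}_i} (\vmu^{(t)})^\top \mat{A}_i^{(t)} \bar{\vx}_i$ is \emph{not} controlled by the max-player's best-response gap alone, since one must also absorb the term $\langle \vmu^{(t)}, \sum_{i=1}^n \mat{A}_i^{(t)} \bar{\vx}_i^{(t)} \rangle$ (which is guaranteed nonpositive only at an exact CE) via the mediator's best-response gap; this is harmless because the per-iterate estimate you invoke bounds both players' gaps, so the conclusion stands.
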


There are further interesting implications of our framework that are worth highlighting. First, we obtain meta-learning guarantees for general games that depend on the (algorithm-independent) similarity of the correlated equilibria (\Cref{cor:metalearning-gen}); that was left as an open question by~\citet{Harris22:Meta}, where instead algorithm-dependent similarity metrics were derived. Further, by applying~\Cref{cor:indreg}, we derive natural variation-dependent per-player regret bounds in general games (\Cref{cor:regret-general}); this addresses a question left by~\citet{Zhang22:No}, albeit under a different learning paradigm. We suspect that obtaining such results---parameterized on the variation of the CE---are not possible without the presence of the additional agent, as in~\eqref{eq:mediator-bil}.\looseness-1

\subsection{Dynamic regret bounds in static games}
\label{sec:static}

Finally, in this subsection we switch gears by investigating dynamic regret guarantees when learning in static games. The proofs of this subsection are included in~\Cref{appendix:proofs-4}.

First, we point out that while traditional no-regret learning algorithms guarantee $O(\sqrt{T})$ \emph{external} regret, instances of $\omd$---a generalization of $\ogd$; see~\eqref{eq:equivOGD} in~\Cref{appendix:proofs}---in fact guarantee $O(\sqrt{T})$ \emph{dynamic} regret in two-player zero-sum games, which is a much stronger performance measure:

\begin{restatable}{proposition}{dynOMD}
    \label{prop:dyn-OMD}
    Suppose that both players in a (static) two-player zero-sum game employ $\omd$ with a smooth regularizer. Then, $\dreg_x^{(T)}, \dreg_y^{(T)} = O(\sqrt{T})$.
\end{restatable}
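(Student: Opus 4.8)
The plan is to first turn the dynamic regret into a sum of per-iterate best-response gaps, exploiting that the game is static. Since the utility $\vu_x^{(t)} = -\mat{A}\vy^{(t)}$ faced by Player $x$ is linear, the maximizing comparator in the definition of $\dreg_x^{(T)}$ may be chosen \emph{independently at each step} as a best response to $\vu_x^{(t)}$; hence $\dreg_x^{(T)} = \sum_{t=1}^T \brgp_x^{(t)}(\vx^{(t)})$, and symmetrically $\dreg_y^{(T)} = \sum_{t=1}^T \brgp_y^{(t)}(\vy^{(t)})$. Each summand is nonnegative, so it suffices to bound the sum $\sum_{t=1}^T \bigl(\brgp_x^{(t)}(\vx^{(t)}) + \brgp_y^{(t)}(\vy^{(t)})\bigr)$, which dominates each individual dynamic regret. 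After the bilinear cross-terms cancel, this sum is exactly the summed duality gap $\sum_{t=1}^T \bigl(\brgp_x^{(t)}(\vx^{(t)})+\brgp_y^{(t)}(\vy^{(t)})\bigr)$, comparable up to a factor of $2$ to $\sum_{t=1}^T \eg^{(t)}(\vz^{(t)})$.

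The second step is to show that the \emph{second-order path length} $\sum_{t=1}^T \|\vz^{(t)} - \vz^{(t-1)}\|_2^2$ is bounded by a constant independent of $T$. For this I invoke \Cref{property:gennonnegative} with the constant comparator equal to an exact Nash equilibrium $\vz^\star = (\vx^\star, \vy^\star)$ of the static game (so that every $\epsilon^{(t)} = 0$), which yields $\reg_x^{(T)}(\vx^\star) + \reg_y^{(T)}(\vy^\star) \geq 0$. Pairing this with the (dynamic) RVU bound of \Cref{lemma:drvu,lemma:beyondEucl} — whose negative term of the form $-\tfrac{1}{c\eta}\sum_t \|\vz^{(t)} - \vz^{(t-1)}\|_2^2$ is played off against the positive variation term $\eta\sum_t \|\vu^{(t)} - \vu^{(t-1)}\|_2^2 \leq \eta L^2 \sum_t \|\vz^{(t)} - \vz^{(t-1)}\|_2^2$, where $L = \|\mat{A}\|_2$ — and choosing $\eta$ small enough that $\eta L^2$ is dominated by $\tfrac{1}{c\eta}$, the path-length terms absorb and I conclude $\sum_{t=1}^T \|\vz^{(t)} - \vz^{(t-1)}\|_2^2 = O(1)$.

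The third step bounds each instantaneous best-response gap by first-order local movement. Using the first-order optimality (variational-inequality) characterization of the $\omd$ step producing $\vx^{(t)}$ with prediction $\vm_x^{(t)} = \vu_x^{(t-1)}$, together with nonexpansiveness of the (Bregman) projection and the estimate $\|\vu^{(t)} - \vu^{(t-1)}\|_2 \leq L \|\vz^{(t)} - \vz^{(t-1)}\|_2$, I expect a bound of the shape $\brgp_x^{(t)}(\vx^{(t)}) = O\bigl(\|\vx^{(t)} - \hvx^{(t)}\|_2 + \|\vy^{(t)} - \vy^{(t-1)}\|_2\bigr)$, with $\|\vx^{(t)} - \hvx^{(t)}\|_2$ itself controlled by consecutive iterate differences. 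Summing over $t$ and applying Cauchy--Schwarz against the $O(1)$ second-order path-length bound gives $\dreg_x^{(T)} = \sum_t \brgp_x^{(t)}(\vx^{(t)}) \leq \sqrt{T}\,\bigl(\sum_t O(\|\vz^{(t)} - \vz^{(t-1)}\|_2^2)\bigr)^{1/2} = O(\sqrt{T})$, and identically for $y$.

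The main obstacle is this last step: converting the one-step optimality conditions of a \emph{general} smooth-regularizer $\omd$ update into a clean first-order bound on the gap function. Unlike the Euclidean ($\ogd$) case, I will need the strong convexity and smoothness of the regularizer to pass between Bregman divergences and squared $\ell_2$-norms, and care is required to ensure the resulting constants are $T$-independent; this is precisely where the \emph{smooth regularizer} hypothesis enters.
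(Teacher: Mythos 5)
Your proposal is correct and takes essentially the same route as the paper's proof: the per-round decomposition of dynamic regret into best-response gaps, a per-iterate gap bound obtained from the $\omd$ optimality conditions together with smoothness of the regularizer (this is exactly \Cref{claim:BR-gap}, which the paper cites rather than re-derives), an $O(1)$ bound on the second-order path length, and a concluding Cauchy--Schwarz step. The only difference is presentational: you re-derive the $O(1)$ path-length bound from \Cref{property:gennonnegative} combined with the RVU bound of \Cref{lemma:beyondEucl}---which is precisely the static-game specialization of \Cref{theorem:pathlength}---whereas the paper simply cites this fact from prior work.
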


In proof, the dynamic regret of each player under $\omd$ with a smooth regularizer can be bounded by the \emph{first-order} path length of that player's strategies, which in turn can be bounded by $O(\sqrt{T})$ given that the second-order path length is $O(1)$ (\Cref{theorem:pathlength}). In fact, \Cref{theorem:pathlength} readily extends \Cref{prop:dyn-OMD} to time-varying zero-sum games as well, implying that $\dreg_x^{(T)}, \dreg_y^{(T)} = O\left(\sqrt{T(1 + \epsvarne^{(T)} + \varmat^{(T)})}\right)$.\looseness-1

A question that arises from~\Cref{prop:dyn-OMD} is whether the $O(\sqrt{T})$ guarantee for dynamic regret of $\omd$ can be improved in the online learning setting. Below, we point out a significant improvement to $O(\log T)$, but under a stronger two-point feedback model;\footnote{The same asymptotic bound was independently established by~\citet{Cai23:Doubly} but in the standard feedback model.} namely, we posit that in every round each player can select an additional auxiliary strategy, and each player then gets to additionally observe the utility corresponding to the auxiliary strategies. Notably, this is akin to how the \emph{extra-gradient method} works~\citep{Hsieh19:On} (also \emph{cf}. \citep[Section 4.2]{Rakhlin13:Optimization} for multi-point feedback models in the bandit setting).

\begin{observation}
    \label{observation:twopoint}
    Under two-point feedback, there exist learning algorithms that guarantee $\dreg_x^{(T)}, \dreg_y^{(T)} = O(\log T)$ in two-player zero-sum games.\looseness-1
\end{observation}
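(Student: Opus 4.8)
The plan is to reduce each player's dynamic regret to a \emph{cumulative duality gap}, and then to drive that gap to $O(\log T)$ using an extra-gradient-type method whose two gradient queries per round are exactly what the two-point feedback model supplies. First I would record that in a \emph{static} zero-sum game the dynamic regret collapses to a cumulative best-response gap. Since the comparator sequence $\seq_x^{(T)}$ may be chosen freely round-by-round and the losses $\vx \mapsto \langle \vx, \vu_x^{(t)} \rangle = \langle \vx, -\mat{A}\vy^{(t)} \rangle$ are linear, the per-round optimal comparator is a best response to $\vy^{(t)}$, so
\[
\dreg_x^{(T)} = \sum_{t=1}^T \brgp_x^{(t)}(\vx^{(t)}), \qquad \dreg_y^{(T)} = \sum_{t=1}^T \brgp_y^{(t)}(\vy^{(t)}).
\]
Writing $F : \vz = (\vx, \vy) \mapsto (\mat{A}\vy, -\mat{A}^\top \vx)$ for the (monotone) operator of the game and $G(\vz) \defeq \max_{\vz' \in \cZ} \langle F(\vz), \vz - \vz' \rangle$ for the associated VI/duality gap, a direct computation (the bilinear cross-terms cancel) gives $\brgp_x^{(t)}(\vx^{(t)}) + \brgp_y^{(t)}(\vy^{(t)}) = G(\vz^{(t)})$. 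As both best-response gaps are nonnegative, $\max\{\dreg_x^{(T)}, \dreg_y^{(T)}\} \le \sum_{t=1}^T G(\vz^{(t)})$, so it suffices to exhibit iterates with $\sum_{t=1}^T G(\vz^{(t)}) = O(\log T)$.

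The algorithm I would use is an \emph{anchored} (Halpern) variant of the extra-gradient method. Each iteration performs one extrapolation step and one update step, i.e.\ it evaluates $F$ at two points per round; in game terms each player observes its utility both at the played strategy and at one auxiliary strategy, which is precisely the two-point feedback model and mirrors how the extra-gradient method consumes feedback. The anchoring term pulls each iterate toward the fixed initialization $\vz^{(1)}$ with weight $\approx 1/t$ and requires no additional queries.

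The crux is the rate. For anchored extra-gradient on a convex compact domain, the accelerated analysis yields an $O(1/t)$ last-iterate bound on the (tangent/projected) residual $r(\vz^{(t)})$ of $F$. Combined with the standard diameter bound $G(\vz) \le \diam_{\cZ}\, r(\vz)$ for monotone VIs, this gives $G(\vz^{(t)}) = O(1/t)$, and summing the resulting harmonic series yields $\sum_{t=1}^T G(\vz^{(t)}) = O\!\left(\sum_{t=1}^T 1/t\right) = O(\log T)$, which bounds both dynamic regrets and proves the claim.

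The delicate point is the constrained geometry: over a general convex compact $\cZ$ the operator value need not vanish at the equilibrium, so the accelerated rate must be stated for the projected residual rather than for $\|F(\vz^{(t)})\|_2$, and one must verify the inequality relating $G(\vz)$ to that residual up to $\diam_{\cZ}$. Establishing (or citing) the $O(1/t)$ residual bound for the anchored extra-gradient iterates in the \emph{constrained} monotone setting---as opposed to the unconstrained bilinear case, where even geometric decay is available---is the main technical ingredient; the reduction and the harmonic summation are routine. I would also confirm that the two feedback points per round are allocated consistently across the two players, so that the coupled dynamics realize a single anchored extra-gradient trajectory on $F$.
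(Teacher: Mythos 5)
Your argument is correct, but it reaches the bound by a genuinely different route than the paper. The skeleton is shared: in a static zero-sum game the per-round optimal comparators are best responses, the two players' best-response gaps sum to the duality gap $G(\vz^{(t)})$, and so it suffices to produce iterates with $G(\vz^{(t)}) = O(1/t)$ and sum the harmonic series. Where you diverge is in how the $O(1/t)$ per-round gap is obtained. The paper's construction is deliberately lightweight: each player runs $\omd$ internally, \emph{plays the time-average} of its $\omd$ iterates as the strategy that counts for regret, and uses the auxiliary query to observe the utility at the raw $\omd$ iterate so the internal dynamics can keep running; the $O(1/t)$ duality gap of the averaged iterates is then just the classical ergodic rate of optimistic methods \citep{Rakhlin13:Optimization}, so nothing beyond a citation and a harmonic sum is needed. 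You instead play the \emph{last iterate} of an anchored (Halpern) extra-gradient scheme and invoke the accelerated $O(1/t)$ rate on the tangent residual in the constrained monotone setting, combined with the standard inequality $G(\vz) \le \diam_{\cZ}\, r(\vz)$. This works---such constrained anchored rates exist in the literature \citep{Yoon21:Accelerated,Diakonikolas20:Halpern,Cai22:Accelerated}, and you correctly flag that the constrained residual bound (rather than the unconstrained $\|F(\vz^{(t)})\|_2$ decay) is the load-bearing ingredient---but it is a substantially heavier hammer than the ergodic rate the paper leans on. What your route buys is that the played strategies are the algorithm's own iterates, with no averaging, and the argument extends verbatim to general constrained monotone VIs; what the paper's route buys is a proof short enough to state as an observation. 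It is also worth noting the two routes are cousins rather than strangers: the paper's own follow-up discussion (the averaged update rule it derives after the observation) points out that averaging the optimistic iterates is itself a variant of Halpern's iteration, i.e., the anchoring you impose explicitly is what the paper's averaging performs implicitly.
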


In particular, it suffices for each player to employ $\omd$, but with the twist that the first strategy in each round is the \emph{time-average} of $\omd$; the auxiliary strategy is the standard output of $\omd$. Then, the dynamic regret of each player will grow as $O \left( \sum_{t=1}^T \frac{1}{t} \right) = O(\log T)$ since the duality gap of the average strategies is decreasing with a rate of $T^{-1}$~\citep{Rakhlin13:Optimization}. It is an interesting question whether the bound of~\Cref{observation:twopoint} can be improved to $O(1)$~\citep{Cai23:Doubly}; we conjecture that there is a lower bound of $\Omega(\log T)$.\looseness-1

\paragraph{General-sum games} In stark contrast, no (computationally efficient) sublinear dynamic regret guarantees are possible in general-sum games:\looseness-1

\begin{restatable}{proposition}{hard}
    \label{prop:hard}
    Any polynomial-time algorithm incurs $\sum_{i=1}^n \dreg_i^{(T)} \geq C T$ for any polynomial $T \in \N$, even if $n = 2$ and $C > 0$ is an absolute constant, unless ETH for $\PPAD$ is false~\citep{Rubinstein16:Settling}.
\end{restatable}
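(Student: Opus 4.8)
The plan is to establish a computational lower bound by reduction from the problem of computing an approximate Nash equilibrium in a general-sum two-player game, which is known to be $\PPAD$-hard even for constant $\epsilon = \Theta(1)$, and under the Exponential Time Hypothesis (ETH) for $\PPAD$ cannot be solved in polynomial time~\citep{Rubinstein16:Settling,Daskalakis08:Complexity,Chen09:Settling}. The central idea is to connect sublinear dynamic regret to the ability to approximate a Nash equilibrium. First I would recall that dynamic regret with a \emph{fixed} comparator reduces to external regret, and that in a general-sum game a strategy profile at which \emph{every} player has low external regret against their own best \emph{fixed} deviation corresponds to an approximate equilibrium notion. The key observation driving the hardness is that dynamic regret upper-bounds external regret (take the comparator sequence to be constant), so $\sum_i \dreg_i^{(T)} = o(T)$ would in particular force the per-player external regret to be $o(T)$ simultaneously; but simultaneous vanishing per-player external regret at a \emph{single} strategy profile (as opposed to in the time-average) is exactly the condition characterizing an approximate Nash equilibrium.

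The crux is to argue that a polynomial-time algorithm achieving $\sum_{i=1}^n \dreg_i^{(T)} < CT$ (for a suitable absolute constant $C$ matched to the $\Theta(1)$ inapproximability threshold) can be converted into a polynomial-time procedure that outputs an $\epsilon$-approximate Nash equilibrium. To do this I would run the purported low-dynamic-regret dynamics in the \emph{static} general-sum game obtained from the hard instance, and then extract a single round $t^\star \in \range{T}$ whose strategy profile $\vz^{(t^\star)}$ has small \emph{per-round} best-response gap. The bridge from aggregate dynamic regret to a good single iterate is that dynamic regret, unlike external regret, is benchmarked against a comparator that can change every round; choosing the comparator at round $t$ to be each player's best response to the \emph{current} profile $\vz^{(t)}$ makes the per-round regret term equal to that player's instantaneous best-response gap. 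Summing, $\sum_{i=1}^n \dreg_i^{(T)} \geq \sum_{t=1}^T \sum_{i=1}^n (\text{best-response gap of player } i \text{ at } \vz^{(t)}) \geq \sum_{t=1}^T \eg^{(t)}(\vz^{(t)})$ up to constants, so a $CT$ upper bound would yield some round with NE gap at most $C$ by averaging. Setting $C$ below the $\Theta(1)$ hardness threshold of~\citep{Rubinstein16:Settling} gives the contradiction, since the extraction runs in polynomial time and would solve the $\PPAD$-hard approximation problem.

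The main obstacle I anticipate is the direction of the inequality tying dynamic regret to the instantaneous NE gap: dynamic regret is defined as a \emph{maximum} over comparator sequences, so $\dreg_i^{(T)} = \max_{\seq_i} \dreg_i^{(T)}(\seq_i)$ is at least the value obtained by the specific best-response-per-round comparator sequence, which is exactly $\sum_{t=1}^T (\text{instantaneous deviation benefit of player } i)$. This is the favorable direction, so the bound $\dreg_i^{(T)} \geq \sum_t (\text{gap}_i^{(t)})$ holds by construction; the delicate point is instead ensuring that the \emph{combined} sum $\sum_{i=1}^n \dreg_i^{(T)}$ being small forces a \emph{single common} profile $\vz^{(t^\star)}$ to be approximately equilibrated for \emph{all} players \emph{simultaneously}, rather than different players being happy at different rounds. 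This is handled cleanly because the per-round quantity $\sum_{i=1}^n (\text{gap}_i^{(t)}) \geq \eg^{(t)}(\vz^{(t)})$ aggregates all players at the \emph{same} round $t$; hence the averaging argument produces one round where the joint profile has small gap across all players at once.

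The remaining care is purely quantitative: one must verify that the static game used in the reduction has payoffs of bounded magnitude (so the hardness threshold and the constant $C$ are on a common scale), that the time horizon $T$ can be taken polynomial (so the whole simulation is polynomial-time), and that extracting the best round only requires evaluating the $\poly(n)$-many best-response gaps, each a linear optimization over a simplex and thus polynomial-time computable. Putting these together, an algorithm with $\sum_{i=1}^n \dreg_i^{(T)} < CT$ for polynomial $T$ would contradict ETH for $\PPAD$, establishing the claim.
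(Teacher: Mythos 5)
Your proposal is correct and follows essentially the same route as the paper's proof: run the purported low-dynamic-regret algorithm on a static hard instance from~\citet{Rubinstein16:Settling}, use the fact that dynamic regret dominates the sum of per-round best-response gaps (via per-round best-response comparators), and average to extract in polynomial time a single round whose profile is an $\epsilon$-approximate Nash equilibrium, contradicting ETH for $\PPAD$. (The only blemish is your first paragraph's detour through external regret, which is not the operative connection; your second and third paragraphs supply the correct argument, matching the paper.)
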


Indeed, this follows immediately since computing a Nash equilibrium to $O(1)$ precision in two-player games requires superpoylnomial time~\citep{Rubinstein16:Settling}. As such, \Cref{prop:hard} applies beyond the online learning setting. This motivates considering a relaxation of dynamic regret, wherein the sequence of comparators is subject to the constraint $\sum_{t=1}^{T-1} \mathbbm{1} \{ \vx_i^{(t+1, \star)} \neq \vx_i^{(t,\star)} \} \leq K - 1$, for some parameter $K \in \N$~\citep{Crippa23:Equilibria}; this will be referred to as $\Kdreg_i^{(T)}$. Naturally, external regret coincides with $\Kdreg^{(T)}_i$ under $K = 1$. In this context, we employ~\Cref{lemma:drvu} to bound $\Kdreg^{(T)}$ under $\ogd$:\looseness-1

\begin{theorem}[Precise version in~\Cref{theorem:Kdreg-detailed}]
    \label{theorem:Kdreg}
    Suppose that all $n$ players employ $\ogd$ with a suitable learning rate in an $L$-smooth game. Then, for any $K \in \N$,
    \begin{enumerate}
        \item $\sum_{i=1}^n \Kdreg^{(T)}_i = O(K \sqrt{n} L \diam^2_{\cZ} )$;
        \item \label{item:perplayer} $\Kdreg_i^{(T)} = O(K^{3/4} T^{1/4} n^{1/4} L^{1/2} \diam_{\cX_i}^{3/2})$, for any player $i \in \range{n}$.
    \end{enumerate}
\end{theorem}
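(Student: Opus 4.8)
The plan is to start from the dynamic RVU bound of \Cref{lemma:drvu}, specialized to the comparator sequences relevant for $\Kdreg_i^{(T)}$. For $\ogd$ with learning rate $\eta$, that bound controls the dynamic regret of player $i$ against any comparator sequence $s_i = (\vx_i^{(t,\star)})_{1 \le t \le T}$ by a term measuring the \emph{first-order} path length of the comparators, plus a utility-variation term, minus the second-order path length of the player's own iterates, of the form
\begin{equation*}
\dreg_i^{(T)}(s_i) \;\le\; \frac{c}{\eta}\Bigl(\diam_{\cX_i}^2 + \diam_{\cX_i}\sum_{t=2}^T\|\vx_i^{(t,\star)}-\vx_i^{(t-1,\star)}\|_2\Bigr) \;+\; \eta\sum_{t=1}^T\|\vu_i^{(t)}-\vu_i^{(t-1)}\|_2^2 \;-\; \frac{1}{c\eta}\sum_{t=1}^T\|\vx_i^{(t)}-\vx_i^{(t-1)}\|_2^2 ,
\end{equation*}
for an absolute constant $c>0$. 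The crucial first step is to exploit the $K$-switching constraint: a comparator that changes at most $K-1$ times has first-order path length at most $(K-1)\diam_{\cX_i}$, since each switch moves by at most the diameter. Hence the comparator term collapses to $O(K\diam_{\cX_i}^2/\eta)$, and I obtain the master inequality
\begin{equation*}
\Kdreg_i^{(T)} \;\le\; \frac{O(K\diam_{\cX_i}^2)}{\eta} \;+\; \eta\, V_i \;-\; \frac{1}{c\eta}\, A_i, \qquad V_i \defeq \sum_{t}\|\vu_i^{(t)}-\vu_i^{(t-1)}\|_2^2, \quad A_i\defeq\sum_t\|\vx_i^{(t)}-\vx_i^{(t-1)}\|_2^2 .
\end{equation*}

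For the aggregate bound (Item~1), I would sum this over $i\in\range{n}$ and invoke $L$-smoothness to pass from utility variation to joint-strategy movement: $\|\vu_i^{(t)}-\vu_i^{(t-1)}\|_2\le L\|\vz^{(t)}-\vz^{(t-1)}\|_2$, so $\sum_i V_i \le nL^2\sum_t\|\vz^{(t)}-\vz^{(t-1)}\|_2^2 = nL^2\sum_iA_i$. Choosing $\eta = \Theta(1/(\sqrt n L))$ makes the coefficient $\eta nL^2 - \tfrac{1}{c\eta}$ of the aggregate second-order path length nonpositive, so that term can be dropped, leaving $\sum_i\Kdreg_i^{(T)} \le O(K\diam_\cZ^2/\eta) = O(K\sqrt n L\,\diam_\cZ^2)$, where I use $\sum_i\diam_{\cX_i}^2 = \diam_\cZ^2$. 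This is exactly Item~1.

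For the per-player bound (Item~\ref{item:perplayer}), cancellation is no longer available, so I would instead drop the negative term and bound the utility variation by the \emph{total} second-order path length $P\defeq\sum_t\|\vz^{(t)}-\vz^{(t-1)}\|_2^2$ via smoothness, giving $\Kdreg_i^{(T)} \le O(K\diam_{\cX_i}^2)/\eta + \eta L^2 P$. The key additional ingredient is a crude but effective bound on $P$: because each $\ogd$ update is a nonexpansive projection of a step of size $O(\eta)$, consecutive iterates satisfy $\|\vz^{(t)}-\vz^{(t-1)}\|_2 = O(\eta\sqrt n\,G)$ with $G\defeq\max_{i,t}\|\vu_i^{(t)}\|_2$ an $O(1)$ problem-dependent constant, whence $P = O(\eta^2 n G^2 T)$. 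Substituting yields $\Kdreg_i^{(T)} \le O(K\diam_{\cX_i}^2)/\eta + O(\eta^3 nL^2 G^2 T)$, and optimizing the free parameter at $\eta = \Theta\bigl((K\diam_{\cX_i}^2/(nL^2 T))^{1/4}\bigr)$ balances the two terms to produce $\Kdreg_i^{(T)} = O(K^{3/4}T^{1/4}n^{1/4}L^{1/2}\diam_{\cX_i}^{3/2})$, as claimed (with $G$ absorbed into the suppressed constants).

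The main obstacle, and the step I would be most careful about, is the first one: correctly reading off the $K$-dependence from the comparator-movement term of the dynamic RVU bound. One must verify that for a comparator that is piecewise constant across at most $K$ blocks the relevant telescoping sum contributes $O(\diam_{\cX_i})$ per switch (hence $O(K\diam_{\cX_i})$ total first-order path length), with the $1/\eta$ scaling intact and no hidden dependence on $T$. A secondary subtlety is that the two claims require genuinely different learning-rate regimes---the constant-order $\eta=\Theta(1/(\sqrt nL))$ that enables aggregate cancellation versus the $T$-dependent $\eta=\Theta((K\diam_{\cX_i}^2/(nL^2T))^{1/4})$ that optimizes the per-player balance---so ``a suitable learning rate'' must be interpreted separately for the two parts, and one should double-check that the small-step bound on $P$ holds uniformly over the chosen $\eta$.
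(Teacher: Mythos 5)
Your proposal is correct and follows essentially the same route as the paper's proof of \Cref{theorem:Kdreg-detailed}: the $K$-switching constraint collapses the comparator path-length term of \Cref{lemma:drvu} to $O(K\diam_{\cX_i}^2/\eta)$, Item~1 follows by summing over players and cancelling the smoothness-induced utility variation against the negative second-order path length with $\eta = \Theta(1/(L\sqrt{n}))$, and Item~2 follows by dropping the negative term, invoking the $O(\eta)$ per-iterate stability of $\ogd$, and optimizing the ($T$-dependent) learning rate. Both subtleties you flag---the per-switch contribution of $O(\diam_{\cX_i})$ and the two distinct learning-rate regimes---are handled exactly as in the paper.
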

One question that arises here is whether the per-player bound of $O(K^{3/4} T^{1/4})$ (Item~\ref{item:perplayer}) can be improved to $\tilde{O}(K)$, where $\tilde{O}(\cdot)$ hides logarithmic factors. The main challenge is that, even for $K = 1$, all known methods that obtain $\tilde{O}(1)$~\citep{Daskalakis21:Near,Piliouras21:Optimal,Anagnostides22:Uncoupled,Farina22:Near} rely on non-smooth regularizers that violate the preconditions of~\Cref{lemma:beyondEucl}---our dynamic RVU bound beyond (squared) Euclidean regularization; yet, we point out that it is possible under a slightly stronger feedback model (\Cref{remark:drvu}). We finally highlight that the game-theoretic significance of the solution concept that arises as the limit point of no-regret learners when $\Kdreg = o(T)$ was recently investigated by~\citet{Crippa23:Equilibria}. 
\section{Conclusions and future work}
\label{sec:conclusion}

In this paper, we developed a framework for characterizing iterate-convergence of no-regret learning algorithms---primarily optimistic gradient descent ($\ogd$)---in time-varying games. There are many promising avenues for future research. Besides closing the obvious gaps we highlighted in~\Cref{sec:static}, it is important to characterize the behavior of no-regret learning algorithms in other fundamental time-varying multiagent settings, such as Stackelberg (security) games~\citep{Balcan15:Commitment}. Moreover, our results operate in the full-feedback model where each player receives feedback on all possible actions. Extending the scope of our framework to capture partial-feedback models 
as well is another interesting direction for future work.\looseness-1

\section*{Acknowledgements} We are grateful to anonymous referees for providing valuable and constructive feedback. We also thank Vince Conitzer and Caspar Oesterheld for many helpful comments. This material is based on work supported by the National Science
Foundation under grants IIS-1901403 and CCF-1733556 and by the ARO under
award W911NF2210266.

\printbibliography
\clearpage

\iftrue
\newpage
\appendix
\onecolumn
\section{Omitted proofs}
\label{appendix:proofs}

In this section, we provide the proofs from the main body (\Cref{sec:main}).

\subsection{Proofs from~\texorpdfstring{\Cref{sec:bspp}}{Section 3.1}}
\label{appendix:proofs-1}

First, we start with the proofs of the dynamic RVU bounds (\Cref{lemma:drvu,lemma:beyondEucl} below). Before we proceed, it will be useful to express the update rule~\eqref{eq:OGD} in the following equivalent form:
\begin{equation}
    \label{eq:equivOGD}
\begin{split}
        \vx^{(t)} &\defeq  \argmax_{\vx \in \cX} \left\{ \Psi_x^{(t)}(\vx) \defeq \langle \vx, \vm_x^{(t)} \rangle - \frac{1}{\eta} \brg{x}{\vx}{\hvx^{(t)}} \right\}, \\
        \hvx^{(t+1)} &\defeq \argmax_{\hvx \in \cX} \left\{ \hat{\Psi}_x^{(t)}(\hvx) \defeq \langle \hvx, \vu_x^{(t)} \rangle - \frac{1}{\eta} \brg{x}{\hvx}{\hvx^{(t)}} \right\}.
\end{split}
\end{equation}
Here, $\brg{x}{\cdot}{\cdot}$ denotes the \emph{Bregman divergence} induced by the (squared) Euclidean regularizer $\phi_x : \vx \mapsto \frac{1}{2} \|\vx\|_2^2$; namely, $\brg{x}{\vx}{\vx'} \defeq \phi(\vx) - \phi(\vx') - \langle \nabla \phi(\vx'), \vx - \vx' \rangle = \frac{1}{2} \|\vx - \vx' \|_2^2$, for $\vx, \vx' \in \cX$. The update rule~\eqref{eq:equivOGD} for general Bregman divergences will be referred to as \emph{optimistic mirror descent ($\omd$)}. 

\subsubsection{Dynamic RVU bounds}

The first key ingredient that we need for the proof of \Cref{theorem:nonasymptotic} is the property of \emph{regret bounded by variation in utilities (RVU)}, in the sense of~\citet{Syrgkanis15:Fast}, but with respect to dynamic regret; such a bound is established below, and is analogous to that obtained by~\citet{Zhang22:No}.\looseness-1

\begin{restatable}[RVU bound for dynamic regret]{lemma}{drvu}
    \label{lemma:drvu}
    Consider any sequence of utilities $(\vu_x^{(1)}, \dots, \vu_x^{(T)})$ up to time $T \in \N$. The dynamic regret~\eqref{eq:dreg} of $\ogd$ with respect to any sequence of comparators $(\vx^{(1,\star)}, \dots, \vx^{(T,\star)}) \in \cX^T$ can be bounded by\looseness-1 
    \begin{align}
        \label{eq:drvu}
    \frac{\diam_{\cX}^2}{2\eta} + \frac{\diam_{\cX}}{\eta} \sum_{t=1}^{T-1} \| \vx^{(t+1, \star)} - \vx^{(t,\star)} \|_2 + &\eta \sum_{t=1}^T \|\vu_x^{(t)} - \vm_x^{(t)} \|_2^2  \notag \\ &- \frac{1}{2\eta} \sum_{t=1}^T \left( \|\vx^{(t)} - \hvx^{(t)} \|_2^2 + \|\vx^{(t)} - \hvx^{(t+1)} \|_2^2 \right).
    \end{align}
\end{restatable}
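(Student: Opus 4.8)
The plan is to work entirely with the equivalent proximal form~\eqref{eq:equivOGD} of $\ogd$ and to exploit the two-step structure of the update: both $\vx^{(t)}$ and $\hvx^{(t+1)}$ are Bregman prox steps issued from the \emph{same} anchor $\hvx^{(t)}$, driven respectively by the prediction $\vm_x^{(t)}$ and the true utility $\vu_x^{(t)}$. First I would record the standard first-order optimality (three-point) inequality for a prox step: if $\vx^+ = \argmax_{\vx \in \cX}\{\langle \vx, \vg\rangle - \frac{1}{\eta} \brg{x}{\vx}{\vw}\}$, then for every $\vu \in \cX$ one has $\langle \vu - \vx^+, \vg\rangle \le \frac{1}{\eta}\left(\brg{x}{\vu}{\vw} - \brg{x}{\vu}{\vx^+} - \brg{x}{\vx^+}{\vw}\right)$, which follows from the variational characterization of the maximizer together with the three-point identity for Bregman divergences.

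Next I would instantiate this inequality twice: once for the step producing $\hvx^{(t+1)}$ (gain $\vu_x^{(t)}$, anchor $\hvx^{(t)}$, comparator $\vx^{(t,\star)}$), and once for the step producing $\vx^{(t)}$ (gain $\vm_x^{(t)}$, anchor $\hvx^{(t)}$, comparator $\hvx^{(t+1)}$). I would then split the per-round dynamic regret $\langle \vx^{(t,\star)} - \vx^{(t)}, \vu_x^{(t)}\rangle$ as $\langle \vx^{(t,\star)} - \hvx^{(t+1)}, \vu_x^{(t)}\rangle + \langle \hvx^{(t+1)} - \vx^{(t)}, \vm_x^{(t)}\rangle + \langle \hvx^{(t+1)} - \vx^{(t)}, \vu_x^{(t)} - \vm_x^{(t)}\rangle$, bounding the first two inner products by the two optimality inequalities. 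Adding them, the $\brg{x}{\hvx^{(t+1)}}{\hvx^{(t)}}$ terms cancel, leaving a telescoping pair $\frac{1}{\eta}(\brg{x}{\vx^{(t,\star)}}{\hvx^{(t)}} - \brg{x}{\vx^{(t,\star)}}{\hvx^{(t+1)}})$, the two negative terms $-\frac{1}{2\eta}(\|\vx^{(t)} - \hvx^{(t)}\|_2^2 + \|\hvx^{(t+1)} - \vx^{(t)}\|_2^2)$ after specializing to the Euclidean divergence, and the residual inner product $\langle \hvx^{(t+1)} - \vx^{(t)}, \vu_x^{(t)} - \vm_x^{(t)}\rangle$.

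The step I expect to be the crux is controlling this residual \emph{without} spending either negative path-length term, since that is exactly what preserves the coefficient $\frac{1}{2\eta}$ on both of them (a naive application of Young's inequality would only retain $\frac{1}{4\eta}$). The right tool is the stability of the two prox steps: because $\vx^{(t)} = \proj_{\cX}(\hvx^{(t)} + \eta\vm_x^{(t)})$ and $\hvx^{(t+1)} = \proj_{\cX}(\hvx^{(t)} + \eta\vu_x^{(t)})$ share the anchor $\hvx^{(t)}$, nonexpansiveness of the Euclidean projection gives $\|\hvx^{(t+1)} - \vx^{(t)}\|_2 \le \eta\|\vu_x^{(t)} - \vm_x^{(t)}\|_2$; Cauchy--Schwarz then bounds the residual by $\eta\|\vu_x^{(t)} - \vm_x^{(t)}\|_2^2$ directly, producing precisely the variation term while leaving both squared-distance terms intact.

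Finally I would sum over $t \in \range{T}$. The only remaining subtlety is that the telescoping pair does not collapse cleanly because the comparator $\vx^{(t,\star)}$ moves; I would handle this by an Abel-type rearrangement, writing the sum as $\|\vx^{(1,\star)} - \hvx^{(1)}\|_2^2$ minus the nonnegative end term $\|\vx^{(T,\star)} - \hvx^{(T+1)}\|_2^2$ plus $\sum_{t=2}^{T}(\|\vx^{(t,\star)} - \hvx^{(t)}\|_2^2 - \|\vx^{(t-1,\star)} - \hvx^{(t)}\|_2^2)$, and then factoring each consecutive difference as $\langle \vx^{(t,\star)} - \vx^{(t-1,\star)}, \vx^{(t,\star)} + \vx^{(t-1,\star)} - 2\hvx^{(t)}\rangle \le 2\diam_{\cX}\|\vx^{(t,\star)} - \vx^{(t-1,\star)}\|_2$. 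Together with $\|\vx^{(1,\star)} - \hvx^{(1)}\|_2^2 \le \diam_{\cX}^2$, this produces the first two terms $\frac{\diam_{\cX}^2}{2\eta} + \frac{\diam_{\cX}}{\eta}\sum_{t=1}^{T-1}\|\vx^{(t+1,\star)} - \vx^{(t,\star)}\|_2$, completing the claimed bound.
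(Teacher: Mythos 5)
Your proposal is correct and follows essentially the same route as the paper's proof: the identical three-way decomposition of the per-round regret, the same pair of prox-optimality inequalities anchored at $\hvx^{(t)}$ (the paper phrases them via $(1/\eta)$-strong convexity of $\Psi_x^{(t)}$ and $\hat{\Psi}_x^{(t)}$), the same stability bound $\|\hvx^{(t+1)} - \vx^{(t)}\|_2 \le \eta\|\vu_x^{(t)} - \vm_x^{(t)}\|_2$ to absorb the residual without spending the negative path-length terms, and the same Abel-type rearrangement of the moving-comparator telescope. The only cosmetic deviations are that you derive the stability bound from nonexpansiveness of the Euclidean projection rather than from the summed strong-convexity inequalities plus Cauchy--Schwarz, and you bound the consecutive difference of squared distances by exact factoring into an inner product rather than by the reverse triangle inequality; both yield the same constants.
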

In the special case of external regret---$\vx^{(1,\star)} = \vx^{(2,\star)} = \dots = \vx^{(T,\star)}$---\eqref{eq:drvu} recovers the bound for $\ogd$ of~\citet{Syrgkanis15:Fast}. The key takeaway from~\Cref{lemma:drvu} is that the overhead of dynamic regret in~\eqref{eq:drvu} grows with the \emph{first-order} variation of the sequence of comparators.

\begin{proof}[Proof of \Cref{lemma:drvu}]
    First, by $(1/\eta)$-strong convexity of the function $\Psi_x^{(t)}$ (defined in~\eqref{eq:equivOGD}) for any time $t \in \range{T}$, we have that
    \begin{equation}
        \label{eq:sc-1}
        \langle \vx^{(t)}, \vm_x^{(t)} \rangle - \frac{1}{2\eta} \|\vx^{(t)} - \hvx^{(t)} \|_2^2 - \langle \hvx^{(t+1)}, \vm_x^{(t)} \rangle + \frac{1}{2\eta} \| \hvx^{(t+1)} - \hvx^{(t)} \|_2^2 \geq \frac{1}{2\eta} \|\vx^{(t)} - \hvx^{(t+1)} \|_2^2,
    \end{equation}
    where we used~\citep[Lemma 2.8; p. 135]{Shalev-Shwartz12:Online}. Similarly, by $(1/\eta)$-strong convexity of the function $\hat{\Psi}_x^{(t)}$ (defined in~\eqref{eq:equivOGD}) for any time $t \in \range{T}$, we have that for any comparator $\vx^{(t,\star)} \in \cX$,
    \begin{align}
        \langle \hvx^{(t+1)}, \vu_x^{(t)} \rangle - \frac{1}{2\eta} \|\hvx^{(t+1)} - \hvx^{(t)}\|_2^2 - \langle \vx^{(t,\star)}, \vu_x^{(t)} \rangle + &\frac{1}{2\eta} \|\vx^{(t,\star)} - \hvx^{(t)} \|_2^2 \notag \\ 
        \geq &\frac{1}{2\eta} \| \hvx^{(t+1)} - \vx^{(t,\star)}  \|_2^2.\label{eq:sc-2}
    \end{align}
    Thus, adding~\eqref{eq:sc-1} and~\eqref{eq:sc-2},
    \begin{align}
        \langle \vx^{(t,\star)} - \hvx^{(t+1)}, \vu_x^{(t)} \rangle + \langle \hvx^{(t+1)} - \vx^{(t)}, \vm_x^{(t)} \rangle &\leq \frac{1}{2\eta} \left( \| \hvx^{(t)} - \vx^{(t,\star)} \|_2^2 - \| \hvx^{(t+1)} - \vx^{(t,\star)} \|_2^2 \right) \notag \\
        &-\!\!\frac{1}{2\eta} \left( \|\vx^{(t)} - \hvx^{(t)} \|_2^2 + \|\vx^{(t)} - \hvx^{(t+1)} \|_2^2 \right).\label{align:sc-12}
    \end{align}
    We further see that
    \begin{align}
        \langle \vx^{(t,\star)} - \vx^{(t)}, \vu_x^{(t)} \rangle = \!\langle \vx^{(t)} - \hvx^{(t+1)}, \vm_x^{(t)} - \vu_x^{(t)} \rangle &+ \langle \vx^{(t,\star)} - \hvx^{(t+1)}, \vu_x^{(t)} \rangle \notag \\
        &+ \langle \hvx^{(t+1)} - \vx^{(t)}, \vm_x^{(t)} \rangle.\label{eq:inner-eq} 
    \end{align}
    Now the first term on the right-hand side can be upper bounded using the fact that, by~\eqref{eq:sc-1} and~\eqref{eq:sc-2}, 
    \begin{equation*}
    \langle \vx^{(t)} - \hvx^{(t+1)}, \vm_x^{(t)} - \vu_x^{(t)} \rangle \geq \frac{1}{\eta} \|\hvx^{(t+1)} - \vx^{(t)} \|_2^2 \implies \|\hvx^{(t+1)} - \vx^{(t)} \|_2 \leq \eta \| \vm_x^{(t)} - \vu_x^{(t)} \|_2,   
    \end{equation*}
     by Cauchy-Schwarz, in turn implying that $\langle \vx^{(t)} - \hvx^{(t+1)}, \vm_x^{(t)} - \vu_x^{(t)} \rangle \leq \eta \|\vm_x^{(t)} - \vu_x^{(t)} \|_2^2$. Thus, the proof follows by combining this bound with~\eqref{align:sc-12} and \eqref{eq:inner-eq}, along with the fact that
    \begin{align*}
        \sum_{t=1}^T \left( \| \hvx^{(t)} - \vx^{(t,\star)} \|_2^2 - \| \hvx^{(t+1)} - \vx^{(t,\star)} \|_2^2 \right) &\leq\| \hvx^{(1)} - \vx^{(1,\star)} \|_2^2 \\
        &+ \sum_{t=1}^{T-1} \left( \| \hvx^{(t+1)} - \vx^{(t+1,\star)} \|_2^2 - \| \hvx^{(t+1)} - \vx^{(t,\star)} \|_2^2 \right) \\
        &\leq \diam_{\cX}^2 + 2\diam_{\cX} \sum_{t=1}^{T-1} \| \vx^{(t+1, \star)} - \vx^{(t,\star)} \|_2,
    \end{align*}
    where the last bound follows since
    \begin{align*}
        \| \hvx^{(t+1)} - \vx^{(t+1,\star)} \|_2^2 - \| \hvx^{(t+1)} - \vx^{(t,\star)} \|_2^2  &\leq 2 \diam_{\cX} \left|  \| \hvx^{(t+1)} - \vx^{(t+1,\star)} \|_2 - \| \hvx^{(t+1)} - \vx^{(t,\star)} \|_2 \right| \\
        &\leq 2 \diam_{\cX} \| \vx^{(t+1, \star)} - \vx^{(t,\star)} \|_2,
    \end{align*}
    where we recall that $\diam_{\cX}$ denotes the $\ell_2$-diameter of $\cX$.
\end{proof}

As an aside, we remark that assuming that $\vm_x^{(t)} \defeq \Vec{0}$ and $\| \vu_x^{(t)} \|_2 \leq 1$ for any $t \in \range{T}$, \Cref{lemma:drvu} implies that dynamic regret can be upper bounded by $O\left( \sqrt{ (1 + \sum_{t=1}^{T-1} \|\vx^{(t+1,\star)} - \vx^{(t,\star)} \|_2 )T } \right)$, for \emph{any} (bounded)---potentially adversarially selected---sequence of utilities $(\vu_x^{(1)}, \dots, \vu_x^{(T)})$, for $\eta \defeq \sqrt{ \frac{\diam^2_{\cX}}{2T} + \frac{\diam_{\cX} \sum_{t=1}^{T-1} \| \vx^{(t+1,\star)} - \vx^{(t,\star)} \|_2 }{T} }$, which is a well-known result in online optimization~\citep{Zinkevich03:Online}; while that requires setting the learning rate based on the first-order variation of the (optimal) comparators, there are standard techniques that would allow bypassing that assumption.

Next, we provide an extension of~\Cref{lemma:drvu} to the more general $\omd$ algorithm under a broad class of regularizers.

\begin{lemma}[Extension of~\Cref{lemma:drvu} beyond Euclidean regularization]
    \label{lemma:beyondEucl}
    Consider a $1$-strongly convex continuously differentiable regularizer $\phi$ with respect to a norm $\|\cdot\|$ such that (i) $\| \nabla \phi(\vx) \|_* \leq G$ for any $\vx$, and (ii) $\brg{x}{\vx}{\vx'} \leq L \|\vx - \vx'\|$ for any $\vx, \vx'$. Then, for any sequence of utilities $(\vu_x^{(1)}, \dots, \vu_x^{(T)})$ up to time $T \in \N$ the dynamic regret~\eqref{eq:dreg} of $\omd$ with respect to any sequence of comparators $(\vx^{(1,\star)}, \dots, \vx^{(T,\star)}) \in \cX^T$ can be bounded as 
    \begin{align*}
     \frac{\brg{x}{\vx^{(1,\star)}}{\hvx^{(1)}}}{\eta} + \frac{ L + 2G}{\eta} \sum_{t=1}^{T-1} \| \vx^{(t+1, \star)} - \vx^{(t,\star)} \| + &\eta \sum_{t=1}^T \|\vu_x^{(t)} - \vm_x^{(t)} \|_*^2 \\ 
     - &\frac{1}{2\eta} \sum_{t=1}^T \left( \|\vx^{(t)} - \hvx^{(t)} \|^2 + \|\vx^{(t)} - \hvx^{(t+1)} \|^2 \right).
    \end{align*}
\end{lemma}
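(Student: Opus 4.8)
The plan is to mirror the proof of \Cref{lemma:drvu} almost line for line, but to replace every appeal to strong convexity of $\Psi_x^{(t)}, \hat{\Psi}_x^{(t)}$ with the \emph{variational (prox) inequalities} characterizing the two $\omd$ updates in~\eqref{eq:equivOGD}, so that the quantities that telescope appear as Bregman divergences rather than squared norms. Concretely, since $\hvx^{(t+1)}$ maximizes $\hat{\Psi}_x^{(t)}$, its first-order optimality condition combined with the three-point identity for Bregman divergences gives, for the comparator $\vx^{(t,\star)}$,
\[
\langle \vx^{(t,\star)} - \hvx^{(t+1)}, \vu_x^{(t)} \rangle \le \frac{1}{\eta}\left( \brg{x}{\vx^{(t,\star)}}{\hvx^{(t)}} - \brg{x}{\vx^{(t,\star)}}{\hvx^{(t+1)}} - \brg{x}{\hvx^{(t+1)}}{\hvx^{(t)}} \right),
\]
and, since $\vx^{(t)}$ maximizes $\Psi_x^{(t)}$, taking $\hvx^{(t+1)}$ as the comparator yields
\[
\langle \hvx^{(t+1)} - \vx^{(t)}, \vm_x^{(t)} \rangle \le \frac{1}{\eta}\left( \brg{x}{\hvx^{(t+1)}}{\hvx^{(t)}} - \brg{x}{\hvx^{(t+1)}}{\vx^{(t)}} - \brg{x}{\vx^{(t)}}{\hvx^{(t)}} \right).
\]

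Next I would decompose the per-round term exactly as in~\eqref{eq:inner-eq}, writing $\langle \vx^{(t,\star)} - \vx^{(t)}, \vu_x^{(t)}\rangle$ as the sum of the two quantities bounded above plus the residual $\langle \vx^{(t)} - \hvx^{(t+1)}, \vm_x^{(t)} - \vu_x^{(t)}\rangle$. Adding the two displayed inequalities cancels $\brg{x}{\hvx^{(t+1)}}{\hvx^{(t)}}$, and lower bounding the leftover divergences via $1$-strong convexity of $\phi$, namely $\brg{x}{\hvx^{(t+1)}}{\vx^{(t)}} \ge \frac12\|\vx^{(t)} - \hvx^{(t+1)}\|^2$ and $\brg{x}{\vx^{(t)}}{\hvx^{(t)}} \ge \frac12\|\vx^{(t)} - \hvx^{(t)}\|^2$, leaves the telescoping term $\frac{1}{\eta}(\brg{x}{\vx^{(t,\star)}}{\hvx^{(t)}} - \brg{x}{\vx^{(t,\star)}}{\hvx^{(t+1)}})$ minus $\frac{1}{2\eta}(\|\vx^{(t)} - \hvx^{(t)}\|^2 + \|\vx^{(t)} - \hvx^{(t+1)}\|^2)$. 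For the residual I would first establish the stability estimate $\|\hvx^{(t+1)} - \vx^{(t)}\| \le \eta \|\vm_x^{(t)} - \vu_x^{(t)}\|_*$: adding the two optimality conditions and invoking $1$-strong convexity in the form $\langle \nabla\phi(\hvx^{(t+1)}) - \nabla\phi(\vx^{(t)}), \hvx^{(t+1)} - \vx^{(t)}\rangle \ge \|\hvx^{(t+1)} - \vx^{(t)}\|^2$ yields $\langle \vx^{(t)} - \hvx^{(t+1)}, \vm_x^{(t)} - \vu_x^{(t)}\rangle \ge \frac{1}{\eta}\|\hvx^{(t+1)} - \vx^{(t)}\|^2$, and Hölder's inequality then gives the stated stability bound. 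The same Hölder step upgrades the residual to $\langle \vx^{(t)} - \hvx^{(t+1)}, \vm_x^{(t)} - \vu_x^{(t)}\rangle \le \eta\|\vm_x^{(t)} - \vu_x^{(t)}\|_*^2$, exactly as in the Euclidean case.

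The crux, and the only place where assumptions (i) and (ii) enter, is the telescoping of $\sum_{t=1}^T (\brg{x}{\vx^{(t,\star)}}{\hvx^{(t)}} - \brg{x}{\vx^{(t,\star)}}{\hvx^{(t+1)}})$ over a \emph{changing} comparator. In \Cref{lemma:drvu} this is handled by a triangle-inequality bound on differences of squared distances; with a general $\phi$ the squared norm is unavailable, so I would instead reindex the sum as $\brg{x}{\vx^{(1,\star)}}{\hvx^{(1)}} - \brg{x}{\vx^{(T,\star)}}{\hvx^{(T+1)}} + \sum_{t=2}^{T}\left(\brg{x}{\vx^{(t,\star)}}{\hvx^{(t)}} - \brg{x}{\vx^{(t-1,\star)}}{\hvx^{(t)}}\right)$, discard the nonpositive divergence, and control each consecutive difference with the identity $\brg{x}{\vec a}{\vec c} - \brg{x}{\vec b}{\vec c} = \brg{x}{\vec a}{\vec b} + \langle \nabla\phi(\vec b) - \nabla\phi(\vec c), \vec a - \vec b\rangle$. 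Assumption (ii) bounds $\brg{x}{\vx^{(t,\star)}}{\vx^{(t-1,\star)}} \le L\|\vx^{(t,\star)} - \vx^{(t-1,\star)}\|$, while assumption (i) together with Hölder bounds the inner product by $2G\|\vx^{(t,\star)} - \vx^{(t-1,\star)}\|$, producing the factor $L + 2G$ on the first-order path length of the comparators. Summing over $t$ and dividing by $\eta$ then assembles the claimed bound, with leading term $\brg{x}{\vx^{(1,\star)}}{\hvx^{(1)}}/\eta$. I expect this changing-comparator telescoping---and in particular recognizing that the gradient-boundedness hypothesis (i) and the divergence-Lipschitz hypothesis (ii) are precisely what replaces the Euclidean triangle inequality---to be the main technical obstacle; the remaining steps are a faithful transcription of \Cref{lemma:drvu}.
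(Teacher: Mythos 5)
Your proposal is correct and matches the paper's own proof in essentially every step: the paper likewise transcribes \Cref{lemma:drvu} with Bregman prox inequalities and handles the changing-comparator telescoping via the three-point identity \eqref{eq:threepoint}, bounding the divergence term by $L\|\vx^{(t+1,\star)} - \vx^{(t,\star)}\|$ (assumption (ii)) and the gradient inner product by $2G\|\vx^{(t+1,\star)} - \vx^{(t,\star)}\|$ (assumption (i) plus H\"older and the triangle inequality for $\|\cdot\|_*$), yielding the same $L+2G$ factor. Your identity $\brg{x}{\vec{a}}{\vec{c}} - \brg{x}{\vec{b}}{\vec{c}} = \brg{x}{\vec{a}}{\vec{b}} + \langle \nabla\phi(\vec{b}) - \nabla\phi(\vec{c}), \vec{a} - \vec{b}\rangle$ is exactly \eqref{eq:threepoint} rearranged, and the remaining steps (prox inequalities, the stability bound $\|\hvx^{(t+1)} - \vx^{(t)}\| \leq \eta\|\vm_x^{(t)} - \vu_x^{(t)}\|_*$, and the residual bound $\eta\|\vm_x^{(t)} - \vu_x^{(t)}\|_*^2$) are precisely the ``analogous to \Cref{lemma:drvu}'' portion the paper omits.
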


The proof is analogous to that of~\Cref{lemma:drvu}, and relies on the well-known three-point identity for the Bregman divergence:
\begin{equation}
    \label{eq:threepoint}
    \brg{x}{\vx}{\vx'} = \brg{x}{\vx}{\vx''} + \brg{x}{\vx''}{\vx'} - \langle \vx - \vx'', \nabla \phi(\vx') - \nabla \phi(\vx'') \rangle.
\end{equation}
In particular, along with the assumptions of~\Cref{lemma:beyondEucl} imposed on the regularizer $\phi_x$, \eqref{eq:threepoint} implies that the term $\sum_{t=1}^{T-1} \left( \brg{x}{\vx^{(t+1,\star)}}{\hvx^{(t+1)}} - \brg{x}{\vx^{(t,\star)}}{\hvx^{(t+1)}} \right)$ is equal to
\begin{align*}
\sum_{t=1}^{T-1} \left( \brg{x}{\vx^{(t+1,\star)}}{\vx^{(t,\star)}} - \langle \vx^{(t+1,\star)} - \vx^{(t,\star)}, \nabla \phi(\hvx^{(t+1)}) - \nabla \phi(\vx^{(t,\star)}) \rangle \right) \\
\leq (L + 2G) \sum_{t=1}^{T-1} \| \vx^{(t+1,\star)} - \vx^{(t,\star)}\|,
\end{align*}
since $\brg{x}{\vx^{(t+1,\star)}}{\vx^{(t,\star)}} \leq L \| \vx^{(t+1,\star)} - \vx^{(t,\star)}\|$ (by assumption) and 
\begin{align}
    \langle \vx^{(t+1,\star)} - \vx^{(t,\star)}, \nabla \phi(\hvx^{(t+1)}) - &\nabla \phi(\vx^{(t,\star)}) \rangle \notag \\
    &\leq \| \vx^{(t+1,\star)} - \vx^{(t,\star)} \| \| \nabla \phi(\hvx^{(t+1)}) - \nabla \phi(\vx^{(t,\star)}) \|_* \label{align:CauchyS} \\
    &\!\leq \left( \|\nabla \phi(\hvx^{(t+1)})\|_* \!\!+ \|\nabla \phi(\vx^{(t,\star)})\|_* \right) \| \vx^{(t+1,\star)} - \vx^{(t,\star)} \|\label{align:dualnorm} \\
    &\!\leq 2G \|\vx^{(t+1,\star)} - \vx^{(t,\star)}\|,\label{align:assum}
\end{align}
where~\eqref{align:CauchyS} follows from the Cauchy-Schwarz inequality; \eqref{align:dualnorm} uses the triangle inequality for the dual norm $\|\cdot\|_*$; and~\eqref{align:assum} follows from the assumption of~\Cref{lemma:beyondEucl} that $\|\nabla \phi(\cdot)\|_* \leq G$. The rest of the proof of~\Cref{lemma:beyondEucl} is analogous to~\Cref{lemma:drvu}, and it is therefore omitted. One important question here is whether~\Cref{lemma:beyondEucl} can be extended under a broader class of regularizers; as we explain in~\Cref{sec:static}, this is related to improving~\Cref{theorem:Kdreg}.

\subsubsection{Nonnegativity of dynamic regret}

We next proceed with the proof of~\Cref{property:gennonnegative}. For completeness, we first prove the following special case~\citep{Zhang22:No}; the proof of~\Cref{property:gennonnegative} is then analogous.

\begin{property}[Special case of~\Cref{property:gennonnegative}]
    \label{claim:dreg-nonnegative}
    Suppose that $\cZ \ni \vz^{(t,\star)} = (\vx^{(t,\star)}, \vy^{(t,\star))})$ is a Nash equilibrium of the $t$-th game, for any time $t \in \range{T}$. Then, for $\seq_x^{(T)} = (\vx^{(t,\star)})_{1 \leq t \leq T}$ and $\seq_y^{(T)} = (\vy^{(t,\star)})_{1 \leq t \leq T}$,
    \begin{equation*}
        \dreg_x^{(T)}(\seq_x^{(T)}) + \dreg_y^{(T)}(\seq_y^{(T)}) \geq 0.
    \end{equation*}
\end{property}

\begin{proof}
    Let $v^{(t)} \defeq \langle \vx^{(t,\star)}, \mat{A}^{(t)} \vy^{(t,\star)} \rangle$ be the value of the $t$-th game, for some $t \in \range{T}$. Then, we have that $v^{(t)} = \langle \vx^{(t,\star)}, \mat{A}^{(t)} \vy^{(t,\star)} \rangle \leq \langle \vx, \mat{A}^{(t)} \vy^{(t,\star)} \rangle $ for any $\vx \in \cX$, since $\vx^{(t,\star)}$ is a best response to $\vy^{(t,\star)}$; similarly, $v^{(t)} = \langle \vx^{(t,\star)}, \mat{A}^{(t)} \vy^{(t,\star)} \rangle \geq \langle \vx^{(t,\star)}, \mat{A}^{(t)} \vy \rangle$ for any $\vy \in \cY$. Hence, $\langle \vx^{(t)}, \mat{A}^{(t)} \vy^{(t,\star)} \rangle - \langle \vx^{(t,\star)}, \mat{A}^{(t)} \vy^{(t)} \rangle \geq 0$, or equivalently, $\langle \vx^{(t,\star)}, \vu_x^{(t)} \rangle + \langle \vy^{(t,\star)}, \vu_y^{(t)} \rangle \geq 0$. But given that the game is zero-sum, it holds that $\langle \vx^{(t)}, \vu_x^{(t)} \rangle + \langle \vy^{(t)}, \vu_y^{(t)} \rangle = 0$, so the last inequality can be in turn cast as
    \begin{equation*}
        \langle \vx^{(t,\star)}, \vu_x^{(t)} \rangle - \langle \vx^{(t)}, \vu_x^{(t)} \rangle + \langle \vy^{(t,\star)}, \vu_y^{(t)} \rangle - \langle \vy^{(t)}, \vu_y^{(t)} \rangle \geq 0,
    \end{equation*}
    for any $t \in \range{T}$. As a result, summing over all $t \in \range{T}$ we have shown that
    \begin{align*}
        \dreg_x^{(T)}(\vx^{(1,\star)}, \dots, \vx^{(T,\star)}) + &\dreg_y^{(T)}(\vy^{(1,\star)}, \dots, \vy^{(T,\star)}) \\ &= \sum_{t=1}^T \langle \vx^{(t,\star)}, \vu_x^{(t)} \rangle - \langle \vx^{(t)}, \vu_x^{(t)} \rangle + \langle \vy^{(t,\star)}, \vu_y^{(t)} \rangle - \langle \vy^{(t)}, \vu_y^{(t)} \rangle \geq 0.
    \end{align*}
\end{proof}

\nonnegative*

\begin{proof}
Given that $(\vx^{(t,\star)}, \vy^{(t,\star)}) \in \cZ$ is an $\epsilon^{(t)}$-approximate Nash equilibrium of the $t$-th game, it follows that $\langle \vx^{(t,\star)}, \mat{A}^{(t)} \vy^{(t,\star)} \rangle \leq \langle \vx^{(t)}, \mat{A}^{(t)} \vy^{(t,\star)} \rangle + \epsilon_x^{(t)}$ and $\langle \vx^{(t,\star)}, \mat{A}^{(t)} \vy^{(t,\star)} \rangle \geq \langle \vx^{(t,\star)}, \mat{A}^{(t)} \vy^{(t)} \rangle - \epsilon_y^{(t)}$, for some $\epsilon_x^{(t)}, \epsilon_y^{(t)} \leq \epsilon^{(t)}$. Thus, we have that $\langle \vx^{(t)}, \mat{A}^{(t)} \vy^{(t,\star)} \rangle \geq \langle \vx^{(t,\star)}, \mat{A}^{(t)} \vy^{(t)} \rangle - \epsilon_x^{(t)} - \epsilon_y^{(t)}$, or equivalently, $\langle \vx^{(t,\star)}, \vu_x^{(t)} \rangle + \langle \vy^{(t,\star)}, \vu_y^{(t)} \rangle \geq - \epsilon_x^{(t)} - \epsilon_y^{(t)} \geq - 2 \epsilon^{(t)}$. As a result, 
\begin{equation}
    \label{eq:stepdyn}
        \langle \vx^{(t,\star)}, \vu_x^{(t)} \rangle - \langle \vx^{(t)}, \vu_x^{(t)} \rangle + \langle \vy^{(t,\star)}, \vu_y^{(t)} \rangle - \langle \vy^{(t)}, \vu_y^{(t)} \rangle \geq - 2 \epsilon^{(t)},
\end{equation}
for any $t \in \range{T}$, and the statement follows by summing~\eqref{eq:stepdyn} over all $t \in \range{T}$.
\end{proof}

In fact, as we show below (in \Cref{property:minimax}), \Cref{claim:dreg-nonnegative} is a more general consequence of the minimax theorem. In particular, for a nonlinear online learning problem, we define dynamic regret with respect to a sequence of comparators $(\vx^{(1,\star)}, \dots, \vx^{(T,\star)}) \in \cX^T$ as follows:
\begin{equation}
    \label{eq:gen-dreg}
    \dreg_x^{(T)}(\vx^{(1,\star)}, \dots, \vx^{(T,\star)}) \defeq \sum_{t=1}^T \left( u_x^{(t)}(\vx^{(t,\star)}) - u_x^{(t)}(\vx^{(t)}) \right),
\end{equation}
where $u_x^{(1)}, \dots, u_x^{(T)} : \vx \mapsto \R$ are the continuous utility functions observed by the learner, which could be in general nonconcave, and $(\vx^{(t)})_{1 \leq t \leq T}$ is the sequence of strategies produced by the learner; \eqref{eq:gen-dreg} generalizes the notion of dynamic regret~\eqref{eq:dreg} in online linear optimization, that is, when $u_x^{(t)} : \vx \mapsto \langle \vx, \vu_x^{(t)} \rangle $, where $\vu_x^{(t)} \in \R^{d_x}$ for any time $t \in \range{T}$.

\begin{property}
    \label{property:minimax}
    Suppose that $f^{(t)} : \cX \times \cY \to \R$ is a continuous function such that for any $t \in \range{T}$, $$\min_{\vx \in \cX} \max_{\vy \in \cY} f^{(t)}(\vx, \vy) = \max_{\vy \in \cY} \min_{\vx \in \cX} f^{(t)}(\vx, \vy).$$ Let also $\vx^{(t, \star)} \in \argmin_{\vx \in \cX} \max_{\vy \in \cY} f^{(t)}(\vx, \vy)$ and $\vy^{(t, \star)} \in \argmax_{\vy \in \cY} \min_{\vx \in \cX} f^{(t)}(\vx, \vy)$, for any $t \in \range{T}$. Then, for $\seq_x^{(T)} = (\vx^{(t,\star)})_{1 \leq t \leq T}$ and $\seq_y^{(T)} = (\vy^{(t,\star)})_{1 \leq t \leq T}$,
    \begin{equation*}
        \dreg_x^{(T)}(s_x^{(T)}) + \dreg_y^{(T)}(s_y^{(T)}) \geq 0.
    \end{equation*}
\end{property}

\begin{proof}
    By definition of dynamic regret~\eqref{eq:gen-dreg}, it suffices to show that $f^{(t)}(\vx^{(t)}, \vy^{(t,\star)}) \geq f^{(t)}(\vx^{(t,\star)}, \vy^{(t)})$, for any time $t \in \range{T}$. Indeed,
    \begin{align}
        f^{(t)}(\vx^{(t)}, \vy^{(t,\star)}) &\geq \min_{\vx \in \cX } f^{(t)}(\vx, \vy^{(t,\star)}) \label{align:obv1} \\
        &= \max_{\vy \in \cY} \min_{\vx \in \cX} f^{(t)}(\vx, \vy) \label{align:ystardef} \\
        &= \min_{\vx \in \cX} \max_{\vy \in \cY} f^{(t)}(\vx, \vy) \label{align:minimax} \\
        &= \max_{\vy \in \cY} f^{(t)}(\vx^{(t,\star)}, \vy) \label{align:xstardef} \\
        &\geq f^{(t)}(\vx^{(t,\star)}, \vy^{(t)}),\label{align:obv2}
    \end{align}
    where \eqref{align:obv1}~and~\eqref{align:obv2} are obvious; \eqref{align:ystardef}~and~\eqref{align:xstardef} follow from the definition of $\vy^{(t,\star)} \in \cY$ and $\vx^{(t,\star)} \in \cX$, respectively; and~\eqref{align:minimax} holds by assumption. This concludes the proof.
\end{proof}

\subsubsection{Variation of the Nash equilibria}

In our next example, we point out the standard observation that an arbitrarily small change in the entries of the payoff matrix can lead to a substantial deviation in the Nash equilibrium.

\begin{example}
    \label{example:ill-conditioning}
    Consider a $2 \times 2$ (two-player) zero-sum game, where $\cX \defeq \Delta^2, \cY \defeq \Delta^2$, described by the payoff matrix
    \begin{equation}
        \label{eq:mat-ori}
        \mat{A} \defeq 
        \begin{bmatrix}
        2 \delta & 0 \\
        0 & \delta
        \end{bmatrix},
    \end{equation}
    for some $\delta > 0$.
    Then, it is easy to see that the unique Nash equilibrium of this game is such that $\xstar, \ystar \defeq (\frac{1}{3}, \frac{2}{3}) \in \Delta^2$. Suppose now that the original payoff matrix~\eqref{eq:mat-ori} is perturbed to a new matrix
    \begin{equation}
        \label{eq:mat-pert}
        \mat{A}' \defeq 
        \begin{bmatrix}
        \delta & 0 \\
        0 & 2 \delta
        \end{bmatrix}.
    \end{equation}
    The new (unique) Nash equilibrium now reads $\xstar, \ystar \defeq (\frac{2}{3}, \frac{1}{3}) \in \Delta^2$. We conclude that an arbitrarily small deviation in the entries of the payoff matrix can lead to a non-trivial change in the Nash equilibrium.
\end{example}

Next, we leverage the simple observation of the example above to establish~\Cref{prop:approx-exact}, the statement of which is recalled below.

\approxdif*

\begin{proof}
    We consider a sequence of $T \geq 4$ games such that $\cX, \cY \defeq \Delta^2$ and
    \begin{equation*}
        \mat{A}^{(t)} =
        \begin{cases}
     \mat{A} \quad \text{ if } t \mod 2 = 1, \\
      \mat{A}' \quad \text{if } t \mod 2 = 0,\\
    \end{cases}
    \end{equation*}
    where $\mat{A}, \mat{A}'$ are the payoff matrices defined in~\eqref{eq:mat-ori}~and~\eqref{eq:mat-pert}, and are parameterized by $\delta > 0$ (\Cref{example:ill-conditioning}). Then, the exact Nash equilibria read 
    \begin{equation*}
        \vx^{(t,\star)}, \vy^{(t,\star)} =
        \begin{cases}
     (\frac{1}{3}, \frac{2}{3}) \quad \text{if } t \mod 2 = 1, \\
      (\frac{2}{3}, \frac{1}{3}) \quad \text{if } t \mod 2 = 0.\\
    \end{cases}
    \end{equation*}
    As a result, it follows that $\varne^{(T)} \defeq \sum_{t=1}^{T-1} \|\vz^{(t+1,\star)} - \vz^{(t,\star)}\|_2  = \frac{2}{3} (T - 1) \geq \frac{T}{2}$, for $T \geq 4$. In contrast, it is clear that $\epsvarne^{(T)} \leq C \delta T$, which follows by simply considering the sequence of strategies wherein both players are always selecting actions uniformly at random; we recall that $C > 0$ here is the value that parameterizes $\epsvarne^{(T)}$. Thus, taking $\delta \defeq \frac{\delta'}{C T}$, for some arbitrarily small $\delta' > 0$, concludes the proof.
\end{proof}

In the above sequence of games, the variation measure $\varvar^{(T)}$ is in fact also arbitrarily small as $\delta \to 0$; this reflects the fact that, although the exact Nash equilibria change abruptly, the sequence of payoff matrices exhibits small variation. Nevertheless, we next point out a modified sequence of games where $\epsvarne$ remains small even though both $\varvar^{(T)}$ and $\varne^{(T)}$ can be unbounded.

\begin{proposition}
    \label{prop:var}
    For any $T \geq 4$, there is a sequence of games such that $\varne^{(T)}, \varvar^{(T)} = \Omega(T)$ while $\epsvarne^{(T)} \leq \delta$, for any $\delta > 0$.
\end{proposition}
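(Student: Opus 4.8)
The plan is to build on the sequence of games from \Cref{prop:approx-exact} and superimpose a large but \emph{strategically inert} oscillation that inflates $\varvar^{(T)}$ while leaving the Nash structure---and hence both $\varne^{(T)}$ and $\epsvarne^{(T)}$---untouched. Concretely, let $\mat{S}^{(t)}$ denote the alternating payoff matrices $\mat{A}, \mat{A}'$ of \Cref{example:ill-conditioning} (parameterized by $\delta > 0$), and set
\begin{equation*}
    \mat{A}^{(t)} \defeq \mat{S}^{(t)} + (-1)^t\, \vec{1}\vec{1}^\top ,
\end{equation*}
where $\vec{1}\vec{1}^\top \in \R^{2 \times 2}$ is the all-ones matrix. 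Here $\mat{S}^{(t)}$ is the ``strategic'' part responsible for the abrupt change of the exact Nash equilibria, whereas the rank-one term $(-1)^t \vec{1}\vec{1}^\top$ is a constant shift of the bilinear objective that will drive the spectral variation.

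First I would observe that adding any multiple $c\, \vec{1}\vec{1}^\top$ of the all-ones matrix leaves every player's best-response gap invariant on the simplex: since $(c\,\vec{1}\vec{1}^\top)\vy = c\,\vec{1}$ for $\vy \in \Delta^2$ (using $\vec{1}^\top \vy = 1$), the induced utility $\vu_x^{(t)} = -\mat{A}^{(t)} \vy^{(t)}$ differs from $-\mat{S}^{(t)}\vy^{(t)}$ only by an additive constant, which cancels in the best-response gap $\max_{\vx \in \cX} \langle \vx, \vu_x^{(t)} \rangle - \langle \vx^{(t)}, \vu_x^{(t)} \rangle$; the same holds for Player $y$. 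Consequently, the sets of exact and of $\epsilon$-approximate Nash equilibria of $\mat{A}^{(t)}$ coincide with those of $\mat{S}^{(t)}$, and the computation in the proof of \Cref{prop:approx-exact} carries over verbatim: the exact equilibria still alternate between $(\tfrac13, \tfrac23)$ and $(\tfrac23, \tfrac13)$, so $\varne^{(T)} = \tfrac{2}{3}(T-1) = \Omega(T)$, while the uniform profile $\vz^\star = ((\tfrac12,\tfrac12),(\tfrac12,\tfrac12))$ is an $O(\delta)$-approximate Nash equilibrium of every game. Taking the \emph{constant} comparator sequence equal to $\vz^\star$ with $\epsilon^{(t)} = O(\delta)$ then yields $\epsvarne^{(T)} \le C \cdot O(\delta)\, T$.

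Next I would lower-bound $\varvar^{(T)} = \sum_{t=1}^T \|\mat{A}^{(t)} - \bar{\mat{A}}\|_2$. Writing $\bar{\mat{A}} = \bar{\mat{S}} + \bar{c}\, \vec{1}\vec{1}^\top$ with $\bar{c} = \tfrac1T \sum_{t=1}^T (-1)^t$, we have $|\bar{c}| \le 1/T$, so by the reverse triangle inequality each deviation satisfies
\begin{equation*}
    \|\mat{A}^{(t)} - \bar{\mat{A}}\|_2 \ge |(-1)^t - \bar{c}|\, \|\vec{1}\vec{1}^\top\|_2 - \|\mat{S}^{(t)} - \bar{\mat{S}}\|_2 \ge 2\left(1 - \tfrac1T\right) - O(\delta) = \Omega(1),
\end{equation*}
using $\|\vec{1}\vec{1}^\top\|_2 = 2$. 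Summing over $t$ gives $\varvar^{(T)} = \Omega(T)$, with a constant independent of $\delta$.

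Finally, choosing $\delta$ sufficiently small---e.g.\ $\delta \asymp \delta'/(CT)$---makes $\epsvarne^{(T)} \le \delta'$ while the $\Omega(T)$ bounds on $\varne^{(T)}$ and $\varvar^{(T)}$ are unaffected, which completes the argument. The one point requiring genuine care, and really the crux of the construction, is the decoupling claimed in the second paragraph: one must verify that the rank-one all-ones perturbation is spectrally \emph{large} yet strategically \emph{neutral}, so that it contributes fully to $\varvar^{(T)}$ but not at all to either equilibrium-based variation measure.
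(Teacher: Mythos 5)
Your proof is correct and follows essentially the same route as the paper: both take the alternating games of \Cref{prop:approx-exact} and superimpose an all-ones perturbation, which is strategically neutral on the simplex (so the exact and approximate Nash equilibrium sets, hence $\varne^{(T)}$ and $\epsvarne^{(T)}$, are unchanged) yet spectrally large enough to force $\varvar^{(T)} = \Omega(T)$. The only cosmetic difference is that you use an alternating-sign perturbation $(-1)^t \vec{1}\vec{1}^\top$ on every round, whereas the paper adds $\mat{1}_{2\times 2}$ only on odd rounds; your explicit reverse-triangle-inequality bound on $\varvar^{(T)}$ fills in a step the paper merely asserts.
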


\begin{proof}
    We consider a sequence of $T \geq 4$ games such that $\cX, \cY \defeq \Delta^2$ and
    \begin{equation*}
        \mat{A}^{(t)} =
        \begin{cases}
     \mat{A} + \mat{1}_{2 \times 2} & \text{if } t \mod 2 = 1, \\
      \mat{A}' & \text{if } t \mod 2 = 0,\\
    \end{cases}
    \end{equation*}
    where $\mat{A}, \mat{A}'$ are the payoff matrices defined in~\eqref{eq:mat-ori}~and~\eqref{eq:mat-pert}, and $\mat{1}_{2 \times 2}$ denotes the all-ones $2 \times 2$ matrix. Given that $\vx^\top (\mat{A} + \mat{1}_{2 \times 2}) \vy = \vx^\top \mat{A} \vy + 1$, for any $\vx, \vy \in \Delta^2$, it follows that the set of $\epsilon$-approximate Nash equilibria of each game coincides with the set of $\epsilon$-approximate Nash equilibria of the corresponding game in \Cref{prop:approx-exact}. Thus, by the same argument as in \Cref{prop:approx-exact} earlier, we have that $\varne^{(T)} \geq \frac{T}{2}$ and $\epsvarne^{(T)} \leq C \delta T$. It is also clear that $\varvar^{(T)} = \Omega(T)$, concluding the proof.
\end{proof}

In words, the phenomenon identified above occurs because $\varvar^{(T)}$ is affected by perturbations in the payoff matrices that do not strategically alter the game---namely, the addition of the matrix $\mat{1}_{2 \times 2}$.

\subsubsection{Proof of Theorem~\ref{theorem:nonasymptotic}}
\label{appendix:main}

Next, we proceed with the proof of one of our main results, namely \Cref{theorem:nonasymptotic}. The key ingredient is~\Cref{theorem:pathlength} below, which bounds the second-order path length of $\ogd$ in terms of the considered variation measures.
\begin{theorem}
    \label{theorem:pathlength}
    Suppose that both players employ $\ogd$ with learning rate $\eta \leq \frac{1}{4L}$ in a time-varying bilinear saddle-point problem, where $L \defeq \max_{1 \leq t \leq T} \|\mat{A}^{(t)}\|_2$. Then, for any time horizon $T \in \N$,
    \begin{equation*}
         \sum_{t=1}^T \left( \| \vz^{(t)} - \hvz^{(t)}\|_2^2 + \| \vz^{(t)} - \hvz^{(t+1)} \|_2^2 \right) \leq 2 \diam_{\cZ}^2 + 4\eta^2L^2 \norm{\cZ}_2^2 + 4  \diam_{\cZ} \epsvarne^{(T)} + 8 \eta^2 \norm{\cZ}_2^2 \varmat^{(T)}.
    \end{equation*}
\end{theorem}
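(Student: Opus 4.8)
The plan is to apply the dynamic RVU bound of \Cref{lemma:drvu} to the \emph{combined} learner and then pin down the second-order path length by playing the resulting upper bound against the nonnegativity guarantee of \Cref{property:gennonnegative}. Observe first that running \ogd\ separately on $\cX$ and $\cY$ is identical to running a single \ogd\ instance on the product set $\cZ = \cX\times\cY$ with the Euclidean regularizer, since both the Euclidean projection and the quadratic regularizer decompose across the two blocks; the combined utility is $\vu^{(t)} = (\vu_x^{(t)}, \vu_y^{(t)})$, the prediction is $\vm^{(t)} = \vu^{(t-1)}$, and the combined dynamic regret is exactly $\dreg_x^{(T)}(\seq_x^{(T)}) + \dreg_y^{(T)}(\seq_y^{(T)})$. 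Fixing a sequence of $\epsilon^{(t)}$-approximate Nash equilibria $\vz^{(t,\star)} = (\vx^{(t,\star)}, \vy^{(t,\star)})$ as comparators, \Cref{lemma:drvu} applied to this combined learner upper bounds the combined dynamic regret by $\frac{\diam_{\cZ}^2}{2\eta} + \frac{\diam_{\cZ}}{\eta}\sum_{t=1}^{T-1}\|\vz^{(t+1,\star)}-\vz^{(t,\star)}\|_2 + \eta\sum_{t=1}^T\|\vu^{(t)}-\vm^{(t)}\|_2^2 - \frac{1}{2\eta}P$, where $P$ denotes the second-order path length to be controlled. Working with the combined variable directly, rather than summing the two per-player bounds, is what yields the correct constant $\diam_{\cZ}$ in front of the first-order variation (summing blockwise would cost a spurious $\sqrt{2}$).

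The next step is to control the variation-in-utilities term $\sum_{t}\|\vu^{(t)}-\vm^{(t)}\|_2^2$. For $t\geq 2$ the bilinear structure $\vu_x^{(t)} = -\mat{A}^{(t)}\vy^{(t)}$ lets me write $\vu_x^{(t)} - \vu_x^{(t-1)} = -\mat{A}^{(t)}(\vy^{(t)}-\vy^{(t-1)}) - (\mat{A}^{(t)}-\mat{A}^{(t-1)})\vy^{(t-1)}$, and symmetrically for $y$; applying $\|a+b\|_2^2 \leq 2\|a\|_2^2 + 2\|b\|_2^2$ together with $\|\mat{A}^{(t)}\|_2 \leq L$ and $\|\vz^{(t-1)}\|_2 \leq \norm{\cZ}_2$ gives $\|\vu^{(t)}-\vm^{(t)}\|_2^2 \leq 2L^2\|\vz^{(t)}-\vz^{(t-1)}\|_2^2 + 2\norm{\cZ}_2^2\|\mat{A}^{(t)}-\mat{A}^{(t-1)}\|_2^2$, while the $t=1$ term is at most $L^2\norm{\cZ}_2^2$. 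Summing produces the term $2\norm{\cZ}_2^2\varmat^{(T)}$ plus a first-order path length $2L^2\sum_{t\geq 2}\|\vz^{(t)}-\vz^{(t-1)}\|_2^2$. I then convert this first-order path length back into $P$: writing $\vz^{(t)}-\vz^{(t-1)} = (\vz^{(t)}-\hvz^{(t)}) + (\hvz^{(t)}-\vz^{(t-1)})$ and using $\hvz^{(t)} = \hvz^{((t-1)+1)}$, both summands are among the terms defining $P$, so $\sum_{t=2}^T\|\vz^{(t)}-\vz^{(t-1)}\|_2^2 \leq 2P$.

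Combining these estimates, the upper bound on the combined dynamic regret becomes $\frac{\diam_{\cZ}^2}{2\eta} + \frac{\diam_{\cZ}}{\eta}\sum_{t=1}^{T-1}\|\vz^{(t+1,\star)}-\vz^{(t,\star)}\|_2 + \eta L^2\norm{\cZ}_2^2 + 2\eta\norm{\cZ}_2^2\varmat^{(T)} + 4\eta L^2 P - \frac{1}{2\eta}P$. Now I apply \Cref{property:gennonnegative}, which lower bounds the same quantity by $-2\sum_{t=1}^T\epsilon^{(t)}$, and rearrange to isolate $P$. The crucial point is the coefficient of $P$: it equals $\frac{1}{2\eta} - 4\eta L^2$, and the hypothesis $\eta \leq \frac{1}{4L}$ is exactly what guarantees $4\eta L^2 \leq \frac{1}{4\eta}$, so this coefficient is at least $\frac{1}{4\eta} > 0$. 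Multiplying through by $4\eta$ then yields $P \leq 2\diam_{\cZ}^2 + 4\eta^2 L^2\norm{\cZ}_2^2 + 4\diam_{\cZ}\sum_{t=1}^{T-1}\|\vz^{(t+1,\star)}-\vz^{(t,\star)}\|_2 + 8\eta^2\norm{\cZ}_2^2\varmat^{(T)} + 8\eta\sum_{t=1}^T\epsilon^{(t)}$.

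Finally, since $8\eta\sum_t\epsilon^{(t)} = 4\diam_{\cZ}\cdot\tfrac{2\eta}{\diam_{\cZ}}\sum_t\epsilon^{(t)}$, choosing the constant $C$ in the definition of $\epsvarne^{(T)}$ to satisfy $C \geq \frac{2\eta}{\diam_{\cZ}}$ (as fixed in~\eqref{eq:C}) makes the last two variation terms bounded by $4\diam_{\cZ}\bigl(\sum_{t=1}^{T-1}\|\vz^{(t+1,\star)}-\vz^{(t,\star)}\|_2 + C\sum_t\epsilon^{(t)}\bigr)$. Because the whole argument holds for \emph{every} sequence of approximate equilibria while the left-hand side $P$ is comparator-independent, taking the infimum over such sequences replaces this term by $4\diam_{\cZ}\epsvarne^{(T)}$, giving the claimed bound. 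I expect the reindexing in the first-to-second-order path-length conversion, together with the bookkeeping that keeps the coefficient of $P$ positive under $\eta\leq\frac{1}{4L}$, to be the only delicate steps; the remaining manipulations are mechanical.
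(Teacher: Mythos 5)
Your proof is correct and follows essentially the same route as the paper's: the dynamic RVU bound applied to the combined variable $\vz$ (the paper's \Cref{cor:drvu}), the bilinear decomposition bounding $\|\vu^{(t)}-\vm^{(t)}\|_2^2$ by $2L^2\|\vz^{(t)}-\vz^{(t-1)}\|_2^2 + 2\norm{\cZ}_2^2\|\mat{A}^{(t)}-\mat{A}^{(t-1)}\|_2^2$, the conversion $\sum_{t=2}^T\|\vz^{(t)}-\vz^{(t-1)}\|_2^2 \leq 2P$, and the lower bound from \Cref{property:gennonnegative} with the constant $C$ fixed as in~\eqref{eq:C}. The only cosmetic difference is bookkeeping: the paper absorbs the term $2\eta L^2\sum_{t\geq 2}\|\vz^{(t)}-\vz^{(t-1)}\|_2^2$ against half of the negative path-length term to show it is nonpositive under $\eta\leq\frac{1}{4L}$, whereas you collect the net coefficient $\frac{1}{2\eta}-4\eta L^2\geq\frac{1}{4\eta}$ of $P$ directly---algebraically the same step.
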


\begin{proof}[Proof of~\Cref{theorem:pathlength}]
    First, for any $t \geq 2$ we have that $\| \vu_x^{(t)} - \vm_x^{(t)}\|_2^2$ is equal to
    \begin{align}
         \| \mat{A}^{(t)} \vy^{(t)} - \mat{A}^{(t-1)} \vy^{(t-1)}\|_2^2 &\leq 2 \|\mat{A}^{(t)} ( \vy^{(t)} - \vy^{(t-1)} ) \|_2^2 + 2 \| ( \mat{A}^{(t)} - \mat{A}^{(t-1)} ) \vy^{(t-1)}\|_2^2 \label{eq:Young-triangle} \\
        &\leq 2 \|\mat{A}^{(t)}\|_2^2 \|\vy^{(t)} - \vy^{(t-1)}\|_2^2 + 2 \|\mat{A}^{(t)} - \mat{A}^{(t-1)} \|_2^2 \|\vy^{(t-1)}\|_2^2 \label{eq:operator} \\
        &\leq 2 L^2 \|\vy^{(t)} - \vy^{(t-1)}\|_2^2 + 2 \norm{\cY}^2_2 \|\mat{A}^{(t)} - \mat{A}^{(t-1)} \|_2^2, \label{eq:upbounds}
    \end{align}
    where~\eqref{eq:Young-triangle} uses the triangle inequality for the norm $\|\cdot\|_2$ along with the inequality $2 a b \leq a^2 + b^2$ for any $a, b \in \R$; \eqref{eq:operator} follows from the definition of the operator norm; and \eqref{eq:upbounds} uses the assumption that $\|\mat{A}^{(t)}\|_2 \leq L$ and $\|\vy\|_2 \leq \norm{\cY}_2$ for any $\vy \in \cY$. A similar derivaiton shows that for $t \geq 2$,
    \begin{equation}
        \label{eq:betaterm}
        \|\vu_y^{(t)} - \vm_y^{(t)} \|_2^2 \leq 2 L^2 \|\vx^{(t)} - \vx^{(t-1)} \|_2^2 + 2 \norm{\cX}_2^2 \|\mat{A}^{(t)} - \mat{A}^{(t-1)} \|_2^2.
    \end{equation}
    Further, for $t = 1$ we have that $\|\vu_x^{(1)} - \vm_x^{(1)}\|_2 = \|\vu_x^{(1)}\|_2 = \|- \mat{A}^{(1)} \vy^{(1)}\|_2 \leq L \norm{\cY}_2$, and $\|\vu_y^{(1)} - \vm_y^{(1)}\|_2 = \|\vu_y^{(1)}\|_2 = \|(\mat{A}^{(1)})^\top \vx^{(1)}\|_2 \leq L \norm{\cX}_2$. Next, we will use the following simple corollary, which follows similarly to~\Cref{lemma:drvu}.
    \begin{corollary}
        \label{cor:drvu}
    For any sequence $\seq_z^{(T)} \defeq (\vz^{(t,\star)})_{1 \leq t \leq T}$, the dynamic regret $\dreg_z^{(T)}(\seq_z^{(T)})  \defeq \dreg_x^{(T)}(\seq_x^{(T)}) + \dreg^{(T)}_y(\seq_y^{(T)}) $ can be bounded by
    \begin{align*}
        \frac{\diam_{\cZ}^2}{2\eta} + \frac{\diam_{\cZ}}{\eta} \sum_{t=1}^{T-1} \| \vz^{(t+1, \star)} - \vz^{(t,\star)} \|_2 + &\eta \sum_{t=1}^T \|\vu_z^{(t)} - \vm_z^{(t)} \|_2^2 \\
        &-\frac{1}{2\eta} \sum_{t=1}^T \left( \|\vz^{(t)} - \hvz^{(t)} \|_2^2 + \|\vz^{(t)} - \hvz^{(t+1)} \|_2^2 \right),
    \end{align*}
    where $\vm_z^{(t)} \defeq (\vm_x^{(t)}, \vm_y^{(t)})$ and $\vu_z^{(t)} \defeq (\vu_x^{(t)}, \vu_y^{(t)})$ for any $t \in \range{T}$.
    \end{corollary}
    As a result, combining~\eqref{eq:betaterm} and~\eqref{eq:upbounds} with~\Cref{cor:drvu} applied for the dynamic regret of both players with respect to the sequence of comparators $((\vx^{(t,\star)}, \vy^{(t,\star)}))_{1 \leq t \leq T}$ yields that $\dreg_x^{(T)}(\vx^{(1,\star)}, \dots, \vx^{(T,\star)}) + \dreg_y^{(T)}(\vy^{(1,\star)}, \dots, \vy^{(T,\star)})$ is upper bounded by
    \begin{align*}
        \frac{\diam^2_{\cZ}}{2 \eta} + \eta L^2 \norm{\cZ}_2^2 + \frac{  \diam_{\cZ}}{\eta} \sum_{t=1}^{T-1} \|\vz^{(t+1, \star)} - \vz^{(t,\star)} \|_2 &+ 2 \eta \norm{\cZ}_2^2 \varmat^{(T)} \\
        &- \frac{1}{4\eta} \sum_{t=1}^T \left( \| \vz^{(t)} - \hvz^{(t)}\|_2^2 + \| \vz^{(t)} - \hvz^{(t+1)} \|_2^2 \right),
    \end{align*}
    where we used the fact that
    \begin{align*}
        2 \eta L^2 \sum_{t=2}^T \|\vz^{(t)} - \vz^{(t-1)} \|_2^2 - \frac{1}{4\eta} \sum_{t=1}^T &\left( \| \vz^{(t)} - \hvz^{(t)}\|_2^2 + \| \vz^{(t)} - \hvz^{(t+1)} \|_2^2 \right) \\ &\leq \left( 2 \eta L^2 - \frac{1}{8\eta} \right) \sum_{t=2}^T \|\vz^{(t)} - \vz^{(t-1)} \|_2^2 \leq 0,
    \end{align*}
    for any leaning rate $\eta \leq \frac{1}{4L}$. Finally, using the fact that 
    $$\dreg_x^{(T)}(\vx^{(1,\star)}, \dots, \vx^{(T,\star)}) + \dreg_y^{(T)}(\vy^{(1,\star)}, \dots, \vy^{(T,\star)}) \geq -2 \sum_{t=1}^T \epsilon^{(t)}$$ 
    for a suitable sequence of $\epsilon^{(t)}$-approximate Nash equilibria (\Cref{property:gennonnegative})---one that attains the variation measure $\epsvarne^{(T)}$---yields that
    \begin{equation*}
        0 \leq \frac{\diam^2_{\cZ}}{2 \eta} + \eta L^2 \norm{\cZ}_2^2 + \frac{\diam_{\cZ}}{\eta} \epsvarne^{(T)} + 2 \eta \norm{\cZ}_2^2 \varmat^{(T)}
        - \frac{1}{4\eta} \sum_{t=1}^T \left( \| \vz^{(t)} - \hvz^{(t)}\|_2^2 + \| \vz^{(t)} - \hvz^{(t+1)} \|_2^2 \right), 
    \end{equation*}
    where in the above derivation it suffices if the parameter $C$ of $\epsvarne^{(T)}$ is such that 
    \begin{equation}
        \label{eq:C}
        2 \leq \frac{\diam_{\cZ}}{\eta} C \iff C = C(\eta) \geq \frac{2\eta}{\diam_{\cZ}}. 
    \end{equation}
    Thus, rearranging the last displayed inequality concludes the proof.
\end{proof}

Next, we refine this theorem in time-varying games in which the deviation of the payoff matrices is bounded by the deviation of the players' strategies, in the following formal sense.

\begin{corollary}
    \label{cor:pathlength-detailed}
Suppose that both players employ $\ogd$ with learning rate $\eta \leq \min \left\{ \frac{1}{4L}, \frac{1}{8W \norm{\cZ}} \right\} $ in a time-varying bilinear saddle-point problem, where $L \defeq \max_{1 \leq t \leq T} \|\mat{A}^{(t)}\|_2$ and $\varmat^{(T)} \leq W^2 \sum_{t=1}^{T-1} \| \vz^{(t+1)} - \vz^{(t)} \|_2^2$, for some parameter $W \in \R_{> 0}$. Then, for any time horizon $T \in \N$,
    \begin{equation*}
    \sum_{t=1}^T \left( \| \vz^{(t)} - \hvz^{(t)}\|_2^2 + \| \hvz^{(t)} - \hvz^{(t+1)} \|_2^2 \right) \leq 4 \diam_{\cZ}^2 + 8 \eta^2L^2 \norm{\cZ}_2^2 + 8 \diam_{\cZ} \varne^{(T)}.
    \end{equation*}
\end{corollary}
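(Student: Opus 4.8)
The plan is to follow the proof of \Cref{theorem:pathlength} almost verbatim, the only new ingredient being the structural hypothesis $\varmat^{(T)} \le W^2 \sum_{t=1}^{T-1}\|\vz^{(t+1)} - \vz^{(t)}\|_2^2$, which lets us fold the matrix-variation term back into the (negative) second-order path length instead of carrying it on the right-hand side. Concretely, I would start from the dynamic RVU bound of \Cref{cor:drvu}, applied to the sum of both players' dynamic regrets against a sequence of \emph{exact} Nash equilibria $(\vz^{(t,\star)})_{1\le t \le T}$, and bound the prediction-error term exactly as in \eqref{eq:upbounds} and \eqref{eq:betaterm}, namely
\begin{equation*}
\eta \sum_{t=1}^T \|\vu_z^{(t)} - \vm_z^{(t)}\|_2^2 \le \eta L^2 \norm{\cZ}_2^2 + 2\eta L^2 \sum_{t=2}^T \|\vz^{(t)} - \vz^{(t-1)}\|_2^2 + 2\eta \norm{\cZ}_2^2 \varmat^{(T)}.
\end{equation*}

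Next I would substitute the hypothesis $\varmat^{(T)} \le W^2 \sum_{t=2}^T \|\vz^{(t)} - \vz^{(t-1)}\|_2^2$ into the last term, so that the prediction error is controlled entirely by $\eta L^2 \norm{\cZ}_2^2$ together with a multiple of $\sum_{t=2}^T \|\vz^{(t)} - \vz^{(t-1)}\|_2^2$ carrying the coefficient $2\eta(L^2 + W^2\norm{\cZ}_2^2)$. Using the elementary bound $\sum_{t=2}^T \|\vz^{(t)} - \vz^{(t-1)}\|_2^2 \le 2\sum_{t=1}^T(\|\vz^{(t)} - \hvz^{(t)}\|_2^2 + \|\vz^{(t)} - \hvz^{(t+1)}\|_2^2)$ (triangle inequality followed by $2ab \le a^2+b^2$, exactly as in \Cref{theorem:pathlength}), this contribution becomes $4\eta(L^2 + W^2\norm{\cZ}_2^2)$ times the second-order path length, which can now be played off against the negative $-\tfrac{1}{2\eta}$ coefficient furnished by \Cref{cor:drvu}.

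The crux, and the step I expect to be the main (if elementary) obstacle, is the constant bookkeeping that forces the slightly smaller learning rate and the doubling of all constants. The learning-rate constraints give $4\eta L^2 \le \tfrac{1}{4\eta}$ (from $\eta \le \tfrac{1}{4L}$) and $4\eta W^2 \norm{\cZ}_2^2 \le \tfrac{1}{16\eta}$ (from $\eta \le \tfrac{1}{8W\norm{\cZ}_2}$), so the combined positive coefficient on the path length is at most $\tfrac{5}{16\eta}$; subtracting it from $\tfrac{1}{2\eta}$ leaves a surplus of at least $\tfrac{3}{16\eta} \ge \tfrac{1}{8\eta}$ of the negative path-length term. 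One must check that \emph{both} the prediction-mismatch term and the freshly folded-in matrix-variation term fit inside the available negative budget simultaneously while leaving a strictly positive fraction to rearrange; this is why, unlike \Cref{theorem:pathlength} (which retains $\tfrac{1}{4\eta}$), here we can retain only $\tfrac{1}{8\eta}$, and the final constants correspondingly double.

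Finally, invoking nonnegativity of dynamic regret against exact Nash equilibria (\Cref{claim:dreg-nonnegative}, i.e.\ the special case of \Cref{property:gennonnegative} with $\epsilon^{(t)} = 0$) gives $\dreg_x^{(T)}(\seq_x^{(T)}) + \dreg_y^{(T)}(\seq_y^{(T)}) \ge 0$, and choosing the comparator sequence attaining the infimum that defines $\varne^{(T)}$ turns the first-order comparator-variation term from \Cref{cor:drvu} into $\tfrac{\diam_{\cZ}}{\eta}\varne^{(T)}$. Combining these ingredients yields
\begin{equation*}
0 \le \frac{\diam_{\cZ}^2}{2\eta} + \eta L^2 \norm{\cZ}_2^2 + \frac{\diam_{\cZ}}{\eta}\varne^{(T)} - \frac{1}{8\eta}\sum_{t=1}^T\left(\|\vz^{(t)} - \hvz^{(t)}\|_2^2 + \|\vz^{(t)} - \hvz^{(t+1)}\|_2^2\right),
\end{equation*}
and multiplying through by $8\eta$ and rearranging gives the claimed second-order path-length bound. (The parameter condition \eqref{eq:C} on $C$ plays no role here, since we use exact equilibria and hence $\varne^{(T)}$ in place of $\epsvarne^{(T)}$.)
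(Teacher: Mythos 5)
Your proposal is correct and follows essentially the same route as the paper: run the \Cref{theorem:pathlength} argument with exact Nash equilibria as comparators, fold the matrix-variation term into the players' second-order path length via the hypothesis $\varmat^{(T)} \leq W^2 \sum_{t=1}^{T-1}\|\vz^{(t+1)}-\vz^{(t)}\|_2^2$, and use the two learning-rate conditions to absorb both positive path-length contributions while retaining a $-\tfrac{1}{8\eta}$ budget before rearranging. The only (immaterial) difference is bookkeeping: the paper absorbs the $L^2$-term and the $W^2$-term sequentially (each against a separate $\tfrac{1}{8\eta}$ slice of the budget), whereas you absorb them jointly with coefficient $\tfrac{5}{16\eta}$, which leaves a slightly larger surplus $\tfrac{3}{16\eta} \geq \tfrac{1}{8\eta}$ and yields the same final constants.
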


\begin{proof}
    Following the proof of~\Cref{theorem:pathlength}, we have that for any $\eta \leq \frac{1}{4L}$,
    \begin{equation*}
        0 \leq \frac{\diam^2_{\cZ}}{2 \eta} + \eta L^2 \norm{\cZ}_2^2 + \frac{\diam_{\cZ}}{\eta} \varne^{(T)} + 2 \eta \norm{\cZ}_2^2 \varmat^{(T)}
        - \frac{1}{4\eta} \sum_{t=1}^T \left( \| \vz^{(t)} - \hvz^{(t)}\|_2^2 + \| \vz^{(t)} - \hvz^{(t+1)} \|_2^2 \right). 
    \end{equation*}
    Further, for $\eta \leq \frac{1}{8W \norm{\cZ}_2}$,
    \begin{align*}
        2\eta \norm{\cZ}_2^2 \varmat^{(T)} - \frac{1}{8\eta} \sum_{t=1}^T &\left( \| \vz^{(t)} - \hvz^{(t)}\|_2^2 + \| \vz^{(t)} - \hvz^{(t+1)} \|_2^2 \right)& \\ \leq &\left( 2 \eta \norm{\cZ}_2^2 W^2 - \frac{1}{16\eta} \right) \sum_{t=1}^{T-1} \|\vz^{(t+1)} - \vz^{(t)} \|_2^2 \leq 0.
    \end{align*}
    Thus, we have shown that
    \begin{equation*}
        0 \leq \frac{\diam^2_{\cZ}}{2 \eta} + \eta L^2 \norm{\cZ}_2^2 + \frac{\diam_{\cZ}}{\eta} \varne^{(T)}
        - \frac{1}{8 \eta} \sum_{t=1}^T \left( \| \vz^{(t)} - \hvz^{(t)}\|_2^2 + \| \vz^{(t)} - \hvz^{(t+1)} \|_2^2 \right),
    \end{equation*}
    and rearranging concludes the proof.
\end{proof}
Thus, in such time-varying games it is the first-order variation term, $\varne^{(T)}$, that will drive our convergence bounds. 

Now before proving~\Cref{theorem:nonasymptotic}, we state the connection between the equilibrium gap and the deviation of the players' strategies $\left(\| \vz^{(t)} - \hvz^{(t)} \|_2 + \|\vz^{(t)} - \hvz^{(t+1)} \|_2 \right)$. In particular, the following claim can be extracted by~\citep[Claim A.14]{Anagnostides22:Last}. (We caution that we use a slightly different indexing for the secondary sequence $(\hvx_i^{(t)})$ in the definition of~$\omd$~\eqref{eq:equivOGD} compared to~\citep{Anagnostides22:Last}.)

\begin{claim}
    \label{claim:BR-gap}
    Suppose that the sequences $(\vx^{(t)})_{1 \leq t \leq T}$ and $(\hvx^{(t)})_{1 \leq t \leq T+1}$ are produced by~$\omd$ under a $G$-smooth regularizer $1$-strongly convex with respect to a norm $\|\cdot\|$. Then, for any time $t \in \range{T}$ and any $\vx \in \cX$,
    \begin{equation*}
        \langle \vx^{(t)}, \vu_x^{(t)} \rangle \geq \langle \vx, \vu_x^{(t)} \rangle - \frac{G \diam_{\cX}}{\eta} \|\hvx^{(t+1)} - \hvx^{(t)} \| - \|\vu_x^{(t)}\|_* \|\vx^{(t)} - \hvx^{(t+1)} \|.
    \end{equation*}
\end{claim}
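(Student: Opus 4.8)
The plan is to decompose the quantity $\langle \vx^{(t)}, \vu_x^{(t)} \rangle$ through the auxiliary iterate $\hvx^{(t+1)}$ and to control each resulting piece separately. First I would write
\[
\langle \vx^{(t)}, \vu_x^{(t)} \rangle = \langle \hvx^{(t+1)}, \vu_x^{(t)} \rangle + \langle \vx^{(t)} - \hvx^{(t+1)}, \vu_x^{(t)} \rangle,
\]
and observe that the second term is bounded below by $-\|\vu_x^{(t)}\|_* \|\vx^{(t)} - \hvx^{(t+1)}\|$ via Hölder's (Cauchy--Schwarz) inequality for the dual pairing. This already accounts for the last error term in the statement, so the remaining task is to lower bound $\langle \hvx^{(t+1)}, \vu_x^{(t)} \rangle$ by $\langle \vx, \vu_x^{(t)} \rangle$ up to the first error term.

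For the main step I would invoke the first-order optimality condition for the maximizer $\hvx^{(t+1)}$ of the concave program $\hvx \mapsto \langle \hvx, \vu_x^{(t)} \rangle - \frac{1}{\eta}\brg{x}{\hvx}{\hvx^{(t)}}$ from~\eqref{eq:equivOGD} over the convex set $\cX$. Since $\nabla_{\hvx}\brg{x}{\hvx}{\hvx^{(t)}} = \nabla\phi_x(\hvx) - \nabla\phi_x(\hvx^{(t)})$, this condition reads, for every $\vx \in \cX$,
\[
\Big\langle \vu_x^{(t)} - \tfrac{1}{\eta}\big(\nabla\phi_x(\hvx^{(t+1)}) - \nabla\phi_x(\hvx^{(t)})\big),\ \vx - \hvx^{(t+1)} \Big\rangle \le 0,
\]
which rearranges into
\[
\langle \hvx^{(t+1)}, \vu_x^{(t)} \rangle \ge \langle \vx, \vu_x^{(t)} \rangle - \tfrac{1}{\eta}\big\langle \nabla\phi_x(\hvx^{(t+1)}) - \nabla\phi_x(\hvx^{(t)}),\ \vx - \hvx^{(t+1)} \big\rangle.
\]
It then remains to bound the trailing inner product: applying Cauchy--Schwarz in the dual pairing, the $G$-smoothness of $\phi_x$ (i.e.\ $\|\nabla\phi_x(a)-\nabla\phi_x(b)\|_* \le G\|a-b\|$), and the crude estimate $\|\vx - \hvx^{(t+1)}\| \le \diam_{\cX}$ together yield the upper bound $G\diam_{\cX}\|\hvx^{(t+1)} - \hvx^{(t)}\|$. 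Substituting back gives $\langle \hvx^{(t+1)}, \vu_x^{(t)} \rangle \ge \langle \vx, \vu_x^{(t)} \rangle - \frac{G\diam_{\cX}}{\eta}\|\hvx^{(t+1)} - \hvx^{(t)}\|$, and combining this with the initial decomposition closes the argument.

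The only delicate point, rather than a genuine obstacle, is getting the signs and the orientation of the variational inequality correct: because $\hvx^{(t+1)}$ \emph{maximizes} a concave objective over $\cX$, the gradient of that objective is an ascent direction blocked at the boundary, and one must track carefully that this delivers a \emph{lower} bound on $\langle \hvx^{(t+1)}, \vu_x^{(t)} \rangle$ rather than an upper bound. I would also emphasize that the relevant reading of "$G$-smooth" is the Lipschitz-gradient condition $\|\nabla\phi_x(a)-\nabla\phi_x(b)\|_* \le G\|a-b\|$, which is precisely what produces the factor $\|\hvx^{(t+1)} - \hvx^{(t)}\|$ (in contrast to the gradient-boundedness hypothesis used in~\Cref{lemma:beyondEucl}); every other inequality invoked is a routine convex-analytic estimate.
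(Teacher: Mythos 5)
Your proof is correct. It is worth noting that the paper itself does not prove \Cref{claim:BR-gap} inline at all---it merely states that the claim ``can be extracted'' from Claim A.14 of \citet{Anagnostides22:Last}---so your argument serves as a genuinely self-contained derivation rather than a reconstruction of an in-paper proof. Your route is the natural one: the decomposition through $\hvx^{(t+1)}$ handles the term $\|\vu_x^{(t)}\|_* \|\vx^{(t)} - \hvx^{(t+1)}\|$ by H\"older's inequality, and the main term follows from the first-order optimality (variational inequality) condition of the proximal step defining $\hvx^{(t+1)}$ in~\eqref{eq:equivOGD}, combined with H\"older, the Lipschitz-gradient reading of $G$-smoothness (which matches the paper's definition in \Cref{appendix:proofs-4}), and the crude diameter bound $\|\vx - \hvx^{(t+1)}\| \leq \diam_{\cX}$. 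All signs in the variational inequality are oriented correctly (for a maximizer $\hvx^{(t+1)}$ of a concave objective over $\cX$, one has $\langle \nabla g(\hvx^{(t+1)}), \vx - \hvx^{(t+1)}\rangle \leq 0$ for all $\vx \in \cX$), and it is a nice feature of your argument that only the optimality of the secondary sequence $\hvx^{(t+1)}$ is ever used---the prediction step producing $\vx^{(t)}$ plays no role, which is consistent with the claim holding for arbitrary $\vm_x^{(t)}$. The only cosmetic caveat, inherited from the paper rather than introduced by you, is that $\diam_{\cX}$ must be read as the diameter with respect to the norm $\|\cdot\|$ of the statement rather than the $\ell_2$-diameter fixed in the preliminaries.
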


We are now ready to prove \Cref{theorem:nonasymptotic}, the precise version of which is stated below.

\begin{theorem}[Detailed version of~\Cref{theorem:nonasymptotic}]
    \label{theorem:nonasymptotic-detailed}
    Suppose that both players employ $\ogd$ with learning rate $\eta = \frac{1}{4L}$ in a time-varying bilinear saddle-point problem, where $L \defeq \max_{1 \leq t \leq T} \|\mat{A}^{(t)}\|_2$. Then,
    \begin{equation*}
         \sum_{t=1}^T \left( \eg^{(t)}(\vz^{(t)}) \right)^2 \leq 2 L^2 (4 \diam_{\cZ} + \norm{\cZ}_2)^2 \left( 2 \diam_{\cZ}^2 + 4\eta^2L^2 \norm{\cZ}_2^2 + 4  \diam_{\cZ} \epsvarne^{(T)} + 8 \eta^2 \norm{\cZ}_2^2 \varmat^{(T)} \right),
    \end{equation*}
    where $(\vz^{(t)})_{1 \leq t \leq T}$ is the sequence of joint strategy profiles produced by $\ogd$.
\end{theorem}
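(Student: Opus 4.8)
The plan is to combine the second-order path-length bound of~\Cref{theorem:pathlength} with the pointwise relationship between the Nash gap and the iterate displacements supplied by~\Cref{claim:BR-gap}. First I would instantiate~\Cref{claim:BR-gap} with the squared Euclidean regularizer $\phi_x(\vx) = \frac{1}{2}\|\vx\|_2^2$, which is $1$-strongly convex and $1$-smooth with respect to $\|\cdot\|_2$, so that $G = 1$. Since the displacement terms on the right-hand side do not depend on the comparator, taking the maximum over $\vx \in \cX$ converts the inequality into a bound on the best-response gap:
\begin{equation*}
\brgp_x^{(t)}(\vx^{(t)}) \leq \frac{\diam_{\cX}}{\eta}\|\hvx^{(t+1)} - \hvx^{(t)}\|_2 + \|\vu_x^{(t)}\|_2\,\|\vx^{(t)} - \hvx^{(t+1)}\|_2.
\end{equation*}

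Next I would plug in $\eta = \frac{1}{4L}$, use the crude bound $\|\vu_x^{(t)}\|_2 = \|\mat{A}^{(t)}\vy^{(t)}\|_2 \leq L\norm{\cY}_2 \leq L\norm{\cZ}_2$, and apply the triangle inequality $\|\hvx^{(t+1)} - \hvx^{(t)}\|_2 \leq \|\vx^{(t)} - \hvx^{(t+1)}\|_2 + \|\vx^{(t)} - \hvx^{(t)}\|_2$. Using the monotonicity $\diam_{\cX} \leq \diam_{\cZ}$ to absorb every coefficient into the single factor $L(4\diam_{\cZ} + \norm{\cZ}_2)$, this yields
\begin{equation*}
\brgp_x^{(t)}(\vx^{(t)}) \leq L(4\diam_{\cZ} + \norm{\cZ}_2)\bigl(\|\vx^{(t)} - \hvx^{(t)}\|_2 + \|\vx^{(t)} - \hvx^{(t+1)}\|_2\bigr),
\end{equation*}
together with the symmetric estimate for Player $y$. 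I would then square both sides, use $(a+b)^2 \leq 2(a^2+b^2)$, and take the maximum over the two players via $\max\{a,b\}^2 \leq a^2 + b^2$; crucially, because the squared Euclidean norm splits over the product $\cZ = \cX \times \cY$, the two players' terms recombine as $\|\vz^{(t)} - \hvz^{(t)}\|_2^2 = \|\vx^{(t)} - \hvx^{(t)}\|_2^2 + \|\vy^{(t)} - \hvy^{(t)}\|_2^2$ (and likewise for the $(t+1)$ term), giving the per-round inequality
\begin{equation*}
\bigl(\eg^{(t)}(\vz^{(t)})\bigr)^2 \leq 2L^2(4\diam_{\cZ} + \norm{\cZ}_2)^2\bigl(\|\vz^{(t)} - \hvz^{(t)}\|_2^2 + \|\vz^{(t)} - \hvz^{(t+1)}\|_2^2\bigr).
\end{equation*}
Summing over $t \in \range{T}$ and invoking~\Cref{theorem:pathlength} to bound the resulting second-order path length produces exactly the claimed estimate.

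The argument is essentially mechanical once the two ingredients are in place, so I do not anticipate a deep obstacle; the only delicate point is the bookkeeping of constants. Specifically, one must verify that both the $\diam_{\cX}/\eta = 4L\diam_{\cX}$ coefficient and the $\|\vu_x^{(t)}\|_2$ coefficient collapse into the single factor $L(4\diam_{\cZ} + \norm{\cZ}_2)$ (using $\diam_{\cX} \leq \diam_{\cZ}$ and $\norm{\cY}_2 \leq \norm{\cZ}_2$, and noting that the repeated $\|\vx^{(t)} - \hvx^{(t+1)}\|_2$ term picks up $L\norm{\cZ}_2 + 4L\diam_{\cX} \leq L(4\diam_{\cZ} + \norm{\cZ}_2)$), and that the product-space decomposition of the Euclidean norm lets the two players' path-length terms recombine without losing any constant factor.
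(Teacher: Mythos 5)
Your proposal is correct and follows essentially the same route as the paper's own proof: instantiating \Cref{claim:BR-gap} with the Euclidean regularizer ($G=1$), bounding $\|\vu_x^{(t)}\|_2 \leq L\norm{\cZ}_2$, applying the triangle inequality to collapse all coefficients into $L(4\diam_{\cZ} + \norm{\cZ}_2)$, squaring via $\max\{a,b\}^2 \leq a^2+b^2$ and $(a+b)^2 \leq 2(a^2+b^2)$, recombining the two players' terms through the product-space decomposition of the squared Euclidean norm, and finally invoking \Cref{theorem:pathlength}. The constant bookkeeping you flag as the only delicate point works out exactly as you describe, and reproduces the stated bound.
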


\begin{proof}
    Let us fix $t \in \range{T}$. For convenience, we denote by $\brgp_x^{(t)}(\vx^{(t)}) \defeq \max_{\vx \in \cX} \{ \langle \vx, \vu_x^{(t)} \rangle \} - \langle \vx^{(t)}, \vu_x^{(t)} \rangle $, the best response gap of Player's $x$ strategy $\vx^{(t)} \in \cX$, and similarly for $\brgp_y^{(t)}(\vy^{(t)})$. By definition, it holds that $\eg^{(t)}(\vz^{(t)}) \defeq \max\{ \brgp_x^{(t)}(\vx^{(t)}), \brgp_y^{(t)}(\vy^{(t)}) \}$. By~\Cref{claim:BR-gap}, we have that
    \begin{align}
        \brgp_x^{(t)}(\vx^{(t)}) &\leq \frac{\diam_{\cX}}{\eta} \|\hvx^{(t+1)} - \hvx^{(t)} \|_2 + \|\vu_x^{(t)} \|_2 \|\vx^{(t)} - \hvx^{(t+1)}\|_2 \label{align:claimbr} \\
        &\leq 4L \diam_{\cX} \|\hvx^{(t+1)} - \hvx^{(t)} \|_2 + L \norm{\cY}_2 \|\vx^{(t)} - \hvx^{(t+1)}\|_2 \label{align:eta} \\
        &\leq L \left( 4 \diam_{\cZ} + \norm{\cZ}_2 \right) \left( \|\vx^{(t)} - \hvx^{(t)} \|_2 + \|\vx^{(t)} - \hvx^{(t+1)}\|_2 \right),\label{align:hattri}
    \end{align}
    where~\eqref{align:claimbr} follows from~\Cref{claim:BR-gap} for $G = 1$ (since the squared Euclidean regularizer $\phi_x : \vx \mapsto \frac{1}{2} \|\vx\|_2^2$) is trivially $1$-smooth; \eqref{align:eta} uses the fact that $\eta \defeq \frac{1}{4L}$ and $\|\vu_x^{(t)} \|_2 = \|-\mat{A}^{(t)} \vy^{(t)}\|_2 \leq L \norm{\cY}$; and~\eqref{align:hattri} follows from the triangle inequality. A similar derivation shows that
    \begin{equation}
        \label{eq:secbr}
        \brgp_y^{(t)}(\vy^{(t)}) \leq L (4 \diam_{\cZ} + \norm{\cZ}_2 ) \left( \|\vy^{(t)} - \hvy^{(t)} \|_2 + \|\vy^{(t)} - \hvy^{(t+1)}\|_2 \right).
    \end{equation}
    Thus,
    \begin{align}
        \sum_{t=1}^T \left( \eg^{(t)}(\vz^{(t)}) \right)^2 &= 
\sum_{t=1}^T \left( \max\{ \brgp_x^{(t)}(\vx^{(t)}), \brgp_y^{(t)}(\vy^{(t)}) \} \right)^2 \notag \\
        &\leq \sum_{t=1}^T \left( \left( \brgp_x^{(t)}(\vx^{(t)}) \right)^2 + \left( \brgp_y^{(t)}(\vy^{(t)}) \right)^2 \right) \notag \\
        &\leq 2 L^2 (4 \diam_{\cZ} + \norm{\cZ}_2)^2 \sum_{t=1}^T  \left( \|\vz^{(t)} - \hvz^{(t)} \|_2^2 + \|\vz^{(t)} - \hvz^{(t+1)} \|_2^2 \right),\label{align:almost}
    \end{align}
    where the last bound uses~\eqref{align:hattri} and~\eqref{eq:secbr}. Combining~\eqref{align:almost} with~\Cref{theorem:pathlength} concludes the proof.
\end{proof}

\subsubsection{Variation-dependent regret bounds}

Here we state for completeness an implication of~\Cref{theorem:nonasymptotic} for deriving variation-dependent regret bounds in time-varying bilinear saddle-point problems; \emph{cf}.~\citep{Zhang22:No}.

\begin{corollary}
    \label{cor:indreg}
    In the setup of~\Cref{theorem:pathlength}, it holds that
    \begin{align*}
        \reg_x^{(T)} \leq \frac{\diam_{\cX}^2}{\eta} + 8 \eta L^2 \diam_{\cZ}^2 + \eta L^2 \norm{\cY}_2^2 + 16 \eta^3 L^4 \norm{\cZ}_2^2 &+ 16 \eta L^2 \diam_{\cZ} \varne^{(T)} \\ 
        &+(2 \eta \norm{\cY}_2^2 + 32 \eta^3 L^2 \norm{\cZ}_2^2 ) \varmat^{(T)},
    \end{align*}
    and
    \begin{align*}
        \reg_y^{(T)} \leq \frac{\diam_{\cY}^2}{\eta} + 8 \eta L^2 \diam^2_{\cZ} + \eta L^2 \norm{\cX}_2^2 + 16 \eta^3 L^4 \norm{\cZ}_2^2 &+16 \eta L^2 \diam_{\cZ} \varne^{(T)} \\
        &+ (2 \eta \norm{\cX}_2^2 + 32 \eta^3 L^2 \norm{\cZ}_2^2 ) \varmat^{(T)}.
    \end{align*}
\end{corollary}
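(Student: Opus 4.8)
The plan is to specialize the dynamic RVU bound of \Cref{lemma:drvu} to a static comparator and then recycle the prediction-error estimates already established inside the proof of \Cref{theorem:pathlength}. Recall that external regret is precisely dynamic regret against a constant sequence $\vx^{(1,\star)} = \dots = \vx^{(T,\star)}$; for such a sequence the first-order variation term $\frac{\diam_{\cX}}{\eta}\sum_{t=1}^{T-1}\|\vx^{(t+1,\star)} - \vx^{(t,\star)}\|_2$ in \eqref{eq:drvu} vanishes identically. Hence \Cref{lemma:drvu} immediately yields
\begin{equation*}
\reg_x^{(T)} \leq \frac{\diam_{\cX}^2}{2\eta} + \eta \sum_{t=1}^T \|\vu_x^{(t)} - \vm_x^{(t)}\|_2^2,
\end{equation*}
where I discard the nonpositive path-length contribution. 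The whole task thus reduces to controlling the prediction error $\sum_{t=1}^T \|\vu_x^{(t)} - \vm_x^{(t)}\|_2^2$.

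First I would invoke the estimate \eqref{eq:upbounds} from the proof of \Cref{theorem:pathlength}, together with the base case $\|\vu_x^{(1)} - \vm_x^{(1)}\|_2 \leq L\norm{\cY}_2$, to obtain
\begin{equation*}
\sum_{t=1}^T \|\vu_x^{(t)} - \vm_x^{(t)}\|_2^2 \leq L^2 \norm{\cY}_2^2 + 2L^2 \sum_{t=2}^T \|\vy^{(t)} - \vy^{(t-1)}\|_2^2 + 2\norm{\cY}_2^2 \varmat^{(T)}.
\end{equation*}
The only nonstandard term is the first-order path length $\sum_{t=2}^T \|\vy^{(t)} - \vy^{(t-1)}\|_2^2$, which I would relate back to the second-order path length already bounded in \Cref{theorem:pathlength}. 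The key observation is that both $\vz^{(t)}$ and $\vz^{(t-1)}$ are close to the common intermediate iterate $\hvz^{(t)}$: by the triangle inequality and $2ab \leq a^2 + b^2$,
\begin{equation*}
\|\vz^{(t)} - \vz^{(t-1)}\|_2^2 \leq 2\|\vz^{(t)} - \hvz^{(t)}\|_2^2 + 2\|\vz^{(t-1)} - \hvz^{(t)}\|_2^2,
\end{equation*}
so that, since $\|\vy^{(t)} - \vy^{(t-1)}\|_2 \leq \|\vz^{(t)} - \vz^{(t-1)}\|_2$, summing over $t$ gives at most twice the second-order path length $\sum_{t=1}^T(\|\vz^{(t)} - \hvz^{(t)}\|_2^2 + \|\vz^{(t)} - \hvz^{(t+1)}\|_2^2)$.

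At this point I would substitute the path-length estimate of \Cref{theorem:pathlength}, bounding $\epsvarne^{(T)} \leq \varne^{(T)}$ to match the stated form, and collect terms: the $\diam_{\cZ}^2$ and $\eta^2 L^2\norm{\cZ}_2^2$ pieces produce the $8\eta L^2\diam_{\cZ}^2$ and $16\eta^3 L^4\norm{\cZ}_2^2$ contributions, the $\diam_{\cZ}\varne^{(T)}$ piece produces $16\eta L^2\diam_{\cZ}\varne^{(T)}$, and the $\varmat^{(T)}$ pieces combine with the leftover $2\eta\norm{\cY}_2^2\varmat^{(T)}$ into $(2\eta\norm{\cY}_2^2 + 32\eta^3 L^2\norm{\cZ}_2^2)\varmat^{(T)}$; together with the leading $\diam_{\cX}^2/\eta$ term this is exactly the claimed bound. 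The bound for $\reg_y^{(T)}$ follows by the symmetric argument, replacing \eqref{eq:upbounds} by \eqref{eq:betaterm} and $\norm{\cY}_2$ by $\norm{\cX}_2$. The argument is essentially an accounting exercise once \Cref{theorem:pathlength} is in hand; the only step demanding a moment's care is identifying $\hvz^{(t)}$ as the correct pivot that converts consecutive-iterate movement into the two path-length terms carried by \Cref{theorem:pathlength}, so that no quantity beyond those already controlled there needs to be estimated.
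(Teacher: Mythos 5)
Your proposal is correct and follows essentially the same route as the paper's proof: specialize \Cref{lemma:drvu} to a constant comparator, reuse the prediction-error estimates \eqref{eq:upbounds} and \eqref{eq:betaterm}, convert the consecutive-iterate movement into the second-order path length via the intermediate iterate $\hvz^{(t)}$ (the paper does this componentwise through $\hvx^{(t)}$, which is the same pivot), and substitute the bound of \Cref{theorem:pathlength}. Your constant-tracking also matches the stated bound, so there is nothing to correct.
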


\begin{proof}
    First, applying~\Cref{lemma:drvu} under $\vx^{(1,\star)} = \dots = \vx^{(T,\star)}$, we have
    \begin{equation}
        \label{eq:ind1}
        \reg_x^{(T)} \leq \frac{\diam^2_{\cX}}{\eta} + \eta L^2 \norm{\cY}_2^2 + 2 \eta L^2 \sum_{t=2}^T \|\vy^{(t)} - \vy^{(t-1)} \|_2^2 + 2 \eta \norm{\cY}_2^2 \sum_{t=2}^T \|\mat{A}^{(t)} - \mat{A}^{(t-1)} \|_2^2,
    \end{equation}
    and similarly,
    \begin{equation}
        \label{eq:ind2}
        \reg_y^{(T)} \leq \frac{\diam^2_{\cY}}{\eta} + \eta L^2 \norm{\cX}_2^2 + 2 \eta L^2 \sum_{t=2}^T \|\vx^{(t)} - \vx^{(t-1)} \|_2^2 + 2 \eta \norm{\cX}_2^2 \sum_{t=2}^T \|\mat{A}^{(t)} - \mat{A}^{(t-1)} \|_2^2.
    \end{equation}
    Now, by~\Cref{theorem:pathlength} we have
    \begin{equation}
        \label{eq:pathlengthbound}
        \sum_{t=1}^T \left( \| \vz^{(t)} - \hvz^{(t)}\|_2^2 + \| \vz^{(t)} - \hvz^{(t+1)} \|_2^2 \right) \leq 2 \diam_{\cZ}^2 + 4\eta^2L^2 \norm{\cZ}_2^2 + 4  \diam_{\cZ} \varne^{(T)} + 8 \eta^2 \norm{\cZ}_2^2 \varmat^{(T)}.
    \end{equation}
    Further,
    \begin{align*}
        \sum_{t=1}^T \left( \| \vz^{(t)} - \hvz^{(t)}\|_2^2 + \| \vz^{(t)} - \hvz^{(t+1)} \|_2^2 \right) &\geq \sum_{t=1}^T \left( \| \vx^{(t)} - \hvx^{(t)}\|_2^2 + \| \vx^{(t)} - \hvx^{(t+1)} \|_2^2 \right) \\
        &\geq \frac{1}{2} \sum_{t=2}^T \|\vx^{(t)} - \vx^{(t-1)} \|_2^2.
    \end{align*}
    Combining this bound with~\eqref{eq:pathlengthbound} and~\eqref{eq:ind2} gives the claimed regret bound on $\reg_y^{(T)}$, and a similar derivation also gives the claimed bound on $\reg_x^{(T)}$.
\end{proof}

Hence, selecting optimally the learning rate gives an $O\left(\sqrt{1 + \varne^{(T)} + \varmat^{(T)}}\right)$ bound on the \emph{individual} regret of each player; while that optimal value depends on the variation measures, which are not known to the learners, there are techniques that would allow bypassing this~\citep{Zhang22:No}. \Cref{cor:indreg} can also be readily parameterized in terms of the improved variation measure~$\epsvarne^{(T)}$.

\subsubsection{Meta-learning}

We next provide the implication of~\Cref{theorem:nonasymptotic} in the meta-learning setting. We first make a remark regarding the effect of the prediction of $\ogd$ to~\Cref{theorem:nonasymptotic}, and how that relates to an assumption present in earlier work on this problem~\citep{Harris22:Meta}.

\begin{remark}[Improved predictions]
    \label{remark:prediction}
Throughout~\Cref{sec:bspp}, we have considered the standard prediction $\vm_x^{(t)} \defeq \vu_x^{(t-1)} = - \mat{A}^{(t-1)} \vy^{(t-1)}$ for $t \geq 2$, and similarly for Player $y$. It is easy to see that using the predictions
\begin{equation}
    \label{eq:prediction}
    \vm_x^{(t)} \defeq - \mat{A}^{(t)} \vy^{(t-1)} \text{ and } \vm_y^{(t)} \defeq (\mat{A}^{(t)})^\top \vx^{(t-1)}
\end{equation}
for $t \geq 1$ (where $\vz^{(0)} \defeq \hvz^{(1)}$) entirely removes the dependency on~$\varmat^{(T)}$ on all our convergence bounds. While such a prediction cannot be implemented in the standard online learning model, there are settings in which we might, for example, know the sequence of matrices in advance; the meta-learning setting offers such examples, and indeed, \citet{Harris22:Meta} use the improved prediction of~\eqref{eq:prediction}. 
\end{remark}

\begin{proposition}[Meta-learning]
    \label{prop:metalearning}
    Suppose that both players employ $\ogd$ with learning rate $\eta = \frac{1}{4L}$, where $L \defeq \max_{1 \leq h \leq H} \|\mat{A}^{(h)}\|_2$, and the prediction of~\eqref{eq:prediction} in a meta-learning bilinear saddle-point problem with $H \in \N$ games, each repeated for $m \in \N$ consecutive iterations. Then, for an average game,
    \begin{equation}
        \label{eq:itercomplex}
        \left\lceil \frac{P}{H\epsilon^2} + \frac{P' \varne^{(H)}}{H \epsilon^2} \right\rceil + 1
    \end{equation}
    iterations suffice to reach an $\epsilon$-approximate Nash equilibrium, where $P \defeq 4 L^2 (4 \diam_{\cZ} + \norm{\cZ}_2 )^2 \diam_{\cZ}^2 $, $P' \defeq 8 L^2 (4 \diam_{\cZ} + \norm{\cZ}_2 )^2 \diam_{\cZ}$, and
    \begin{equation*}
        \varne^{(H)} \defeq \inf_{\vz^{(h,\star)} \in \cZ^{(h,\star)}, \forall h \in \range{H} } \sum_{h=1}^{H-1} \|\vz^{(h+1,\star)} - \vz^{(h,\star)}\|_2.
    \end{equation*}
\end{proposition}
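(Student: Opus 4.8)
The plan is to obtain the iteration count as a direct corollary of the second-order path-length bound of \Cref{theorem:pathlength,theorem:nonasymptotic-detailed}, specialized to the block structure of the meta-learning sequence. First I would invoke the improved prediction~\eqref{eq:prediction}: within any block of $m$ consecutive rounds the payoff matrix is constant, so $\vm_x^{(t)} = -\mat{A}^{(t)}\vy^{(t-1)}$ coincides with the standard prediction, whereas at the block boundaries the identity $\vu_x^{(t)} - \vm_x^{(t)} = -\mat{A}^{(t)}(\vy^{(t)} - \vy^{(t-1)})$ gives $\|\vu_x^{(t)} - \vm_x^{(t)}\|_2 \le L\|\vy^{(t)} - \vy^{(t-1)}\|_2$ with no matrix-difference term. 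As recorded in \Cref{remark:prediction}, this annihilates the $\varmat^{(T)}$ contribution (and, handling $t=1$ via $\vz^{(0)} \defeq \hvz^{(1)}$, also the $4\eta^2 L^2\norm{\cZ}_2^2$ initialization term) from the path-length estimate, so that \Cref{theorem:nonasymptotic-detailed} collapses to $\sum_{t=1}^T \left(\eg^{(t)}(\vz^{(t)})\right)^2 \le P + P'\,\epsvarne^{(T)}$, with $P,P'$ exactly as in the statement.

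The second step converts $\epsvarne^{(T)}$ into the coarser $\varne^{(H)}$ via the block structure. Since $\epsvarne^{(T)} \le \varne^{(T)}$ (take every $\epsilon^{(t)} = 0$), it suffices to bound $\varne^{(T)}$. The game---and hence its Nash equilibrium set $\cZ^{(t,\star)}$---is invariant across the $m$ rounds of each block, so I would restrict the infimum defining $\varne^{(T)}$ to comparator sequences that are constant within each block. Such sequences contribute nothing to $\sum_{t=1}^{T-1}\|\vz^{(t+1,\star)} - \vz^{(t,\star)}\|_2$ except at the $H-1$ inter-block transitions, where the sum reproduces precisely the objective defining $\varne^{(H)}$; taking the infimum over the per-block equilibria yields $\varne^{(T)} \le \varne^{(H)}$. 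Combining the two steps gives $\sum_{t=1}^T \left(\eg^{(t)}(\vz^{(t)})\right)^2 \le P + P'\,\varne^{(H)}$.

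The final step is a counting-and-averaging argument. Any iterate with $\eg^{(t)}(\vz^{(t)}) > \epsilon$ contributes more than $\epsilon^2$ to the left-hand side, so the total number of such ``bad'' iterates across all $T$ rounds is strictly below $(P + P'\varne^{(H)})/\epsilon^2$. Writing $B_h$ for the number of bad iterates inside block $h$, we have $\sum_{h=1}^H B_h \le (P + P'\varne^{(H)})/\epsilon^2$, hence the average block contains at most $(P + P'\varne^{(H)})/(H\epsilon^2)$ bad iterates. Within a block the first iterate that is an $\epsilon$-approximate Nash equilibrium appears no later than round $B_h + 1$ (in the worst case every bad iterate precedes it), so averaging over the $H$ blocks and rounding the fractional bad-iterate count upward yields the advertised $\left\lceil P/(H\epsilon^2) + P'\varne^{(H)}/(H\epsilon^2)\right\rceil + 1$.

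I expect the main obstacle to be the first step: verifying that the improved prediction cleanly cancels the matrix-variation term at every block boundary and the initialization term at $t=1$ without leaving residual terms, so that the constants collapse to exactly $P$ and $P'$. The reduction $\varne^{(T)} \le \varne^{(H)}$ and the hitting-time averaging are conceptually routine once the path-length bound is in hand; the only subtle point there is that the ``first good iterate by round $B_h+1$'' bound requires the block to actually contain a good iterate, i.e.\ $B_h < m$, which holds in the regime of interest (large $m$) precisely because the averaged bad-iterate count is small relative to $m$.
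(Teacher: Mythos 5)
Your proposal is correct and follows essentially the same route as the paper: invoke \Cref{theorem:nonasymptotic-detailed} with the improved prediction of~\eqref{eq:prediction} to drop the $\varmat^{(T)}$ and initialization terms (yielding exactly $P + P'\varne^{(H)}$, with the block structure collapsing the $T$-round variation to $\varne^{(H)}$), then convert the cumulative squared-gap bound into an average per-block hitting time by counting iterates with gap exceeding $\epsilon$. The only cosmetic difference is that the paper counts the hitting times $m^{(h)}$ directly via $\sum_t (\eg^{(t)}(\vz^{(t)}))^2 > \epsilon^2 \sum_h (m^{(h)}-1)$ rather than through your bad-iterate counts $B_h$, and it leaves implicit the same boundary caveat (a block with no good iterate) that you flag explicitly.
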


The proof is a direct application of~\Cref{theorem:nonasymptotic-detailed}, where we remark that the term depending on $\varmat^{(T)}$ and the term $4\eta^2 L^2 \norm{\cZ}_2^2$ from \Cref{theorem:nonasymptotic-detailed} are eliminated because of the improved prediction of~\Cref{remark:prediction}. More precisely, if we let $m^{(h)} \in \N$ be the number of iterations required so that the equilibrium gap is at most $\epsilon$ at the $h$-th game, then \Cref{theorem:nonasymptotic-detailed} implies that
\begin{equation*}
    P + P' \varne^{(H)} \geq \sum_{t=1}^T \left(\eg^{(t)}(\vz^{(t)})\right)^2 > \epsilon^2 \sum_{h=1}^H (m^{(h)} - 1).
\end{equation*}

Solving with respect to $\frac{1}{H} \sum_{h=1}^H m^{(h)}$ thus yields \Cref{prop:metalearning}.

The first term in the iteration complexity bound~\eqref{eq:itercomplex} vanishes in the meta-learning regime---as the number of games increases $H \gg 1$---while the second term is proportional to $\frac{\varne^{(H)}}{H}$, a natural similarity measure; \eqref{eq:itercomplex} always recovers the $m^{-1/2}$ rate, but offers significant gains if the games as similar, in the sense that $\frac{\varne^{(H)}}{H} \ll 1$. It is worth  noting that, unlike the similarity measure derived by~\citet{Harris22:Meta}, $\frac{\varne^{(H)}}{H}$ depends on the order of the games. As a result, a natural problem that arises in settings where the sequence of games is known in advance is to optimize the order of the games so as to minimize $\varne^{(H)}$; the key challenge of course is to minimize $\varne^{(H)}$ without actually solving each game since that without defeat the purpose. We further remark that~\Cref{prop:metalearning} can be readily extended even if each game in the meta-learning sequence is not repeated for the same number of iterations.

Another natural question in this context is to compare the similarity metric of \Cref{prop:metalearning} compared to that obtained by~\citet{Harris22:Meta}---namely $\min_{\vz \in \cZ} \sum_{h=1}^H \| \vz^{(h,\star)} - \vz \|_2^2$, where we can assume here that $\vz^{(h,\star)}$ is the unique Nash equilibrium of the $h$-th game. Below, we observe that $\varne^{(H)}$ can be arbitrarily smaller.

\begin{proposition}
    \label{prop:sim-meta}
    There exists a sequence of $H$ games such that $\varne^{(H)} = O(1)$ while 
    $$\min_{\vz \in \cZ} \sum_{h=1}^H \| \vz^{(h,\star)} - \vz \|_2^2  \geq \Omega(H)$$. 
\end{proposition}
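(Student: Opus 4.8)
The plan is to exhibit a sequence of $2\times 2$ zero-sum games whose (unique) Nash equilibria drift steadily along a line: consecutive equilibria are within $O(1/H)$ of one another, so the first-order path length $\varne^{(H)}$ stays bounded, yet the equilibria spread out with constant variance, forcing the sum-of-squared-deviations measure of~\citet{Harris22:Meta} to grow linearly in $H$. The underlying intuition is that a slowly-drifting-but-persistent trajectory has small total variation while being far from any single anchor point.

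Concretely, I would set $\cX = \cY \defeq \Delta^2$ and take the diagonal payoff matrices $\mat{A}^{(h)} \defeq \diag(H+1-h,\, h)$ for $h \in \range{H}$. As computed in~\Cref{example:ill-conditioning}, a strictly positive diagonal $2\times 2$ zero-sum game $\diag(a,b)$ admits no pure equilibrium and has a \emph{unique} fully mixed Nash equilibrium in which both players play $\left(\frac{b}{a+b}, \frac{a}{a+b}\right)$. Here this yields $\vz^{(h,\star)} = (p_h, 1-p_h, p_h, 1-p_h)$ with $p_h \defeq \frac{h}{H+1}$. Because the equilibrium of each game is unique, the infimum defining $\varne^{(H)}$ is attained by this single sequence, so no multiplicity issues arise.

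The two estimates are then direct. For the path length, the difference $\vz^{(h+1,\star)} - \vz^{(h,\star)}$ has Euclidean norm exactly $2|p_{h+1}-p_h| = \frac{2}{H+1}$, whence $\varne^{(H)} = \frac{2(H-1)}{H+1} < 2 = O(1)$. For the Harris measure, since all $\vz^{(h,\star)}$ lie in the convex set $\cZ$ and $\vz \mapsto \sum_h \|\vz^{(h,\star)} - \vz\|_2^2$ is a strictly convex least-squares objective, its unconstrained minimizer is the centroid $\bar{\vz} \defeq \frac1H \sum_h \vz^{(h,\star)}$, which lies in $\cZ$; hence this is also the constrained minimizer over $\cZ$, and the minimum value equals $4\sum_{h=1}^H (p_h - \bar p)^2$ with $\bar p = \frac1H\sum_h p_h = \frac12$. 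Using the identity $\sum_{h=1}^H \big(h - \frac{H+1}{2}\big)^2 = \frac{H(H^2-1)}{12}$ gives $\min_{\vz\in\cZ}\sum_h \|\vz^{(h,\star)} - \vz\|_2^2 = \frac{H(H-1)}{3(H+1)} = \Omega(H)$, establishing the desired separation.

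The construction presents no genuine obstacle; the only points requiring care are (i) verifying uniqueness of the equilibrium of each game, so that $\varne^{(H)}$ is truly pinned to the drifting sequence rather than to some artificially static alternative, and (ii) confirming that the unconstrained least-squares minimizer (the centroid) lies in $\cZ$, so that the Harris objective's minimum equals the variance term I compute rather than something strictly smaller. Both are immediate for this family, and the normalization $p_h = \frac{h}{H+1}$ conveniently forces $\bar p = \frac12$, making the variance computation exact.
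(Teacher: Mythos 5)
Your proof is correct and follows essentially the same route as the paper: both realize prescribed fully-mixed points as the unique Nash equilibria of diagonal zero-sum games, choosing a configuration of equilibria whose total first-order path length is $O(1)$ while its dispersion about any single point is $\Omega(H)$. The only difference is cosmetic---the paper places the equilibria on a regular $H$-gon of constant radius inside $\Delta^3$, whereas you march them along a segment in $\Delta^2$---and your version is worked out in more explicit detail (uniqueness of the $2\times 2$ equilibrium, the centroid argument for the constrained minimum, and the exact variance identity) than the paper's sketch.
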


\begin{proof}
    We let $\cX, \cY = \Delta^3$. We consider a sequence of $H$ games so that $\vx^{(h,\star)} = \vy^{(h,\star)}$ for all $h \in \range{H}$, and the sequence of points $(\vx^{(h,\star)})_{1 \leq h \leq H}$ forms a regular $H$-gon within the relative interior of $\Delta^3$ with a constant radius. We note that a unique fully-mixed equilibrium $((\alpha, \beta, \gamma), (\alpha, \beta, \gamma)) \in \Delta^3 \times \Delta^3$ can be obtained by considering the diagonal payoff matrix 
    $$
    \begin{bmatrix}
        1/\alpha & 0 & 0 \\
        0 & 1/\beta & 0 \\
        0 & 0 & 1/\gamma
    \end{bmatrix}.
    $$
    It is easy to see that the above sequence of games establishes the claim.
\end{proof}

\subsubsection{Polymatrix zero-sum games}
\label{sec:polymatrix}

As we claimed earlier in~\Cref{sec:MVI}, \Cref{theorem:nonasymptotic} can be extended to more general time-varying VIs as long as the MVI property holds. For concreteness, here we focus on a multi-player generalization of two-player zero-sum games, namely \emph{polymatrix zero-sum} games~\citep{Cai16:Zero}. More precisely, here it is assumed that there is an underlying $n$-node graph so that each player is uniquely associated with a node in the graph. The utility of each player $i \in \range{n}$ can be expressed as $\sum_{i' \in \mathcal{N}_i} \vx_i^\top \mat{A}_{i, i'} \vx_{i'}$, where $\mathcal{N}_i$ denotes the set of neighbors of Player $i$ and $\mat{A}_{i, i'} \in \R^{d_i \times d_{i'}}$. It is also assumed that $\mat{A}^\top_{i, i'} = - \mat{A}_{i', i}$, for any edge $\{i , i'\}$. If $\vz \defeq (\vx_1, \dots, \vx_n) \in \cZ$, the corresponding operator is defined as 

$$F(\vz) = - \left( \sum_{i \in \mathcal{N}_1} \mat{A}_{1, i} \vx_{i}, \dots, \sum_{i \in \mathcal{N}_n} \mat{A}_{n, i} \vx_{i} \right).$$

It is easy to see that $F$ is \emph{monotone}, in that $\langle \vz - \vz', F(\vz) - F(\vz') \rangle \geq 0$, for any $\vz, \vz' \in \cZ$. Now, for any $\epsilon$-approximate Nash equilibrium $\vz^\star$ it holds, by definition, that $\langle \vz - \vz^\star, F(\vz^\star) \rangle \geq - n \epsilon$, for any $\vz \in \cZ$. Thus, by monotonicity, $\langle \vz - \vz^\star, F(\vz) \rangle \geq \langle \vz - \vz^\star, F(\vz^\star) \rangle \geq - n \epsilon$, for any $\vz \in \cZ$. In other words, approximate Nash equilibria approximately satisfy the MVI property. This suffices to directly extend \Cref{property:gennonnegative} to time-varying polymatrix zero-sum games. The rest of the argument of \Cref{theorem:nonasymptotic} is analogous, leading to the following result. Below, we define $\epsvarne^{(T)}$ as in~\eqref{eq:epsvarne}, for a suitable parameter $C > 0$, and $\varmat^{(T)} \defeq \sum_{t=1}^{T-1} \sum_{i=1}^n \sum_{i' \in \mathcal{N}_{i'}} \| \mat{A}_{i, i'}^{(t+1)} - \mat{A}_{i, i'}^{(t)} \|_2^2$.

\begin{theorem}
    \label{theorem:polymatrix}
    Suppose that each player employs $\ogd$ with a sufficiently small learning rate in a sequence of time-varying polymatrix zero-sum games. Then, 
    $$\sum_{t=1}^T \left( \eg^{(t)}(\vz^{(t)}) \right)^2 = O\left( 1 + \epsvarne^{(T)} + \varmat^{(T)} \right),$$ 
    where $(\vz^{(t)})_{1 \leq t \leq T}$ is the sequence of joint strategies produced by $\ogd$.\looseness-1
\end{theorem}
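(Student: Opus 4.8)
The plan is to mirror the four-step argument behind \Cref{theorem:nonasymptotic}, replacing the two-player zero-sum cancellation with the monotonicity of the polymatrix operator $F^{(t)}$. The conceptual core is already isolated in the discussion preceding the statement: each $F^{(t)}$ is monotone (indeed, the anti-symmetry $\mat{A}_{i,i'}^\top = -\mat{A}_{i',i}$ makes the associated linear map skew-symmetric, so in fact $\langle \vz - \vz', F^{(t)}(\vz) - F^{(t)}(\vz') \rangle = 0$ for all $\vz, \vz' \in \cZ$), and for any $\epsilon^{(t)}$-approximate Nash equilibrium $\vz^{(t,\star)}$ one has $\langle \vz - \vz^{(t,\star)}, F^{(t)}(\vz^{(t,\star)}) \rangle \geq -n\epsilon^{(t)}$ for every $\vz \in \cZ$. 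Since $\vu_i^{(t)} = -[F^{(t)}(\vz^{(t)})]_i$, the sum of the players' dynamic regrets telescopes into $\sum_{i=1}^n \dreg_i^{(T)}(\seq_i^{(T)}) = \sum_{t=1}^T \langle \vz^{(t)} - \vz^{(t,\star)}, F^{(t)}(\vz^{(t)}) \rangle$. Applying monotonicity and then the approximate MVI inequality to each summand and summing over $t$ yields the polymatrix analogue of \Cref{property:gennonnegative}, namely $\sum_{i=1}^n \dreg_i^{(T)}(\seq_i^{(T)}) \geq -n\sum_{t=1}^T \epsilon^{(t)}$.

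Second, I would invoke the dynamic RVU bound \Cref{lemma:drvu} for each of the $n$ players and sum. The only term that behaves differently from the two-player case is the utility-deviation term $\eta \sum_t \|\vu_i^{(t)} - \vm_i^{(t)}\|_2^2$, where now $\vu_i^{(t)} = \sum_{i'\in\mathcal{N}_i}\mat{A}_{i,i'}^{(t)}\vx_{i'}^{(t)}$ and $\vm_i^{(t)} = \vu_i^{(t-1)}$. Expanding $\vu_i^{(t)} - \vm_i^{(t)} = \sum_{i'\in\mathcal{N}_i}(\mat{A}_{i,i'}^{(t)}\vx_{i'}^{(t)} - \mat{A}_{i,i'}^{(t-1)}\vx_{i'}^{(t-1)})$ and applying the same split as in \eqref{eq:Young-triangle}--\eqref{eq:upbounds} (triangle inequality together with $2ab \leq a^2 + b^2$, plus a Cauchy--Schwarz step to handle the sum over neighbors) decomposes each summand into a strategy-variation part, controlled by $\|\vx_{i'}^{(t)} - \vx_{i'}^{(t-1)}\|_2^2$, and a matrix-variation part, controlled by $\|\mat{A}_{i,i'}^{(t)} - \mat{A}_{i,i'}^{(t-1)}\|_2^2$; aggregating the latter over edges reproduces exactly $\varmat^{(T)} = \sum_{t}\sum_i\sum_{i'\in\mathcal{N}_i}\|\mat{A}_{i,i'}^{(t+1)} - \mat{A}_{i,i'}^{(t)}\|_2^2$.

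Third, exactly as in the proof of \Cref{theorem:pathlength}, the aggregated strategy-variation term $\sum_t\|\vz^{(t)} - \vz^{(t-1)}\|_2^2$ is absorbed into the negative second-order path-length terms $-\tfrac{1}{2\eta}\sum_t(\|\vz^{(t)} - \hvz^{(t)}\|_2^2 + \|\vz^{(t)} - \hvz^{(t+1)}\|_2^2)$ supplied by the summed RVU bounds, provided the learning rate is small enough relative to the Lipschitz constant of the operator (here $L$ should be read as a bound on $\|F^{(t)}\|$, e.g. the spectral norm of the skew-symmetric block matrix, rather than a single payoff matrix). Combining this absorption with the nonnegativity from the first step, and optimizing over the sequence of approximate equilibria to form $\epsvarne^{(T)}$ as in \eqref{eq:C}, yields $\sum_t(\|\vz^{(t)} - \hvz^{(t)}\|_2^2 + \|\vz^{(t)} - \hvz^{(t+1)}\|_2^2) = O(1 + \epsvarne^{(T)} + \varmat^{(T)})$. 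Finally, \Cref{claim:BR-gap} applied per player bounds each player's best-response gap by $\|\vx_i^{(t)} - \hvx_i^{(t)}\|_2 + \|\vx_i^{(t)} - \hvx_i^{(t+1)}\|_2$; since $\eg^{(t)}(\vz^{(t)})$ is the maximum of these gaps, bounding its square by the sum over players and then by the second-order path length delivers the claim.

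I expect the only genuinely delicate point to be the bookkeeping in the second and third steps: because each player's prediction error couples to all of its neighbors' strategy increments, each increment $\|\vx_{i'}^{(t)} - \vx_{i'}^{(t-1)}\|_2^2$ reappears once per incident edge, so the coefficient multiplying $\sum_t\|\vz^{(t)} - \vz^{(t-1)}\|_2^2$ after summing the RVU bounds carries a dependence on the graph (through the neighbor sums and the maximum degree) as well as on $L$. The task is to verify that this degree-sensitive coefficient is still dominated by the $\tfrac{1}{8\eta}$ coming from the negative path-length terms under a single common $\eta$; this fixes the ``sufficiently small learning rate'' in the statement to some $\eta = \Theta(1/L)$ with an extra degree factor, and is a purely mechanical check rather than a new idea.
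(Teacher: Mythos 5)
Your proposal is correct and takes essentially the same route as the paper: the paper also proves the result by extending \Cref{property:gennonnegative} through monotonicity of the polymatrix operator combined with the approximate-MVI inequality $\langle \vz - \vz^{(t,\star)}, F^{(t)}(\vz^{(t,\star)}) \rangle \geq -n\epsilon^{(t)}$, and then states that the remainder of the argument of \Cref{theorem:nonasymptotic} (dynamic RVU bound, absorption of the prediction-error terms into the negative path-length terms, and \Cref{claim:BR-gap}) carries over analogously. Your write-up simply fills in those analogous steps explicitly---including the degree-dependent learning-rate bookkeeping that the paper leaves implicit under ``sufficiently small learning rate''---so there is nothing to correct.
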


\subsubsection{General variational inequalities} 
\label{sec:nonconvex}

Although our main focus in this paper is on the convergence of learning algorithms in time-varying games, our techniques could also be of interest for solving (static) general variational inequality (VI) problems.


In particular, let $F : \cZ \to \cZ$ be a single-valued operator. Solving general VIs is well-known to be computationally intractable, and so instead focus has been on identifying broad subclasses that elude those intractability barriers (see the references below). Our framework in~\Cref{sec:bspp} motivates introducing the following measure of complexity for a VI problem: 
\begin{equation}
    \label{eq:compl-F}
    \compl(F) \defeq \inf_{\vz^{(1,\star)}, \dots, \vz^{(T,\star)} \in \cZ} \sum_{t=1}^{T-1} \| \vz^{(t+1,\star)} - \vz^{(t,\star)}\|_2,
\end{equation}
 subject to
 \begin{equation}
    \label{eq:feas}
  \dreg_z^{(T)}(\vz^{(1,\star)}, \dots, \vz^{(T,\star)}) \geq 0 \iff \sum_{t=1}^T \langle \vz^{(t)} - \vz^{(t,\star)}, F(\vz^{(t)}) \rangle \geq 0.   
 \end{equation}
In words, \eqref{eq:compl-F} expresses the infimum first-order variation that a sequence of comparators must have in order to guarantee nonnegative dynamic regret~\eqref{eq:feas}; it is evident that~\eqref{eq:feas} always admits a feasible sequence, namely $\seq_z^{(T)} \defeq (\vz^{(t)})_{1 \leq t \leq T}$. We note that, in accordance to our results in~\Cref{sec:bspp}, one can also consider an approximate version of the complexity measure~\eqref{eq:compl-F}, which could behave much more favorably (recall \Cref{prop:approx-exact}).
 
Now in a (static) bilinear saddle-point problem, it holds that $\compl(F) = 0$ given that there exists a static comparator that guarantees nonnegativity of the dynamic regret. More broadly, our techniques imply $O(\poly(1/\epsilon))$ iteration-complexity bounds for any VI problem such that $\compl(F) \leq C T^{1 - \omega}$, for a time-independent parameter $C > 0$ and $\omega \in (0,1]$:
\begin{proposition}
Consider a variational inequality problem described with the operator $F : \cZ \to \cZ$ such that $F$ is $L$-Lipschitz continuous, in the sense that $\|F(\vz) - F(\vz') \|_2 \leq L \|\vz - \vz'\|_2$, and $\compl(F) \leq C T^{1 - \omega}$ for $C > 0$ and $\omega \in (0, 1]$. Then, $\ogd$ with learning rate $\eta = \frac{1}{4L}$ reaches an $\epsilon$-strong solution $\vz^\star \in \cZ$ in $O(\epsilon^{-2/\omega})$ iterations; that is, $\langle \vz - \vz^\star, F(\vz^\star) \rangle \geq - \epsilon$ for any $\vz \in \cZ$.
\end{proposition}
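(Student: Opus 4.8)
The plan is to recast the static VI as an online-learning problem and then reuse the path-length machinery of \Cref{theorem:pathlength} almost verbatim, with the feasibility constraint \eqref{eq:feas} playing the role that the nonnegativity of dynamic regret at Nash equilibria (\Cref{property:gennonnegative}) played there. Concretely, I would run $\ogd$ on the sequence of linear utilities $\vu_z^{(t)} \defeq -F(\vz^{(t)})$ with the standard prediction $\vm_z^{(t)} \defeq \vu_z^{(t-1)} = -F(\vz^{(t-1)})$. Then the dynamic regret against any comparator sequence $(\vz^{(t,\star)})_{t}$ is exactly $\dreg_z^{(T)}(\seq_z^{(T)}) = \sum_{t=1}^T \langle \vz^{(t)} - \vz^{(t,\star)}, F(\vz^{(t)}) \rangle$, and the strong-solution gap of an iterate $\vz^{(t)}$, namely $\text{gap}(\vz^{(t)}) \defeq \max_{\vz \in \cZ} \langle \vz - \vz^{(t)}, \vu_z^{(t)} \rangle$, is precisely the one-round best-response gap of $\vz^{(t)}$ against $\vu_z^{(t)}$.

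First I would fix a feasible comparator sequence attaining the infimum in \eqref{eq:compl-F}; since $\cZ^T$ is compact and $F$ is continuous the infimum is attained, so I may take $\sum_{t=1}^{T-1} \|\vz^{(t+1,\star)} - \vz^{(t,\star)}\|_2 = \compl(F) \leq C T^{1-\omega}$ while $\dreg_z^{(T)}(\seq_z^{(T)}) \geq 0$ by \eqref{eq:feas}. Next I would invoke the dynamic RVU bound (\Cref{cor:drvu}) to upper bound this same dynamic regret, and handle the utility-deviation term using Lipschitzness: for $t \geq 2$, $\|\vu_z^{(t)} - \vm_z^{(t)}\|_2^2 = \|F(\vz^{(t)}) - F(\vz^{(t-1)})\|_2^2 \leq L^2 \|\vz^{(t)} - \vz^{(t-1)}\|_2^2$. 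Because the operator is \emph{static}, there is no analogue of the matrix-variation term $\varmat^{(T)}$: the entire deviation is a consecutive-iterate distance, which is absorbed into the negative second-order path-length term, the choice $\eta = \frac{1}{4L}$ rendering the net coefficient of $\sum_t \|\vz^{(t)} - \vz^{(t-1)}\|_2^2$ nonpositive, exactly as in \Cref{theorem:pathlength}. Chaining $0 \leq \dreg_z^{(T)}(\seq_z^{(T)})$ with this upper bound then yields a second-order path-length bound $\sum_{t=1}^T (\|\vz^{(t)} - \hvz^{(t)}\|_2^2 + \|\vz^{(t)} - \hvz^{(t+1)}\|_2^2) = O(1 + \compl(F)) = O(1 + C T^{1-\omega})$.

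To convert this into a gap bound, I would apply \Cref{claim:BR-gap} (with $G = 1$ for the Euclidean regularizer) to the utility $\vu_z^{(t)}$, giving $\text{gap}(\vz^{(t)}) \leq \frac{\diam_{\cZ}}{\eta}\|\hvz^{(t+1)} - \hvz^{(t)}\|_2 + \|F(\vz^{(t)})\|_2\,\|\vz^{(t)} - \hvz^{(t+1)}\|_2$. Since $\cZ$ is compact and $F$ continuous, $\|F(\cdot)\|_2$ is uniformly bounded on $\cZ$, so after a triangle inequality one gets $(\text{gap}(\vz^{(t)}))^2 = O(\|\vz^{(t)} - \hvz^{(t)}\|_2^2 + \|\vz^{(t)} - \hvz^{(t+1)}\|_2^2)$. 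Summing over $t$ and inserting the path-length bound gives $\sum_{t=1}^T (\text{gap}(\vz^{(t)}))^2 = O(1 + C T^{1-\omega})$, so by averaging some iterate satisfies $(\text{gap}(\vz^{(t)}))^2 = O(\frac{1}{T} + C T^{-\omega})$. Requiring the right-hand side to be $O(\epsilon^2)$ and using $\omega \in (0,1]$ (so that the $T^{-\omega}$ term dominates $T^{-1}$) yields $T = O(\epsilon^{-2/\omega})$, which is the claimed iteration complexity.

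I expect the only delicate point to be the bookkeeping in the second step: one must verify that, with a static operator, the utility-prediction deviation collapses entirely into consecutive-iterate distances and is dominated by the negative path-length term at $\eta = \frac{1}{4L}$. This is exactly the computation underlying \Cref{theorem:pathlength}, so it transfers directly, but it is worth confirming that no residual matrix-variation term survives. Everything else is a mechanical reuse of \Cref{cor:drvu} and \Cref{claim:BR-gap} followed by a pigeonhole argument over the iterates.
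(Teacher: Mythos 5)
Your proposal is correct and is essentially the argument the paper intends (the proposition is stated there without a separate proof, deferring to ``our techniques''): the feasibility constraint \eqref{eq:feas} supplies the nonnegative dynamic regret in place of \Cref{property:gennonnegative}, the Lipschitz bound on the prediction error collapses into consecutive-iterate distances that are absorbed by the negative second-order path-length term at $\eta = \frac{1}{4L}$ exactly as in \Cref{theorem:pathlength}, and \Cref{claim:BR-gap} together with the pigeonhole step yields the $O(\epsilon^{-2/\omega})$ iteration count. No gaps to report.
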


This result should be viewed as part of an ongoing effort to characterize the class of variational inequalities (VIs) that are amenable to efficient algorithms; see~\citep{Diakonikolas21:Efficient,Cai22:Accelerated,Azizian20:A,Bauschke21:Generalized,Combettes04:Proximal,Dang15:On,Mangoubi21:Greedy,Mazumdar20:On,Nouiehed19:Solving,Song20:Optimistic,Yang20:Global,Daskalakis22:Non}, and the many references therein.

It is worth comparing~\eqref{eq:compl-F} with another natural complexity measure, namely $\inf_{\vz^{\star} \in \cZ} \sum_{t=1}^T \langle \vz^{(t)} - \vz^{\star}, F(\vz^{(t)}) \rangle$; the latter measures how negative (external) regret can be, and has already proven useful in certain settings that go bilinear saddle-point problems~\citep{Yang22:Convergence}, although unlike~\eqref{eq:compl-F} it does not appear to be of much use in characterizing time-varying bilinear saddle-point problems. In this context, $O(\poly(1/\epsilon))$ iteration-complexity bounds can also be established whenever
 \begin{itemize}
     \item $\inf_{\vz^{\star} \in \cZ} \sum_{t=1}^T \langle \vz^{(t)} - \vz^{\star}, F(\vz^{(t)}) \rangle \geq - C T^{1 - \omega}$ for a time-invariant $C > 0$, or
     \item $\inf_{\vz^{\star} \in \cZ} \sum_{t=1}^T \langle \vz^{(t)} - \vz^{\star}, F(\vz^{(t)}) \rangle \geq -C \sum_{t=1}^{T-1} \|\vz^{(t+1)} - \vz^{(t)} \|_2^2$, for a sufficiently small $C > 0$.
 \end{itemize}
 
Following~\citep{Yang22:Convergence}, identifying VIs that satisfy those relaxed conditions but not the MVI property is an interesting direction. In particular, it is important to understand if those relaxations can shed led more light into the convergence properties of $\ogd$ in Shapley's two-player zero-sum \emph{stochastic games}.

\subsection{Proofs from~\texorpdfstring{\Cref{sec:strong}}{Section 3.2}}
\label{appendix:proofs-3}

In this subsection, we provide the proofs from~\Cref{sec:strong}, leading to our main result in~\Cref{theorem:strong}. 

Let us first introduce some additional notation. We let $f^{(t)} : \cX \times \cY \to \R$ be a continuously differentiable function for any $t \in \range{T}$.\footnote{In case the interior $(\cX \times \cY)^{\circ}$ is empty, here and throughout this paper we tacitly posit differentiability on a closed and convex neighborhood $\Tilde{\cX} \times \tilde{\cY}$ such that $\cX \subseteq \tilde{\cX}^\circ$ and $\cY \subseteq \tilde{\cY}^\circ$. This assumption suffices for our arguments; see, for example, the treatment of~\citet{Daskalakis20:Independent}.} We recall that in~\Cref{sec:strong} it is assumed that the objective function changes only after $m \in \N$ (consecutive) repetitions, which is akin to the meta-learning setting. Analogously to our setup for bilinear saddle-point problems (\Cref{sec:bspp}), it is assumed that Player $x$ is endeavoring to minimizing the objective function, while Player $y$ is trying to maximize it. We will denote by $\reg^{(T)}_{\cL, x}(\vx^\star) \defeq \sum_{t=1}^T \langle \vx^{(t)} - \vx^\star, - \vu_x^{(t)} \rangle$ and $\reg^{(T)}_{\cL, y}(\vy^\star) \defeq \sum_{t=1}^T \langle  \vy^\star - \vy^{(t)} , \vu_y^{(t)} \rangle$, where $\vu_x^{(t)} \defeq - \nabla_{\vx} f(\vx^{(t)}, \vy^{(t)})$ and $\vu_y^{(t)} \defeq \nabla_{\vy} f(\vx^{(t)}, \vy^{(t)})$ for any $t \in \range{T}$; similar notation is used for $\dreg_{\cL, x}^{(T)}, \dreg_{\cL, y}^{(T)}$.

Furthermore, we let $\seq_z^{(T)} = ((\vx^{(t,\star)}, \vy^{(t,\star)}))_{1 \leq t \leq T}$, so that $\vx^{(t,\star)} = \vx^{(h,\star)}$ and $\vy^{(t,\star)} = \vy^{(h,\star)}$ for any $t \in \range{T}$ such that $\lfloor (t-1)/m \rfloor = h \in \range{H}$. The first important step in our analysis is that, following the proof of~\Cref{lemma:drvu},
\begin{align}
     \dreg_{\cL, x}^{(T)}(\seq_x^{(T)}) \leq \frac{1}{2 \eta} \sum_{h=1}^H  \| \hvx^{(h, 1)} - &\vx^{(h,\star)}\|_2^2 -\| \hvx^{(h, m+1)} - \vx^{(h,\star)}\|_2^2 + \eta \sum_{t=1}^T \|\vu_x^{(t)} - \vm_x^{(t)} \|_2^2 \notag \\
     &-\frac{1}{2\eta} \sum_{t=1}^T \left( \|\vx^{(t)} - \hvx^{(t)} \|_2^2 + \|\vx^{(t)} - \hvx^{(t+1)} \|_2^2 \right),\label{eq:dregstr-1}
\end{align}
where $\hvx^{(h,k)} \defeq \hvx^{((h-1)\times m) + k)}$ for any $(h, k) \in \range{H} \times \range{m}$, $\hvx^{(h, m+1)} \defeq \hvx^{(h+1,1)}$ for $h \in \range{H-1}$, and $\hvx^{(H, m+1)} \defeq \hvx^{(T+1)}$. Similarly,
\begin{align}
     \dreg_{\cL, y}^{(T)}(\seq_y^{(T)}) \leq \frac{1}{2 \eta} \sum_{h=1}^H \| \hvy^{(h, 1)} - &\vy^{(h,\star)}\|_2^2 - \| \hvy^{(h, m+1)} - \vy^{(h,\star)}\|_2^2 + \eta \sum_{t=1}^T \|\vu_y^{(t)} - \vm_y^{(t)} \|_2^2 \notag \\
     - &\frac{1}{2\eta} \sum_{t=1}^T \left( \|\vy^{(t)} - \hvy^{(t)} \|_2^2 + \|\vy^{(t)} - \hvy^{(t+1)} \|_2^2 \right). \label{eq:dregstr-2}
\end{align}

Next, we will use the following key observation, which lower bounds the sum of the players' (external) regrets under strong convexity-concavity.

\begin{lemma}
    \label{lemma:stronpos}
    Suppose that $f : \cX \times \cY \to \R$ is a $\mu$-strongly convex-concave function with respect to $\|\cdot\|_2$. Then, for any Nash equilibrium $\vz^\star = (\vx^\star,\vy^\star) \in \cZ$,
    \begin{equation*}
        \reg^{(m)}_{\cL, x}(\vx^\star) + \reg^{(m)}_{\cL, y}(\vy^\star) \geq \frac{\mu}{2} \sum_{t=1}^m \|\vz^{(t)} - \vz^\star \|_2^2.
    \end{equation*}
\end{lemma}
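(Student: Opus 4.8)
The plan is to unfold the definitions and reduce everything to a pointwise inequality that is then summed over $t \in \range{m}$. Recalling that $\vu_x^{(t)} = -\nabla_{\vx} f(\vx^{(t)}, \vy^{(t)})$ and $\vu_y^{(t)} = \nabla_{\vy} f(\vx^{(t)}, \vy^{(t)})$, the per-round contribution to $\reg^{(m)}_{\cL, x}(\vx^\star) + \reg^{(m)}_{\cL, y}(\vy^\star)$ is exactly
\begin{equation*}
    \langle \vx^{(t)} - \vx^\star, \nabla_{\vx} f(\vx^{(t)}, \vy^{(t)}) \rangle + \langle \vy^\star - \vy^{(t)}, \nabla_{\vy} f(\vx^{(t)}, \vy^{(t)}) \rangle.
\end{equation*}
So it suffices to lower bound each such summand by $\frac{\mu}{2} \|\vz^{(t)} - \vz^\star\|_2^2$.

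The first step is to invoke the first-order characterization of strong convexity-concavity. By $\mu$-strong convexity of $f(\cdot, \vy^{(t)})$, I would write $f(\vx^\star, \vy^{(t)}) \geq f(\vx^{(t)}, \vy^{(t)}) + \langle \nabla_{\vx} f(\vx^{(t)}, \vy^{(t)}), \vx^\star - \vx^{(t)} \rangle + \frac{\mu}{2} \|\vx^{(t)} - \vx^\star\|_2^2$, and rearrange to bound $\langle \vx^{(t)} - \vx^\star, \nabla_{\vx} f(\vx^{(t)}, \vy^{(t)}) \rangle$ from below by $f(\vx^{(t)}, \vy^{(t)}) - f(\vx^\star, \vy^{(t)}) + \frac{\mu}{2}\|\vx^{(t)} - \vx^\star\|_2^2$. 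Symmetrically, by $\mu$-strong concavity of $f(\vx^{(t)}, \cdot)$, the $\vy$-term is bounded below by $f(\vx^{(t)}, \vy^\star) - f(\vx^{(t)}, \vy^{(t)}) + \frac{\mu}{2}\|\vy^{(t)} - \vy^\star\|_2^2$. Adding the two, the mixed evaluations $f(\vx^{(t)}, \vy^{(t)})$ cancel and, using $\|\vx^{(t)} - \vx^\star\|_2^2 + \|\vy^{(t)} - \vy^\star\|_2^2 = \|\vz^{(t)} - \vz^\star\|_2^2$, the summand is at least $f(\vx^{(t)}, \vy^\star) - f(\vx^\star, \vy^{(t)}) + \frac{\mu}{2}\|\vz^{(t)} - \vz^\star\|_2^2$.

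The final step is to discard the residual function-difference term using the saddle-point property of the Nash equilibrium $\vz^\star = (\vx^\star, \vy^\star)$. Since $f(\vx^\star, \vy) \leq f(\vx^\star, \vy^\star) \leq f(\vx, \vy^\star)$ for all $\vx \in \cX$ and $\vy \in \cY$, we get $f(\vx^{(t)}, \vy^\star) \geq f(\vx^\star, \vy^\star) \geq f(\vx^\star, \vy^{(t)})$, hence $f(\vx^{(t)}, \vy^\star) - f(\vx^\star, \vy^{(t)}) \geq 0$. The per-round bound then reduces to $\frac{\mu}{2}\|\vz^{(t)} - \vz^\star\|_2^2$, and summing over $t \in \range{m}$ yields the claim. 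The computation is entirely routine; the only point requiring care is orienting the saddle-point inequalities so that the leftover term carries the correct (nonnegative) sign, which is where the equilibrium assumption on $\vz^\star$ is essential and without which the bound would fail.
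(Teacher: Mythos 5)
Your proof is correct and follows essentially the same route as the paper's: the first-order characterization of $\mu$-strong convexity-concavity at $(\vx^{(t)},\vy^{(t)})$, cancellation of the mixed terms, and the saddle-point property of $\vz^\star$ to discard the residual $f(\vx^{(t)},\vy^\star) - f(\vx^\star,\vy^{(t)})$, followed by summing over $t$. If anything, your chain $f(\vx^{(t)},\vy^\star) \geq f(\vx^\star,\vy^\star) \geq f(\vx^\star,\vy^{(t)})$ is the correctly oriented saddle-point inequality, whereas the paper writes the chain through $f(\vx^{(t)},\vy^{(t)})$, which is not what the equilibrium property gives (though the conclusion it needs is the same one you derive).
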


\begin{proof}
    First, by $\mu$-strong convexity of $f(\vx, \cdot)$, we have that for any time $t \in \range{m}$,
    \begin{equation}
        \label{eq:strong-1}
        \langle \vx^{(t)} - \vx^\star, \nabla_{\vx} f(\vx^{(t)}, \vy^{(t)}) \rangle \geq f(\vx^{(t)}, \vy^{(t)}) - f(\vx^\star, \vy^{(t)}) + \frac{\mu}{2} \|\vx^{(t)} - \vx^\star\|_2^2.
    \end{equation}
    Similarly, by $\mu$-strong concavity of $f(\cdot, \vy)$, we have that for any time $t \in \range{m}$,
    \begin{equation}
        \label{eq:strong-2}
        \langle  \vy^\star - \vy^{(t)}, \nabla_{\vy} f(\vx^{(t)}, \vy^{(t)}) \rangle \geq  f(\vx^{(t)}, \vy^\star) - f(\vx^{(t)}, \vy^{(t)}) + \frac{\mu}{2} \|\vy^{(t)} - \vy^\star\|_2^2.
    \end{equation}
    Further, for any Nash equilibrium $(\vx^\star, \vy^\star) \in \cZ$ it holds that $f(\vx^{(t)}, \vy^\star) \geq f(\vx^{(t)}, \vy^{(t)}) \geq f(\vx^\star, \vy^{(t)})$. Combining this fact with~\eqref{eq:strong-1} and~\eqref{eq:strong-2} and summing over all $t \in \range{m}$ gives the statement.
\end{proof}

In turn, this readily implies the following lower bound for the dynamic regret.

\begin{lemma}
    \label{lemma:dyn-stronpos}
    Suppose that $f^{(h)} : \cX \times \cY \to \R$ is a $\mu$-strongly convex-concave function with respect to $\|\cdot\|_2$, for any $h \in \range{H}$. Consider a sequence $\seq_z^{(T)} = ((\vx^{(t,\star)}, \vy^{(t,\star)}))_{1 \leq t \leq T}$, so that $\vx^{(t,\star)} = \vx^{(h,\star)}$ and $\vy^{(t,\star)} = \vy^{(h,\star)}$ for any $t \in \range{T}$ such that $\lfloor (t-1)/m \rfloor = h \in \range{H}$. If $(\vx^{(h,\star)}, \vy^{(h,\star)}) \in \cZ$ is a Nash equilibrium of $f^{(h)}$,
    \begin{equation*}
        \dreg_{\cL, x}^{(T)}(\seq_x^{(T)}) + \dreg_{\cL, y}^{(T)}(\seq_y^{(T)}) \geq \frac{\mu}{2} \sum_{h=1}^H \sum_{k=1}^m \| \vz^{(h, k)} - \vz^{(h,\star)} \|_2^2,
    \end{equation*}
    where $\vz^{(h,k)} \defeq \vz^{((h-1)\times m) + k)}$ for any $(h, k) \in \range{H} \times \range{m}$. 
\end{lemma}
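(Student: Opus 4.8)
The plan is to exploit the block structure of the meta-learning setting: over the horizon $T = mH$, the game is held fixed to $f^{(h)}$ throughout block $h$, consisting of the $m$ consecutive rounds $t = (h-1)m + 1, \dots, hm$, and the comparator $\vz^{(t,\star)}$ is held constant at the Nash equilibrium $\vz^{(h,\star)}$ of $f^{(h)}$ across that whole block. Because the comparator is constant within each block, the dynamic regret regroups into a sum of per-block contributions: starting from $\dreg_{\cL,x}^{(T)}(\seq_x^{(T)}) + \dreg_{\cL,y}^{(T)}(\seq_y^{(T)}) = \sum_{t=1}^T ( \langle \vx^{(t)} - \vx^{(t,\star)}, -\vu_x^{(t)} \rangle + \langle \vy^{(t,\star)} - \vy^{(t)}, \vu_y^{(t)} \rangle )$ and grouping the summands by block, each block $h$ contributes exactly the \emph{external} regret of the two players measured against the fixed comparator $\vz^{(h,\star)}$ over the $m$-round run on the static game $f^{(h)}$.

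First I would make this regrouping explicit, so that the contribution of block $h$ equals $\reg^{(m)}_{\cL, x}(\vx^{(h,\star)}) + \reg^{(m)}_{\cL, y}(\vy^{(h,\star)})$, where the $m$-round external regret is now computed over the subsequence of iterates $(\vz^{(h,k)})_{1 \leq k \leq m}$ produced during that block. Then I would apply \Cref{lemma:stronpos} to each block separately: since $f^{(h)}$ is $\mu$-strongly convex-concave and $\vz^{(h,\star)}$ is a Nash equilibrium of $f^{(h)}$, the lemma gives $\reg^{(m)}_{\cL, x}(\vx^{(h,\star)}) + \reg^{(m)}_{\cL, y}(\vy^{(h,\star)}) \geq \frac{\mu}{2} \sum_{k=1}^m \|\vz^{(h,k)} - \vz^{(h,\star)}\|_2^2$. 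Summing this inequality over $h = 1, \dots, H$ yields precisely the claimed bound.

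The one point that deserves care---and which I expect to be the only genuine subtlety---is the legitimacy of invoking \Cref{lemma:stronpos} block-by-block, even though the iterates within block $h$ are not a fresh run of the algorithm but rather a continuation from the previous block. This causes no difficulty, however, because the proof of \Cref{lemma:stronpos} rests solely on the \emph{pointwise} first-order inequalities furnished by $\mu$-strong convexity of $f^{(h)}(\cdot, \vy)$ and $\mu$-strong concavity of $f^{(h)}(\vx, \cdot)$ (i.e.\ \eqref{eq:strong-1} and \eqref{eq:strong-2}), together with the saddle-point inequalities $f^{(h)}(\vx^{(h,k)}, \vy^{(h,\star)}) \geq f^{(h)}(\vx^{(h,k)}, \vy^{(h,k)}) \geq f^{(h)}(\vx^{(h,\star)}, \vy^{(h,k)})$ valid at the Nash equilibrium; none of these depend on how the iterates $\vz^{(h,k)}$ were generated. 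Thus the argument is essentially a bookkeeping step that lifts the static strong-monotonicity bound of \Cref{lemma:stronpos} to the dynamic-regret quantity over the full horizon, and no estimates beyond those already established are required.
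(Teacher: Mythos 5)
Your proposal is correct and matches the paper's approach: the paper states this lemma as following immediately ("readily implies") from \Cref{lemma:stronpos}, and your block-by-block regrouping plus per-block application of that lemma is exactly the intended bookkeeping. Your observation that \Cref{lemma:stronpos} is algorithm-agnostic---it only uses pointwise strong convexity-concavity and the saddle-point inequality, not how the iterates were generated---is precisely why the continuation of iterates across blocks causes no issue, so the proof is complete.
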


We next combine this with the following monotonicity property of $\ogd$: If $\vz^\star$ is a Nash equilibrium, $\| \hvz^{(t)} - \vz^\star \|_2$ is a decreasing function in $t$~\citep[Proposition C.10]{Harris22:Meta}. This leads to the following refinement of~\Cref{lemma:dyn-stronpos}.

\begin{lemma}
    \label{lemma:monotone}
    Under the assumptions of~\Cref{lemma:dyn-stronpos}, if additionally $\eta \leq \frac{1}{2\mu}$,
    \begin{equation*}
        \dreg_{\cL, x}^{(T)}(\seq_x^{(T)}) + \dreg_{\cL, y}^{(T)}(\seq_y^{(T)}) + \frac{1}{4\eta} \sum_{t=1}^T \| \vz^{(t)} - \hvz^{(t+1)} \|_2^2 \geq \frac{\mu m}{4} \sum_{h=1}^H \| \hvz^{(h,m+1)} - \vz^{(h,\star)} \|_2^2.
    \end{equation*}
\end{lemma}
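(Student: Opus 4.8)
The plan is to start from the dynamic-regret lower bound of \Cref{lemma:dyn-stronpos}, namely $\dreg_{\cL, x}^{(T)}(\seq_x^{(T)}) + \dreg_{\cL, y}^{(T)}(\seq_y^{(T)}) \geq \frac{\mu}{2}\sum_{h=1}^H \sum_{k=1}^m \|\vz^{(h,k)} - \vz^{(h,\star)}\|_2^2$, and to convert its right-hand side---which measures the distance of \emph{every} played iterate in block $h$ to the equilibrium $\vz^{(h,\star)}$---into a bound featuring only the final secondary iterate $\hvz^{(h,m+1)}$ of each block, scaled by $m$. The bridge is the monotonicity property recorded just before the statement: since within block $h$ the objective is the fixed function $f^{(h)}$ with Nash equilibrium $\vz^{(h,\star)}$, the quantity $\|\hvz^{(h,k)} - \vz^{(h,\star)}\|_2$ is nonincreasing in $k$ over the static phase $k \in \{1, \dots, m+1\}$.

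The key algebraic step is a single application, for each pair $(h,k)$, of the elementary inequality $\|\vec{a}\|_2^2 \geq \frac{1}{2}\|\vec{b}\|_2^2 - \|\vec{a} - \vec{b}\|_2^2$ (which follows from $\|\vec{b}\|_2^2 \leq 2\|\vec{a}\|_2^2 + 2\|\vec{a}-\vec{b}\|_2^2$), taken with $\vec{a} = \vz^{(h,k)} - \vz^{(h,\star)}$ and $\vec{b} = \hvz^{(h,k+1)} - \vz^{(h,\star)}$. This yields
\[
\|\vz^{(h,k)} - \vz^{(h,\star)}\|_2^2 \geq \tfrac{1}{2}\|\hvz^{(h,k+1)} - \vz^{(h,\star)}\|_2^2 - \|\vz^{(h,k)} - \hvz^{(h,k+1)}\|_2^2.
\]
By monotonicity, $\|\hvz^{(h,k+1)} - \vz^{(h,\star)}\|_2 \geq \|\hvz^{(h,m+1)} - \vz^{(h,\star)}\|_2$ for every $k \in \range{m}$, so summing the displayed inequality over $k \in \range{m}$ gives
\[
\sum_{k=1}^m \|\vz^{(h,k)} - \vz^{(h,\star)}\|_2^2 \geq \tfrac{m}{2}\|\hvz^{(h,m+1)} - \vz^{(h,\star)}\|_2^2 - \sum_{k=1}^m \|\vz^{(h,k)} - \hvz^{(h,k+1)}\|_2^2.
\]

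To finish, I would multiply through by $\frac{\mu}{2}$, sum over $h \in \range{H}$, and combine with \Cref{lemma:dyn-stronpos}, reindexing the trailing double sum as $\sum_{h=1}^H \sum_{k=1}^m \|\vz^{(h,k)} - \hvz^{(h,k+1)}\|_2^2 = \sum_{t=1}^T \|\vz^{(t)} - \hvz^{(t+1)}\|_2^2$ via $t = (h-1)m + k$ together with $\hvz^{(h,k+1)} = \hvz^{(t+1)}$. The coefficient in front of this correction sum is $\frac{\mu}{2}$, and the hypothesis $\eta \leq \frac{1}{2\mu}$ is precisely what guarantees $\frac{\mu}{2} \leq \frac{1}{4\eta}$; hence moving the correction to the left-hand side and relaxing its coefficient to $\frac{1}{4\eta}$ produces exactly the claimed inequality.

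The steps are all routine, so I do not expect a genuine obstacle; the only points requiring care are the bookkeeping between the two indexing schemes---the global time index $t \in \range{T}$ versus the block/within-block pair $(h,k)$---and invoking the monotonicity property on the correct (secondary) sequence $\hvz$ over the full static range $k \in \{1,\dots,m+1\}$. The mildly delicate bit is ensuring that the factor $\tfrac{1}{2}$ lost in the decomposition is absorbed correctly and that the learning-rate condition does the work of matching $\tfrac{\mu}{2}$ to $\tfrac{1}{4\eta}$.
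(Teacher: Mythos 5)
Your proposal is correct and follows essentially the same route as the paper's proof: both combine \Cref{lemma:dyn-stronpos} with the Young/triangle decomposition $\|\hvz^{(h,k+1)} - \vz^{(h,\star)}\|_2^2 \leq 2\|\vz^{(h,k)} - \vz^{(h,\star)}\|_2^2 + 2\|\vz^{(h,k)} - \hvz^{(h,k+1)}\|_2^2$, the monotonicity of $\|\hvz^{(h,k)} - \vz^{(h,\star)}\|_2$ within each static block (Proposition C.10 of Harris et al.), and the condition $\eta \leq \frac{1}{2\mu}$ to match $\frac{\mu}{2}$ against $\frac{1}{4\eta}$. The only difference is cosmetic ordering—you apply monotonicity termwise before summing and absorb the correction coefficient at the end, whereas the paper adds the $\frac{1}{4\eta}$ term upfront and applies monotonicity after summing over $k$.
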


\begin{proof}
    By \Cref{lemma:dyn-stronpos}, 
    \begin{align}
        \dreg_{\cL, x}^{(T)}(\seq_x^{(T)}) + \dreg_{\cL, y}^{(T)}(\seq_y^{(T)}) &+ \frac{1}{4\eta} \sum_{t=1}^T \| \vz^{(t)} - \hvz^{(t+1)} \|_2^2 \notag \\
        &\geq \frac{\mu}{2} \sum_{h=1}^H \sum_{k=1}^m \| \vz^{(h, k)} - \vz^{(h,\star)} \|_2^2 + \frac{1}{4\eta} \sum_{t=1}^T \| \vz^{(t)} - \hvz^{(t+1)} \|_2^2 \notag \\
        &\geq \frac{\mu}{4} \sum_{h=1}^H \sum_{k=1}^m \|\hvz^{(h,k+1)} - \vz^{(h,\star)} \|_2^2 \label{align:eta-mu} \\
        &\geq \frac{\mu m}{4} \sum_{h=1}^H \| \hvz^{(h,m+1)} - \vz^{(h,\star)} \|_2^2,\label{align:mono}
    \end{align}
    where~\eqref{align:eta-mu} uses that $\frac{1}{4\eta} \geq \frac{\mu}{2}$ along with Young's inequality and triangle inequality, and~\eqref{align:mono} follows from~\citep[Proposition C.10]{Harris22:Meta}.
\end{proof}

Armed with this important lemma, we are ready to establish our main result (\Cref{theorem:strong}), the detailed version of which is given below. We first recall that the function $f : \cX \times \cY \to \R$ is said to be $L$-smooth if $\|F(\vz) - F(\vz')\|_2 \leq L \|\vz - \vz'\|_2$, where $F(\vz) \defeq (\nabla_{\vx} f(\vx, \vy), - \nabla_{\vy} f(\vx, \vy))$.

\begin{theorem}[Detailed version of~\Cref{theorem:strong}]
    \label{theorem:strong-detailed}
    Let $f^{(h)} : \cX \times \cY$ be a $\mu$-strongly convex-concave and $L$-smooth function, for $h \in \range{H}$. Suppose that both players employ $\ogd$ with learning rate $\eta = \min\left\{ \frac{1}{8L}, \frac{1}{2\mu} \right\}$ for $T$ repetitions, where $T = m \times H$ and $m \geq \frac{2}{\eta \mu}$. Then,
    \begin{equation*}
         \sum_{t=1}^T \left( \|\vz^{(t)} - \hvz^{(t)} \|_2^2 + \|\vz^{(t)} - \hvz^{(t+1)} \|_2^2 \right) \leq 4 \diam_{\cZ}^2 + 8\eta^2 \|F(\vz^{(1)}) \|^2_2 + 8 \svarne^{(H)} + 16 \eta^2 \vargrad^{(H)}.
    \end{equation*}
    Thus, $\sum_{t=1}^T \left( \eg^{(t)}(\vz^{(t)}) \right)^2 = O(1 + \svarne^{(H)} + \vargrad^{(H)})$.
\end{theorem}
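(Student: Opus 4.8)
The plan is to combine the dynamic RVU bound (\Cref{lemma:drvu}, applied to both players and summed as in \Cref{cor:drvu}) with the strong-nonnegativity lemma (\Cref{lemma:monotone}), so that the negative path-length terms from the RVU bound are partly cancelled and partly absorbed by the strongly positive lower bound on dynamic regret, leaving only the variation measures $\svarne^{(H)}$ and $\vargrad^{(H)}$ on the right-hand side. First I would write the summed dynamic RVU bound from \eqref{eq:dregstr-1} and \eqref{eq:dregstr-2}: the telescoping Bregman terms across each block give a contribution of the form $\frac{1}{2\eta}\sum_{h=1}^H \big(\|\hvz^{(h,1)}-\vz^{(h,\star)}\|_2^2 - \|\hvz^{(h,m+1)}-\vz^{(h,\star)}\|_2^2\big)$, the prediction-error term $\eta\sum_{t=1}^T \|\vu_z^{(t)}-\vm_z^{(t)}\|_2^2$, and the negative second-order path length $-\frac{1}{2\eta}\sum_{t=1}^T \big(\|\vz^{(t)}-\hvz^{(t)}\|_2^2 + \|\vz^{(t)}-\hvz^{(t+1)}\|_2^2\big)$.

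Next I would bound the prediction-error term. Since $\vm_z^{(t)}=\vu_z^{(t-1)}$ and $F$ is $L$-smooth, for $t$ in the interior of a block $\|\vu_z^{(t)}-\vm_z^{(t)}\|_2^2 = \|F(\vz^{(t)})-F(\vz^{(t-1)})\|_2^2 \le 2L^2\|\vz^{(t)}-\vz^{(t-1)}\|_2^2$; at the block boundaries the operator itself changes, contributing an extra $\vargrad^{(H)}$ factor (up to constants, after a triangle-inequality split exactly as in \eqref{eq:Young-triangle}--\eqref{eq:upbounds}). Since $\eta\le\frac{1}{8L}$, the term $2\eta L^2\sum\|\vz^{(t)}-\vz^{(t-1)}\|_2^2$ is dominated by a fraction of the negative path length (each $\|\vz^{(t)}-\vz^{(t-1)}\|_2^2 \le 2\|\vz^{(t)}-\hvz^{(t)}\|_2^2 + 2\|\vz^{(t-1)}-\hvz^{(t)}\|_2^2$), so it can be absorbed, leaving the boundary gradient-variation contribution of order $\eta^2\vargrad^{(H)}$. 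Then I would invoke \Cref{lemma:monotone} on the left: writing $\dreg^{(T)}_{\cL}(\seq_z^{(T)})$ as the left side of \Cref{cor:drvu}, the lemma gives $\dreg^{(T)}_{\cL} + \frac{1}{4\eta}\sum_t\|\vz^{(t)}-\hvz^{(t+1)}\|_2^2 \ge \frac{\mu m}{4}\sum_h \|\hvz^{(h,m+1)}-\vz^{(h,\star)}\|_2^2$. The crucial use of $m\ge\frac{2}{\eta\mu}$ is that $\frac{\mu m}{4}\ge\frac{1}{2\eta}$, so these strongly-positive terms \emph{dominate} the positive telescoping terms $\frac{1}{2\eta}\|\hvz^{(h,m+1)}-\vz^{(h,\star)}\|_2^2$ in the RVU bound and cancel them, after re-indexing $\hvz^{(h,1)}=\hvz^{(h-1,m+1)}$ so the remaining $\frac{1}{2\eta}\|\hvz^{(h,1)}-\vz^{(h,\star)}\|_2^2$ telescopes against the comparator differences, producing $\svarne^{(H)}$ via $\|\hvz^{(h,1)}-\vz^{(h,\star)}\|_2^2 - \|\hvz^{(h,1)}-\vz^{(h-1,\star)}\|_2^2 \le 2\diam_{\cZ}\|\vz^{(h,\star)}-\vz^{(h-1,\star)}\|_2$.

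Assembling these pieces, the residual positive terms are the initialization $\diam_{\cZ}^2$, the $t=1$ prediction term $O(\eta^2\|F(\vz^{(1)})\|_2^2)$, the variation $\svarne^{(H)}$ (with the $2\diam_{\cZ}$ factor converting to the claimed constant), and $O(\eta^2\vargrad^{(H)})$; everything else is a negative path-length term, and rearranging isolates the second-order path length with the stated constants. Finally, the bound on $\sum_t(\eg^{(t)}(\vz^{(t)}))^2$ follows exactly as in the proof of \Cref{theorem:nonasymptotic-detailed}: \Cref{claim:BR-gap} (with $G=1$ for the Euclidean regularizer) bounds each best-response gap by a constant multiple of $\|\vz^{(t)}-\hvz^{(t)}\|_2 + \|\vz^{(t)}-\hvz^{(t+1)}\|_2$, squaring and summing gives the path length, and substituting the path-length bound yields $O(1+\svarne^{(H)}+\vargrad^{(H)})$. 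The main obstacle I anticipate is the careful bookkeeping at block boundaries: the prediction $\vm_z^{(t)}$ compares utilities from two \emph{different} games there, and one must cleanly separate the genuine strategy movement (absorbed by the path length) from the operator change (charged to $\vargrad^{(H)}$), while simultaneously checking that the $m\ge\frac{2}{\eta\mu}$ condition is exactly what makes the strong-convexity gain outweigh the telescoping boundary terms rather than merely matching them.
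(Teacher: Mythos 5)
Your overall architecture matches the paper's proof---the block-wise dynamic RVU bounds \eqref{eq:dregstr-1}--\eqref{eq:dregstr-2}, the lower bound of \Cref{lemma:monotone}, absorption of the prediction error via $L$-smoothness and $\eta \le \frac{1}{8L}$ with the block-boundary terms charged to $\vargrad^{(H)}$, and \Cref{claim:BR-gap} for the equilibrium-gap conversion---but there is a genuine gap at the one step that distinguishes this theorem from \Cref{theorem:nonasymptotic}: your comparator-switching inequality produces the \emph{first-order} variation, not $\svarne^{(H)}$. Concretely, you close the telescoping across block boundaries with $\|\hvx^{(h,1)} - \vx^{(h,\star)}\|_2^2 - \|\hvx^{(h,1)} - \vx^{(h-1,\star)}\|_2^2 \le 2\diam_{\cX}\|\vx^{(h,\star)} - \vx^{(h-1,\star)}\|_2$ (in the notation of your plan, with $\vz$ in place of $\vx$), which yields a residual of order $\frac{\diam_{\cZ}}{\eta}\sum_{h}\|\vz^{(h+1,\star)} - \vz^{(h,\star)}\|_2$. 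That is exactly the first-order argument of \Cref{lemma:drvu}/\Cref{theorem:pathlength}; it never actually uses the strong-convexity surplus (the RVU bound's own negative term $-\frac{1}{2\eta}\|\hvz^{(h,m+1)} - \vz^{(h,\star)}\|_2^2$ already suffices for the cancellation you describe), and it proves only the meta-learning analogue of \Cref{theorem:nonasymptotic}, not the claimed bound $8\svarne^{(H)} = 8\sum_{h}\|\vz^{(h+1,\star)} - \vz^{(h,\star)}\|_2^2$, which can be far smaller.

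The whole point of the condition $m \ge \frac{2}{\eta\mu}$ is that the \emph{combined} negative coefficient on $\|\hvz^{(h,m+1)} - \vz^{(h,\star)}\|_2^2$ becomes $-\frac{1}{2\eta}\cdot\frac{2+\eta\mu m}{2} \le -\frac{1}{\eta}$, i.e., \emph{twice} the positive coefficient $\frac{1}{2\eta}$ carried by the next block's term $\|\hvz^{(h+1,1)} - \vz^{(h+1,\star)}\|_2^2 = \|\hvz^{(h,m+1)} - \vz^{(h+1,\star)}\|_2^2$. That factor of two is precisely what licenses Young's inequality
\begin{equation*}
\|\hvz^{(h,m+1)} - \vz^{(h+1,\star)}\|_2^2 \le 2\|\hvz^{(h,m+1)} - \vz^{(h,\star)}\|_2^2 + 2\|\vz^{(h+1,\star)} - \vz^{(h,\star)}\|_2^2,
\end{equation*}
after which the first term on the right is fully cancelled and only the \emph{squared} equilibrium deviation survives, with coefficient $\frac{1}{\eta}$; summing over $h$ gives $\frac{1}{\eta}\svarne^{(H)}$, and rearranging yields the stated constants. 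This is how the paper converts the strong-convexity gain into a second-order bound. Replace your first-order switching inequality with this Young-inequality step and the rest of your plan goes through; as written, the surplus from $\mu$-strong convexity is wasted and the stated second-order bound is not established.
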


\begin{proof}
Combining~\Cref{lemma:monotone} with~\eqref{eq:dregstr-1} and~\eqref{eq:dregstr-2},
\begin{align*}
    0 \leq \frac{1}{2\eta} \sum_{h=1}^H &\left( \|\hvz^{(h,1)} - \vz^{(h,\star)} \|_2^2 - \frac{2 + \eta \mu m}{2} \|\hvz^{(h,m+1)} - \vz^{(h,\star)} \|_2^2 \right) \\+ &\eta \sum_{t=1}^T \|\vu_z^{(t)} - \vm_z^{(t)} \|_2^2 - \frac{1}{4 \eta} \sum_{t=1}^T \left( \|\vz^{(t)} - \hvz^{(t)} \|_2^2 + \|\vz^{(t)} - \hvz^{(t+1)} \|_2^2 \right),
\end{align*}
for a sequence of Nash equilibria $(\vz^{(h,\star)})_{1 \leq h \leq H}$, where we used the notation $\vu_z^{(t)} \defeq (\vu_x^{(t)}, \vu_y^{(t)})$ and $\vm_z^{(t)} \defeq (\vm_x^{(t)}, \vm_y^{(t)})$. Now we bound the first term of the right-hand side above as 
\begin{align*}
    \frac{1}{2\eta} \sum_{h=1}^H &\left( \|\hvz^{(h,1)} - \vz^{(h,\star)} \|_2^2 - 2 \|\hvz^{(h,m+1)} - \vz^{(h,\star)} \|_2^2 \right) \leq \\
    &\frac{1}{2\eta} \|\hvz^{(1,1)} - \vz^{(1, \star)} \|_2^2 + \frac{1}{2\eta} \sum_{h=1}^{H-1} \left( \|\hvz^{(h+1,1)} - \vz^{(h+1,\star)} \|_2^2 - 2 \|\hvz^{(h+1,1)} - \vz^{(h,\star)} \|_2^2 \right),
\end{align*}
where we used the fact that $m \geq \frac{2}{\eta \mu}$ and $\hvz^{(h,m+1)} = \hvz^{(h+1, 1)}$, for $h \in \range{H-1}$. Hence, continuing from above,
\begin{align*}
    \frac{1}{2\eta} \sum_{h=1}^H \left( \|\hvz^{(h,1)} - \vz^{(h,\star)} \|_2^2 - 2 \|\hvz^{(h,m+1)} - \vz^{(h,\star)} \|_2^2 \right) \leq &\frac{1}{2\eta} \|\hvz^{(1,1)} - \vz^{(1, \star)} \|_2^2 \\
    &+ \frac{1}{\eta} \sum_{h=1}^{H-1} \|\vz^{(h+1,\star)} - \vz^{(h,\star)} \|_2^2,
\end{align*}
since $\|\hvz^{(h+1,1)} - \vz^{(h+1,\star)} \|_2^2 \leq 2 \|\hvz^{(h+1,1)} - \vz^{(h,\star)} \|_2^2 + 2 \|\vz^{(h,\star)} - \vz^{(h+1,\star)} \|_2^2$, by the triangle inequality and Young's inequality. Moreover, for $t \geq 2$,
\begin{align*}
    \|\vu_z^{(t)} - \vu_z^{(t-1)} \|_2^2 = \|F^{(t)}(\vz^{(t)}) - F^{(t-1)}(\vz^{(t-1)}) \|_2^2 &\leq 2L^2 \|\vz^{(t)} - \vz^{(t-1)} \|_2^2 \\
    &+ 2 \| F^{(t)}(\vz^{(t-1)}) - F^{(t-1)}(\vz^{(t-1)}) \|_2^2,
\end{align*}
by $L$-smoothness. As a result,
\begin{equation*}
    \sum_{t=1}^{T-1} \| \vu_z^{(t+1)} - \vu_z^{(t)} \|_2^2 \leq 2L^2 \sum_{t=1}^{T-1} \|\vz^{(t)} - \vz^{(t-1)} \|_2^2 +  2 \vargrad^{(H)},
\end{equation*}
and the claimed bound on the second-order path length follows. Finally, the second claim of the theorem follows from~\Cref{claim:BR-gap} using convexity-concavity, analogously to \Cref{theorem:nonasymptotic}.
\end{proof}
We conclude this subsection by pointing out an improved variation-dependent regret bound, which follows directly from~\Cref{theorem:strong-detailed} (\emph{cf}. \Cref{cor:indreg}).

\begin{corollary}
    \label{cor:reg-strong}
    In the setup of~\Cref{theorem:strong-detailed}, the maximum of the two players' regrets, $\max\{ \reg_{\cL, x}^{(T)}, \reg_{\cL, y}^{(T)}\}$, can be upper bounded by
    \begin{equation*}
        \frac{\diam^2_{\cZ}}{\eta} + 16 \eta L^2 \diam^2_{\cZ} + (32 \eta^3 L^2 + \eta) \|F(\vz^{(1)})\|_2^2 + 32 \eta L^2 \svarne^{(H)} + (64 \eta^3 L^2 + 2 \eta) \vargrad^{(H)}.
    \end{equation*}
\end{corollary}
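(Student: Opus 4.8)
The plan is to mirror the argument of \Cref{cor:indreg}, but to feed in the stronger second-order path-length bound of \Cref{theorem:strong-detailed} in place of \Cref{theorem:pathlength}. First I would specialize the dynamic RVU bound of \Cref{lemma:drvu} to a \emph{constant} comparator $\vx^{(1,\star)} = \dots = \vx^{(T,\star)} = \vx^\star$, which recovers external regret (the lemma is stated for an arbitrary sequence of linear utilities, so it applies verbatim with the gradient utilities $\vu_x^{(t)}$ here). Dropping the nonpositive second-order path-length term, this yields $\reg_{\cL, x}^{(T)}(\vx^\star) \leq \frac{\diam_{\cX}^2}{2\eta} + \eta \sum_{t=1}^T \|\vu_x^{(t)} - \vm_x^{(t)}\|_2^2$, and symmetrically for Player $y$. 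Since $\|\vu_x^{(t)} - \vm_x^{(t)}\|_2^2, \|\vu_y^{(t)} - \vm_y^{(t)}\|_2^2 \leq \|\vu_z^{(t)} - \vm_z^{(t)}\|_2^2$ and $\diam_{\cX}, \diam_{\cY} \leq \diam_{\cZ}$, it suffices to bound the single quantity $\frac{\diam_{\cZ}^2}{2\eta} + \eta \sum_{t=1}^T \|\vu_z^{(t)} - \vm_z^{(t)}\|_2^2$, which dominates $\max\{\reg_{\cL, x}^{(T)}, \reg_{\cL, y}^{(T)}\}$.

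The second step is to control the prediction error $\sum_{t=1}^T \|\vu_z^{(t)} - \vm_z^{(t)}\|_2^2$ exactly as in the proof of \Cref{theorem:strong-detailed}. For $t = 1$ we have $\vm_z^{(1)} = \vec{0}$, so the term equals $\|F(\vz^{(1)})\|_2^2$; for $t \geq 2$, $L$-smoothness gives $\|\vu_z^{(t)} - \vu_z^{(t-1)}\|_2^2 \leq 2L^2 \|\vz^{(t)} - \vz^{(t-1)}\|_2^2 + 2\|F^{(t)}(\vz^{(t-1)}) - F^{(t-1)}(\vz^{(t-1)})\|_2^2$, whose second summand sums to at most $2\vargrad^{(H)}$. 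It then remains to bound $\sum_{t=2}^T \|\vz^{(t)} - \vz^{(t-1)}\|_2^2$, which by writing $\vz^{(t)} - \vz^{(t-1)} = (\vz^{(t)} - \hvz^{(t)}) - (\vz^{(t-1)} - \hvz^{(t)})$ and applying Young's inequality is at most twice the second-order path length $\sum_{t=1}^T (\|\vz^{(t)} - \hvz^{(t)}\|_2^2 + \|\vz^{(t)} - \hvz^{(t+1)}\|_2^2)$.

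The final step is to substitute the second-order path-length bound from \Cref{theorem:strong-detailed}, namely $4\diam_{\cZ}^2 + 8\eta^2 \|F(\vz^{(1)})\|_2^2 + 8\svarne^{(H)} + 16\eta^2 \vargrad^{(H)}$, and collect terms. After multiplying through by $\eta$ and using $\frac{\diam_{\cZ}^2}{2\eta} \leq \frac{\diam_{\cZ}^2}{\eta}$, the coefficients line up with the claimed bound: the $\diam_{\cZ}^2$ term picks up $2\eta L^2 \cdot 2 \cdot 4 = 16\eta L^2$, the $\svarne^{(H)}$ term picks up $2\eta L^2 \cdot 2 \cdot 8 = 32\eta L^2$, the $\|F(\vz^{(1)})\|_2^2$ contributions ($\eta$ from $t = 1$ and $32\eta^3 L^2$ from the path length) combine into $(32\eta^3 L^2 + \eta)$, and the $\vargrad^{(H)}$ contributions ($2\eta$ from the smoothness residual and $64\eta^3 L^2$ from the path length) combine into $(64\eta^3 L^2 + 2\eta)$. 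Since every ingredient is either already established (\Cref{lemma:drvu}, \Cref{theorem:strong-detailed}) or an elementary inequality, I do not anticipate a genuine obstacle; the only point requiring care is keeping the constants aligned, in particular tracking the factor-of-two from the path-length conversion through the smoothness step so that it propagates correctly into each coefficient.
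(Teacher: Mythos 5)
Your proposal is correct and follows essentially the same route the paper intends: the paper derives \Cref{cor:reg-strong} exactly by specializing the dynamic RVU bound (\Cref{lemma:drvu}) to a static comparator as in \Cref{cor:indreg}, bounding the prediction errors via $L$-smoothness and $\vargrad^{(H)}$, and substituting the second-order path-length bound of \Cref{theorem:strong-detailed}. Your constant-tracking (the factors $16\eta L^2$, $32\eta L^2$, $32\eta^3 L^2 + \eta$, and $64\eta^3 L^2 + 2\eta$) matches the stated bound exactly, including the slack step $\frac{\diam_{\cZ}^2}{2\eta} \leq \frac{\diam_{\cZ}^2}{\eta}$.
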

Thus, setting the learning rate optimally implies that $\reg_{\cL, x}^{(T)}, \reg_{\cL, y}^{(T)} = O\left(\sqrt{1 + \svarne^{(H)} + \vargrad^{(H)}} \right)$. 

\subsection{Proofs from~\texorpdfstring{\Cref{sec:mediator}}{Section 3.3}}
\label{appendix:proofs-2}

In this subsection, we provide the proofs from~\Cref{sec:mediator}.

\subsubsection{Potential games}

We first characterize the behavior of (online) gradient descent ($\gd$) in time-varying potential games; we recall that $\gd$ is equivalent to $\ogd$ under the prediction $\vm_x^{(t)} = \Vec{0}$ for all $t$. Below we give the formal definition of a potential game.

\begin{definition}[Potential game]
    \label{def:pot}
An $n$-player game admits a \emph{potential} if there exists a function $\Phi : \bigtimes_{i=1}^n \cX_i \to \R$ such that for any player $i \in \range{n}$, any joint strategy profile $\Vec{x}_{-i} \in \bigtimes_{i' \neq i} \cX_{i'}$, and any pair of strategies $\vx_i, \vx_i' \in \cX_i$,
\begin{equation*}
    \Phi(\vx_i, \vec{x}_{-i}) - \Phi(\vx_i', \vec{x}_{-i}) = u_i(\vx_i, \Vec{x}_{-i} ) - u_i(\vx_i', \Vec{x}_{-i}).
\end{equation*}
\end{definition}

The key ingredient in the proof of~\Cref{theorem:pot} is the following bound on the second-order path length of the dynamics.

\begin{proposition}
    \label{prop:pot}
    Suppose that each player employs $\gd$ with a sufficiently small learning rate $\eta > 0$ and initialization $(\vx_1^{(1)}, \dots, \vx_n^{(1)}) \in \bigtimes_{i=1}^n \cX_i$ in a sequence of time-varying potential games. Then,
    \begin{equation}
        \label{eq:pot-pathlength}
        \frac{1}{2\eta} \sum_{t=1}^T \sum_{i=1}^n \|\vx_i^{(t+1)} - \vx_i^{(t)}\|_2^2 \leq \sum_{t=1}^T \left( \Phi^{(t)}(\vx_1^{(t+1)}, \dots, \vx_n^{(t+1)} ) - \Phi^{(t)}(\vx_1^{(t)}, \dots, \vx_n^{(t)}) \right).
    \end{equation}
\end{proposition}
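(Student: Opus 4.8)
The plan is to recognize that, although each player runs $\gd$ on its own utility, the defining property of a potential game lets us reinterpret the joint update as a single projected gradient \emph{ascent} step on the potential $\Phi^{(t)}$, after which a standard ascent-lemma argument yields the claimed bound. First I would unwind the dynamics: since $\gd$ coincides with $\ogd$ under the zero prediction $\vm_i^{(t)} = \vec{0}$, the update collapses to $\vx_i^{(t)} = \hvx_i^{(t)}$ and $\vx_i^{(t+1)} = \proj_{\cX_i}(\vx_i^{(t)} + \eta \nabla_{\vx_i} u_i^{(t)}(\vz^{(t)}))$, where $\vz^{(t)} \defeq (\vx_1^{(t)}, \dots, \vx_n^{(t)})$ is the joint profile and the feedback is the gradient $\nabla_{\vx_i} u_i^{(t)}(\vz^{(t)})$. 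Differentiating the potential identity of \Cref{def:pot} with respect to $\vx_i$ gives $\nabla_{\vx_i} u_i^{(t)}(\vz) = \nabla_{\vx_i} \Phi^{(t)}(\vz)$ for every player $i$, so each block of the update is a projected ascent step on the common potential $\Phi^{(t)}$.

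Next I would extract from the projection its first-order optimality (variational-inequality) condition: for the Euclidean projection $\langle \vx_i^{(t)} + \eta \nabla_{\vx_i}\Phi^{(t)}(\vz^{(t)}) - \vx_i^{(t+1)}, \, \vx_i - \vx_i^{(t+1)}\rangle \leq 0$ for all $\vx_i \in \cX_i$. Taking the comparator $\vx_i = \vx_i^{(t)}$ and rearranging yields the per-player inequality
\[
    \tfrac{1}{\eta}\|\vx_i^{(t+1)} - \vx_i^{(t)}\|_2^2 \leq \langle \nabla_{\vx_i}\Phi^{(t)}(\vz^{(t)}), \, \vx_i^{(t+1)} - \vx_i^{(t)}\rangle.
\]
Summing over $i \in \range{n}$ and recalling that $\nabla\Phi^{(t)}$ is the concatenation of the per-player gradients while $\|\vz^{(t+1)} - \vz^{(t)}\|_2^2 = \sum_{i} \|\vx_i^{(t+1)} - \vx_i^{(t)}\|_2^2$, the right-hand side becomes the full inner product $\langle \nabla\Phi^{(t)}(\vz^{(t)}), \, \vz^{(t+1)} - \vz^{(t)}\rangle$.

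The final ingredient is the smoothness of the potential: in the normal-form setting each $u_i^{(t)}$, and hence $\Phi^{(t)}$, is multilinear on the compact product of simplices, so every $\Phi^{(t)}$ is $L$-smooth for a uniform constant $L$. The associated quadratic lower bound gives $\Phi^{(t)}(\vz^{(t+1)}) - \Phi^{(t)}(\vz^{(t)}) \geq \langle \nabla\Phi^{(t)}(\vz^{(t)}), \, \vz^{(t+1)} - \vz^{(t)}\rangle - \tfrac{L}{2}\|\vz^{(t+1)} - \vz^{(t)}\|_2^2$. Combining this with the summed variational inequality and choosing the learning rate small enough that $\eta \leq \tfrac{1}{L}$ — which is exactly what ``sufficiently small'' should mean here — makes the coefficient of $\sum_i \|\vx_i^{(t+1)} - \vx_i^{(t)}\|_2^2$ at least $\tfrac{1}{\eta} - \tfrac{L}{2} \geq \tfrac{1}{2\eta}$, so that $\Phi^{(t)}(\vz^{(t+1)}) - \Phi^{(t)}(\vz^{(t)}) \geq \tfrac{1}{2\eta}\sum_i \|\vx_i^{(t+1)} - \vx_i^{(t)}\|_2^2$. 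Summing over $t \in \range{T}$ yields precisely \eqref{eq:pot-pathlength}. There is no deep obstacle; the only points requiring care are passing the gradient through the projection's variational inequality with the correct sign, and justifying the uniform smoothness constant $L$ that pins down the admissible range of $\eta$.
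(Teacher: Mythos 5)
Your proof is correct, and it is essentially the argument the paper relies on: the paper does not spell out a proof of \Cref{prop:pot} but instead defers to \citep[Theorem 4.3]{Anagnostides22:Last}, and the reasoning underlying that result is exactly your ``sufficient increase'' computation---the potential-game identity $\nabla_{\vx_i} u_i^{(t)} = \nabla_{\vx_i} \Phi^{(t)}$, the projection's variational inequality with comparator $\vx_i^{(t)}$, and $L$-smoothness of $\Phi^{(t)}$ with $\eta \leq 1/L$. Your handling of the two delicate points (the sign in the projection inequality, and the existence of a uniform smoothness constant across rounds, which follows from the uniform bound $\phimax$ on the multilinear potentials) is also sound, so there is no gap.
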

This bound can be derived from~\citep[Theorem 4.3]{Anagnostides22:Last}. Now, we note that if $\Phi^{(1)} = \Phi^{(2)} = \dots = \Phi^{(T)}$, the right-hand side of~\eqref{eq:pot-pathlength} telescops, thereby implying that the second-order path-length is bounded. More generally, the right-hand side of~\eqref{eq:pot-pathlength} can be upper bounded by
\begin{equation}
    \label{eq:ub-pot}
    2 \phimax + \sum_{t=1}^{T-1} \left( \Phi^{(t)}(\vx_1^{(t+1)}, \dots, \vx_n^{(t+1)}) - \Phi^{(t+1)}(\vx_1^{(t+1)}, \dots, \vx_n^{(t+1)}) \right) \leq 2 \phimax +  \varpot^{(T)},
\end{equation}
where $\phimax$ is an upper bound on $|\Phi^{(t)}(\cdot)|$ for any $t \in \range{T}$, and $\varpot^{(T)}$ is the variation measure of the potential functions we introduced in \Cref{sec:mediator}. Namely, $\varpot^{(T)} \defeq \sum_{t=1}^{T-1} d(\Phi^{(t)}, \Phi^{(t+1)})$, where $d : (\Phi, \Phi') \mapsto \max_{\vz \in \bigtimes_{i=1}^n \cX_i} \left( \Phi(\vz) - \Phi'(\vz) \right)$. Furthermore, similarly to \Cref{claim:BR-gap}, we know that the Nash equilibrium gap in the $t$-th potential game can be bounded in terms of $\sum_{i=1}^n \|\vx_i^{(t+1)} - \vx_i^{(t)}\|_2$. As a result, combining this property with \Cref{prop:pot} and \eqref{eq:ub-pot} establishes \Cref{theorem:pot}, the statement of which is recalled below.

\varpote*

\subsubsection{General-sum games}

We next turn out attention to general-sum multi-player games in normal form using the well-known bilinear formulation of correlated equilibria (\Cref{sec:mediator}). More precisely, this zero-sum game is played between a \emph{mediator}, who selects strategies from the set of correlated distributions $\Xi \defeq \Delta(\bigtimes_{i=1}^n \cA_i)$, and the $n$ players. Per the usual description of correlated equilibria, a mediator recommends to each player $i \in \range{n}$ an action in $\cA_i$, and then that player selects an action that could be different from the one recommended by the mediator. As a result, each pure strategy of Player $i$ corresponds to a different mapping from $\cA_i$ to $\cA_i$. Player $i$ wants to maximize the deviation benefit, while the mediator wants to minimize it. We let $\bar{\cX}_i$ be the set of (mixed) strategies of Player $i$. The identity mapping $\vec{d}_i : \cA_i \ni a_i \mapsto a_i$ will be referred to as the \emph{direct} (or obedient) strategy, as it prescribes following the recommendation of the mediator. As a result, this game can be naturally cast as the bilinear saddle-point problem in the form of~\eqref{eq:mediator-bil}. By the existence of correlated equilibria~\citep{Hart89:Existence}, it follows that there exists a mediator strategy $\vmu^\star \in \Xi$ such that $\bar{\vx_i}^\top \mat{A}^\top_i \vmu^\star \leq 0$, for any player $i \in \range{n}$ and $\bar{\vx_i} \in \bar{\cX_i}$; in words, there is no benefit for any player to deviate from the recommendation of the mediator.

Now, to establish~\Cref{property:nonnegativemed}, let us first define the regret of any player $i \in \range{n}$ as
\begin{equation*}
    \reg_i^{(T)}(\bar{\vx}_i^{\star}) \defeq \sum_{t=1}^T \langle \bar{\vx}_i^{\star} - \bar{\vx}_i^{(t)}, (\mat{A}_i^{(t)})^\top \vmu^{(t)} \rangle,
\end{equation*}
where $\bar{\vx}_i^{\star} \in \Bar{\cX_i}$, so that $\sum_{i=1}^n \reg_i^{(T)}$ is easily seen to be equal to the regret of Player max in~\eqref{eq:mediator-bil}. Further, the dynamic regret of the mediator---Player min in~\eqref{eq:mediator-bil}---can be expressed as
\begin{equation*}
    \dreg^{(T)}_\mu(\vmu^{(1,\star)}, \dots, \vmu^{(T,\star)}) \defeq \sum_{t=1}^T \langle \vmu^{(t)} - \vmu^{(t,\star)}, \sum_{i=1}^n \mat{A}^{(t)}_i \bar{\vx}_i^{(t)} \rangle.
\end{equation*}

The key idea in the proof of \Cref{property:nonnegativemed} is that the \emph{time-invariant} direct strategy $(\vec{d}_1, \dots, \vec{d}_n)$ suffices to guarantee \Cref{property:gennonnegative} as long as the mediator is selecting a sequence of CE.

\propertymed*

\begin{proof}
    We have that
    \begin{equation*}
        \dreg_\mu^{(T)} + \sum_{i=1}^n \reg_i^{(T)}(\bar{\vx}_i^{\star}) = \sum_{i=1}^n \sum_{t=1}^T \langle \bar{\vx}_i^{\star}, (\mat{A}_i^{(t)})^\top \vmu^{(t)} \rangle - \sum_{t=1}^T \langle \vmu^{(t,\star)}, \sum_{i=1}^n \mat{A}_i^{(t)} \bar{\vx}_i^{(t)} \rangle.
    \end{equation*}
    Now for any correlated equilibrium $\vmu^{(t,\star)}$ of the $t$-th game, we have that $ \langle \vmu^{(t,\star)}, \mat{A}_i^{(t)} \bar{\vx}_i^{(t)} \rangle \leq 0$ for any player $i \in \range{n}$, $\bar{\vx}_i \in \bar{\cX}_i$, and time $t \in \range{T}$, which in turn implies that $- \sum_{t=1}^T \langle \vmu^{(t,\star)}, \sum_{i=1}^n \mat{A}_i^{(t)} \bar{\vx}_i^{(t)} \rangle \geq 0$. Moreover, $\sum_{i=1}^n \max_{\Bar{\vx}_i^\star \in \Bar{\cX}_i} \sum_{t=1}^T \langle \Bar{\vx}_i^\star, (\mat{A}_i^{(t)})^\top \vmu^{(t)} \rangle \geq \sum_{i=1}^n \sum_{t=1}^T \langle \vec{d}_i, (\mat{A}_i^{(t)})^\top \vmu^{(t)} \rangle = 0$, by definition of the direct strategy $\vec{d}_i \in \bar{\cX}_i $. This concludes the proof.
\end{proof}

\Cref{property:nonnegativemed} in conjunction with the argument of \Cref{theorem:nonasymptotic} readily imply \Cref{theorem:general}. We should note here that the variation measure $\varmat^{(T)} \defeq \sum_{i=1}^n \sum_{t=1}^{T-1} \|\mat{A}_i^{(t+1)} - \mat{A}_i^{(t)}\|_2^2$ is also immediately bounded in terms of the second-order variation of the sum of the players' utility tensors. It is also worth noting that the description of the bilinear formulation~\eqref{eq:mediator-bil} grows exponentially with the number of players; this is unavoidable in explicitly represented (normal-form) games, but it is worth investigating whether more compact representations exist in succinct classes of games, such as multi-player polymatrix.

Next, we provide the main implication of \Cref{theorem:general} in the meta-learning setting, which is similar to the meta-learning guarantee of \Cref{prop:metalearning} we established earlier in two-player zero-sum games. Below, we denote by $\Xi^{(h,\star)}$ the set of correlated equilibria of the $h$-th game in the meta-learning sequence.

\begin{corollary}[Meta-learning in general games]
    \label{cor:metalearning-gen}
    Suppose that each player employs $\ogd$ in \eqref{eq:mediator-bil} with a suitable learning rate $\eta > 0$ and the prediction of~\eqref{eq:prediction} in a meta-learning general-sum problem with $H \in \N$ games, each repeated for $m \in \N$ consecutive iterations. Then, for an average game,
    \begin{equation}
        \label{eq:meta-ce}
        O\left( \frac{1}{\epsilon^2 H} + \frac{\varce^{(H)}}{\epsilon^2 H} \right)
    \end{equation}
    iterations suffice so that the mediator reaches an $\epsilon$-approximate correlated equilibrium, where $$\varce^{(H)} \defeq \inf_{\vmu^{(h,\star)} \in \Xi^{(h,\star)}} \|\vmu^{(h+1,\star)} - \vmu^{(h,\star)} \|_2.$$
\end{corollary}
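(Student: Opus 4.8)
The plan is to mirror the proof of the two-player zero-sum meta-learning guarantee (\Cref{prop:metalearning}), replacing \Cref{theorem:nonasymptotic} by \Cref{theorem:general} and the Nash equilibrium gap by the correlated-equilibrium gap $\ceg$. The starting point is the observation that the mediator formulation \eqref{eq:mediator-bil} is itself a bilinear saddle-point problem---between Player min (the mediator, ranging over $\Xi$) and Player max (ranging over $\bigtimes_{i=1}^n \bar{\cX}_i$)---so the entire argument behind \Cref{theorem:nonasymptotic-detailed} goes through once \Cref{property:gennonnegative} is replaced by \Cref{property:nonnegativemed}. Concretely, I would run $\ogd$ with the improved prediction of \eqref{eq:prediction}, instantiated here as $\vm_\mu^{(t)} \defeq -\sum_{i=1}^n \mat{A}_i^{(t)} \bar{\vx}_i^{(t-1)}$ for the mediator and $\vm_{\bar{\vx}_i}^{(t)} \defeq (\mat{A}_i^{(t)})^\top \vmu^{(t-1)}$ for each player $i \in \range{n}$. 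With this choice each prediction error $\|\vu^{(t)} - \vm^{(t)}\|_2$ depends only on the one-step deviation of the \emph{strategies} and not on the deviation of the matrices $\mat{A}_i^{(t)}$, so, exactly as in \Cref{remark:prediction}, the matrix-variation term $\varmat^{(T)}$ drops out of the dynamic RVU bound (\Cref{cor:drvu}) and the second-order path-length estimate. This yields $\sum_{t=1}^T (\ceg^{(t)}(\vmu^{(t)}))^2 = O(1 + \varce^{(T)})$.

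The second step specializes $\varce^{(T)}$ to the blocked meta-learning structure. Since the game is held fixed across each block of $m$ consecutive rounds, the correlated-equilibrium set $\Xi^{(t,\star)}$ is constant within each block $h \in \range{H}$ and coincides with $\Xi^{(h,\star)}$. Hence in the infimum defining $\varce^{(T)}$ one may take the comparator sequence $(\vmu^{(t,\star)})_{1 \le t \le T}$ to be constant within each block: all within-block transitions then vanish, and only the $H - 1$ block-boundary transitions survive, giving $\varce^{(T)} \le \varce^{(H)}$. Combined with the first step, $\sum_{t=1}^T (\ceg^{(t)}(\vmu^{(t)}))^2 = O(1 + \varce^{(H)})$.

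The final step is the counting argument of \Cref{prop:metalearning}. For each $h \in \range{H}$, let $m^{(h)}$ be the number of rounds within block $h$ until the mediator's CE gap first drops to at most $\epsilon$; then at least $m^{(h)} - 1$ rounds of that block carry a CE gap exceeding $\epsilon$ and therefore contribute more than $\epsilon^2$ each to the left-hand side. Summing over blocks gives $\epsilon^2 \sum_{h=1}^H (m^{(h)} - 1) < \sum_{t=1}^T (\ceg^{(t)}(\vmu^{(t)}))^2 = O(1 + \varce^{(H)})$, and solving for the per-game average $\frac{1}{H}\sum_{h=1}^H m^{(h)}$ produces the claimed bound \eqref{eq:meta-ce} (the additive $+1$ per game matching the one in \eqref{eq:itercomplex}).

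The main obstacle is the first step: justifying that \Cref{theorem:general} can be invoked with $\varmat^{(T)}$ eliminated. This requires checking that the improved prediction is causally implementable in the mediator game---which holds in the meta-learning setting, where each matrix $\mat{A}_i^{(t)}$ is revealed at the start of round $t$, so \eqref{eq:prediction} is well-defined---and, more importantly, that the dynamic RVU bound applied to the concatenated iterate $(\vmu, \bar{\vx}_1, \dots, \bar{\vx}_n)$ still absorbs the resulting second-order path-length term for a sufficiently small $\eta$, precisely as in the proof of \Cref{theorem:pathlength}. The remaining steps are routine.
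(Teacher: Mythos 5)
Your proposal is correct and follows essentially the same route as the paper: the paper derives this corollary by invoking \Cref{theorem:general} (i.e., the argument of \Cref{theorem:nonasymptotic} with \Cref{property:nonnegativemed} in place of \Cref{property:gennonnegative}) under the improved prediction of \eqref{eq:prediction} to eliminate $\varmat^{(T)}$, reducing the CE variation to the block-level quantity $\varce^{(H)}$, and then applying the same counting argument as in \Cref{prop:metalearning}. The only difference is that you spell out the intermediate steps (the instantiation of the prediction in the mediator game and the block-constant comparator choice) in more detail than the paper, which states them only implicitly.
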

In particular, in the meta-learning regime, $H \gg 1$, the iteration-complexity bound~\eqref{eq:meta-ce} is dominated by the (algorithm-independent) similarity metric of the correlated equilibria $\frac{\varce^{(H)}}{H}$. \Cref{cor:metalearning-gen} establishes significant gains when $\frac{\varce^{(H)}}{H} \ll 1$.

Finally, we conclude this subsection by providing a variation-dependent regret bound in general-sum multi-player games. To do so, we combine \Cref{cor:indreg} with \Cref{theorem:general}, leading to the following guarantee.

\begin{corollary}[Regret in general-sum games]
    \label{cor:regret-general}
    In the setup of \Cref{theorem:general}, 
    $$\reg_\mu^{(T)}, \reg_i^{(T)} = O \left( \frac{1}{\eta} + \eta \left( 1 + \varce^{(T)} + \varmat^{(T)} \right) \right),$$
    for any player $i \in \range{n}$.
\end{corollary}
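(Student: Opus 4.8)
The plan is to mirror the derivation of \Cref{cor:indreg}, replacing the role of \Cref{theorem:pathlength} with the second-order path-length bound that underlies \Cref{theorem:general}. Recall that in the bilinear formulation \eqref{eq:mediator-bil} the mediator is Player min and the collection of the $n$ players constitutes Player max, so each of $\reg_\mu^{(T)}$ and $\reg_i^{(T)}$ is the \emph{external} regret of an individual $\ogd$ learner. The first step is therefore to instantiate the dynamic RVU bound of \Cref{lemma:drvu} under a \emph{static} comparator ($\vx^{(1,\star)} = \dots = \vx^{(T,\star)}$), which kills the first-order-variation term and yields, for each learner, a bound of the form $O(1/\eta) + \eta \sum_{t=2}^T \|\vu^{(t)} - \vu^{(t-1)}\|_2^2$ together with the (negative) second-order path-length term of that learner.

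The second step is to control the utility-variation term $\sum_{t}\|\vu^{(t)} - \vu^{(t-1)}\|_2^2$ of each learner exactly as in the proof of \Cref{theorem:pathlength}: writing the mediator's utility vector as $\vu_\mu^{(t)} = \sum_{i=1}^n \mat{A}_i^{(t)} \bar{\vx}_i^{(t)}$ (up to sign) and each player's utility vector as $\vu_i^{(t)} = (\mat{A}_i^{(t)})^\top \vmu^{(t)}$, a triangle-inequality-plus-Young split separates each consecutive difference into (i) a term proportional to $L^2$ times the squared movement of the \emph{opposing} strategies and (ii) a term proportional to the squared payoff-matrix increments, which aggregates precisely to $\varmat^{(T)} = \sum_{i=1}^n \sum_{t=1}^{T-1}\|\mat{A}_i^{(t+1)} - \mat{A}_i^{(t)}\|_2^2$. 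Crucially, both $\sum_t \|\vmu^{(t)} - \vmu^{(t-1)}\|_2^2$ and $\sum_i \sum_t \|\bar{\vx}_i^{(t)} - \bar{\vx}_i^{(t-1)}\|_2^2$ are dominated by the second-order path length of the concatenated profile $\vz^{(t)} = (\vmu^{(t)}, \bar{\vx}_1^{(t)}, \dots, \bar{\vx}_n^{(t)})$, up to the usual triangle-inequality factor (as in the final display of the proof of \Cref{cor:indreg}).

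The third step is to invoke the path-length bound driving \Cref{theorem:general}: combining \Cref{property:nonnegativemed} with \Cref{cor:drvu} along the sequence of correlated equilibria attaining $\varce^{(T)}$ gives $\sum_{t=1}^T (\|\vz^{(t)} - \hvz^{(t)}\|_2^2 + \|\vz^{(t)} - \hvz^{(t+1)}\|_2^2) = O(1 + \varce^{(T)} + \varmat^{(T)})$, the general-sum analogue of \Cref{theorem:pathlength}. Substituting this into the per-learner bounds from the first two steps collapses every overhead into $O(1/\eta) + \eta\, O(1 + \varce^{(T)} + \varmat^{(T)})$, which is exactly the claimed rate for $\reg_\mu^{(T)}$ and for every $\reg_i^{(T)}$.

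The main obstacle is the bookkeeping induced by the multi-agent structure of Player max: unlike the clean two-player case of \Cref{cor:indreg}, here one must verify that the individual RVU overheads of the $n+1$ learners are \emph{simultaneously} dominated by the single aggregate path-length quantity for $\vz$ and by the single aggregate variation $\varmat^{(T)}$. This requires checking that the matrix-increment terms arising separately in the mediator's and in each player's utility differences all fold into the same $\varmat^{(T)}$, and that the learning rate can be taken small enough that the negative path-length term inside each learner's RVU bound is not needed to offset its own overhead, since that cancellation is already supplied globally by the path-length bound of the third step. Everything else is a routine repetition of the Euclidean RVU computation.
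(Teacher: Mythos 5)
Your proposal is correct and takes essentially the same approach as the paper, whose proof of \Cref{cor:regret-general} is exactly the combination you describe: the static-comparator instantiation of the RVU machinery from \Cref{cor:indreg} (with the utility-variation split into opponents' strategy movement plus matrix increments), fed by the aggregate second-order path-length bound underlying \Cref{theorem:general}, which itself comes from \Cref{property:nonnegativemed} together with the argument of \Cref{theorem:nonasymptotic}. Your bookkeeping remarks---that all $n+1$ learners' matrix-increment terms fold into the single $\varmat^{(T)}$ and that each learner's own negative path-length term can simply be dropped because the cancellation is supplied globally---are precisely the implicit content of the paper's one-line proof.
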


In particular, if one selects optimally the learning rate, \Cref{cor:regret-general} implies that the individual regret of each player is bounded by $O\left( \sqrt{1 + \varce^{(T)} + \varmat^{(T)}} \right)$. We note again that there are techniques that would allow (nearly) recovering such regret guarantees without having to know the variation measures in advance~\citep{Zhang22:No}.

\subsection{Proofs from~\texorpdfstring{\Cref{sec:static}}{Section 3.4}}
\label{appendix:proofs-4}

Finally, in this subsection we present the proofs omitted from~\Cref{sec:static}. We begin with \Cref{prop:dyn-OMD}, the statement of which is recalled below. We first recall that a regularizer $\phi_x$, $1$-strongly convex with respect to a norm $\|\cdot\|$, is said to be $G$-smooth if $\| \nabla \phi_x(\vx) - \nabla \phi_x (\vx') \|_* \leq G \|\vx - \vx'\|$, for all $\vx, \vx'$.

\dynOMD*

\begin{proof}
First, using~\Cref{claim:BR-gap}, it follows that the dynamic regret $\dreg_x^{(T)}$ of Player $x$ up to time $T$ can be bounded as
\begin{align}
    \sum_{t=1}^T \left( \max_{\vx^{(t,\star)} \in \cX} \left\{ \langle \vx^{(t,\star)}, \vu_x^{(t)} \rangle \right\} - \langle \vx^{(t)}, \vu_x^{(t)} \rangle \right)& \notag \\
    \leq \sum_{t=1}^T \bigg( \left( \frac{G \diam_{\cX}}{\eta} + \|\vu_x^{(t)}\|_* \right)& \|\vx^{(t)} - \hvx^{(t+1)} \| + \frac{G \diam_{\cX}}{\eta} \|\vx^{(t)} - \hvx^{(t)} \| \bigg),\label{eq:dreg-fo}
\end{align}
where $G > 0$ is the smoothness parameter of the regularizer, and $\eta > 0$ is the learning rate. We further know that $\sum_{t=1}^T \left( \|\vx^{(t)} - \hvx^{(t)} \|^2 + \|\vx^{(t)} - \hvx^{(t+1)} \|^2 \right) = O(1)$ for any instance of~$\omd$ in a two-player zero-sum game~\citep{Anagnostides22:Last}, which in turn implies that $\sum_{t=1}^T \left( \|\vx^{(t)} - \hvx^{(t)} \| + \|\vx^{(t)} - \hvx^{(t+1)} \| \right) = O(\sqrt{T})$ by Cauchy-Schwarz. Thus, combining with~\eqref{eq:dreg-fo} we have shown that $\dreg_x^{(T)} = O(\sqrt{T})$. Similar reasoning yields that $\dreg_y^{(T)} = O(\sqrt{T})$, concluding the proof. 
\end{proof}

Let us next make a certain refinement of~\Cref{observation:twopoint}. Staying on (static) bilinear saddle-point problems, we consider the unconstrained setting where $\cX = \R^{d_x}$ and $\cY = \R^{d_y}$, and we focus on the following update rule for $\tau \in \N$.
\begin{equation}
    \label{eq:oftrl}
    \begin{split}
        \vx^{(\tau+1)} &= \vx^{(\tau)} - 2\eta \mat{A} \vy^{(\tau)} + \eta \mat{A} \vy^{(\tau-1)}, \\
        \vy^{(\tau+1)} &= \vy^{(\tau)} + 2 \eta \mat{A}^\top \vx^{(\tau)} - \eta \mat{A}^\top \vx^{(\tau-1)},
    \end{split}
\end{equation}
where $\vx^{(0)}, \vx^{(-1)} \in \cX$ and $\vy^{(0)}, \vy^{(-1)} \in \cY$. This is an instance of \emph{optimistic follow the regularized leader ($\oftrl$)}~\citep{Syrgkanis15:Fast}, although our discussion here also covers the corresponding $\omd$ variant. Then, summing~\eqref{eq:oftrl} for $\tau = 1, 2, \dots, t$ we have
\begin{equation*}
    \begin{split}
        \sum_{\tau=1}^{t} \vx^{(\tau+1)} &=  \sum_{\tau=1}^{t} \vx^{(\tau)} - 2\eta \mat{A} \left( \sum_{\tau=1}^{t} \vy^{(\tau)} \right) + \eta \mat{A} \left( \sum_{\tau=1}^{t} \vy^{(\tau-1)} \right), \\
        \sum_{\tau=1}^{t} \vy^{(\tau+1)} &= \sum_{\tau=1}^{t} \vy^{(\tau)} + 2 \eta \mat{A}^\top \left( \sum_{\tau=1}^{t} \vx^{(\tau)} \right) - \eta \mat{A}^\top \left( \sum_{\tau=1}^{t} \vx^{(\tau-1)} \right),
    \end{split}
\end{equation*}
in turn implying that
\begin{equation}
    \label{eq:ave}
    \begin{split}
        \bar{\vx}^{(t+1)} &= \frac{1}{t+1}  \bar{\vx}^{(1)} + \eta \mat{A} \frac{1}{t+1} \vy^{(0)} + \frac{t}{t+1} \Bar{\vx}^{(t)} - 2 \eta \frac{t}{t+1}  \mat{A} \Bar{\vy}^{(t)} + \eta \frac{t-1}{t+1} \mat{A} \Bar{\vy}^{(t-1)}, \\
        \bar{\vy}^{(t+1)} &= \frac{1}{t+1}  \bar{\vy}^{(1)} - \eta \mat{A}^\top \frac{1}{t+1} \vx^{(0)} + \frac{t}{t+1} \Bar{\vy}^{(t)} + 2 \eta \frac{t}{t+1}  \mat{A}^\top \Bar{\vx}^{(t)} + \eta \frac{t-1}{t+1} \mat{A}^\top \Bar{\vx}^{(t-1)}.
    \end{split}
\end{equation}
Above, we define $\bar{\vx}^{(t)} \defeq \frac{1}{t} \sum_{\tau=1}^t \vx^{(\tau)}$ and $\bar{\vy}^{(t)} \defeq \frac{1}{t} \sum_{\tau=1}^t \vy^{(\tau)}$. That is, the update rule~\eqref{eq:ave} describes the evolution of the time average of~\eqref{eq:oftrl}. As a result, this implies that~\eqref{eq:ave} converges \emph{in a last-iterate sense} with a $T^{-1}$ rate, simply because $\oftrl$ incurs $O(1)$ regret~\citep{Syrgkanis15:Fast}. This is conceptually interesting as it suggests a simple way to analyze the convergence of the last iterate solely through a regret-based analysis. In fact, \eqref{eq:ave} is a variant of the so-called \emph{Halpern's iteration}~\citep{Diakonikolas20:Halpern}---a classical technique in optimization---that incorporates optimism; a similar update rule was recently shown to exhibit a new form of acceleration by~\citet{Yoon21:Accelerated}, and has engendered considerable interest ever since. Returning to~\Cref{observation:twopoint}, one can use~\eqref{eq:ave} to guarantee $O(\log T)$ dynamic regret in the standard feedback model, although it is not clear how to extend this argument in the constrained setting (\emph{cf.} \citep{Cai23:Doubly}).

\paragraph{General-sum games} We next extend our scope beyond bilinear saddle-point problems. We first observe that obtaining sublinear dynamic regret is precluded in general-sum games. In particular, we note that the computational-hardness result below (\Cref{prop:hard}) holds beyond the online learning setting. It should be stressed that, at least in normal-form games, without imposing computational or memory restrictions there are trivial online algorithms that guarantee even $O(1)$ dynamic regret by first exploring the payoff tensors and then computing a Nash equilibrium~\citep{Daskalakis11:Near}. We suspect that under the memory limitations imposed by~\citet{Daskalakis11:Near} there could be unconditional information-theoretic lower bounds, but that is left for future work.

\hard*

\begin{proof}
    We will make use of the fact that computing a Nash equilibrium in two-player (normal-form) games to a sufficiently small accuracy $\eps = O(1)$ requires superpolynomial time, unless the exponential-time hypothesis for $\PPAD$ fails~\citep{Rubinstein16:Settling}. Indeed, suppose that there exist polynomial-time algorithms that always guarantee that $\sum_{i=1}^n \dreg_i^{(T)} \leq \epsilon T$ for some polynomial $T \in \N$, where $n \defeq 2$. Then, this implies that there exists a time $t \in \range{T}$ such that 
    \begin{equation*}
        \max_{\vx_1^{(t,\star)} \in \cX_1} \langle \vx_1^{(t,\star)}, \vu_1^{(t)} \rangle - \langle \vx_1^{(t)}, \vu_1^{(t)} \rangle + \max_{\vx_2^{(t,\star)} \in \cX_2} \langle \vx_2^{(t,\star)}, \vu_2^{(t)} \rangle - \langle \vx_2^{(t)}, \vu_2^{(t)} \rangle \leq \epsilon,
    \end{equation*}
    which in turn implies that $(\vx_1^{(t)}, \vx_2^{(t)})$ is an $\epsilon$-approximate Nash equilibrium. Further, such a time $t \in \range{T}$ can be identified in polynomial time since $T \leq \poly(|\cA_1|, |\cA_2|)$. This concludes the proof.
\end{proof}

We also note the following computational hardness result, which rests on the more standard complexity assumption that $\PPAD$ is not contained in $\P$. As in~\Cref{prop:hard}, below we tacitly assume that the underlying algorithm is deterministic.

\begin{proposition}
    Unless $\PPAD \subseteq \P$, no polynomial-time algorithm guarantees $\sum_{i=1}^n \dreg_i^{(T)} \leq \poly(|\cA_1|, |\cA_2|) T^{1-\omega}$ for all $T \in \N$, where $\omega \in (0,1]$ is an absolute constant, even if $n = 2$.
\end{proposition}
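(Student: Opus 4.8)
The plan is to mirror the reduction behind~\Cref{prop:hard}, but to exploit the \emph{sublinearity} of the purported regret bound so as to push the approximation error down to inverse-polynomial accuracy, at which $\PPAD$-hardness already holds under the weaker assumption $\PPAD \not\subseteq \P$. Concretely, suppose toward a contradiction that some deterministic polynomial-time algorithm guarantees $\sum_{i=1}^n \dreg_i^{(T)} \leq \poly(|\cA_1|, |\cA_2|)\, T^{1-\omega}$ for every $T \in \N$ in every two-player game. Fixing the accuracy $\epsilon \defeq 1/\poly(|\cA_1|, |\cA_2|)$ (an inverse polynomial), I would choose the horizon $T$ just large enough that the \emph{average} regret falls below $\epsilon$: since $\frac{1}{T} \sum_{i=1}^n \dreg_i^{(T)} \leq \poly(|\cA_1|, |\cA_2|)\, T^{-\omega}$, it suffices to take $T \geq \left( \poly(|\cA_1|, |\cA_2|) / \epsilon \right)^{1/\omega}$. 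Because $\poly(|\cA_1|, |\cA_2|)$ and $1/\epsilon$ are both polynomial in the game size and $\omega \in (0,1]$ is an absolute constant, this $T$ itself remains polynomial in $|\cA_1|, |\cA_2|$.

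Given such a $T$, the averaging argument of~\Cref{prop:hard} applies essentially verbatim: there must exist a round $t \in \range{T}$ at which the sum of the two players' best-response gaps is at most $\epsilon$, so that the pair $(\vx_1^{(t)}, \vx_2^{(t)})$ produced by the dynamics is an $\epsilon$-approximate Nash equilibrium. Since $T$ is polynomial and each round of the deterministic dynamics can be simulated in polynomial time, and since scanning the $T$ iterates to locate such a $t$ and verifying its best-response gaps is likewise polynomial, the whole procedure runs in polynomial time. This would give a polynomial-time algorithm that computes an $\epsilon$-approximate Nash equilibrium of a two-player game for an inverse-polynomial $\epsilon$, a $\PPAD$-complete task~\citep{Chen09:Settling,Daskalakis08:Complexity}; hence $\PPAD \subseteq \P$, contradicting the hypothesis.

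The argument is a routine strengthening of~\Cref{prop:hard}, so I do not expect a deep obstacle; the one point demanding care is the bookkeeping that ties together the three polynomial quantities. The crux is ensuring that the horizon $T = (\poly/\epsilon)^{1/\omega}$ stays polynomial, which is exactly where the constancy of $\omega$ is indispensable: a vanishing exponent would inflate $T$ to superpolynomial size and break the reduction. One should also confirm that $\PPAD$-hardness of two-player Nash persists at the specific inverse-polynomial scale dictated by the regret bound, which follows from the standard $\PPAD$-completeness of computing two-player equilibria to inverse-polynomial precision.
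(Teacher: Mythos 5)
Your proposal is correct and takes essentially the same route as the paper: the paper's proof consists of observing that the result ``follows directly'' from the $\PPAD$-hardness of computing an inverse-polynomially accurate Nash equilibrium in two-player games due to \citet{Chen09:Settling}, which is exactly the reduction you carry out. The details you supply---averaging the regret bound over rounds, choosing $T = \left(\poly(|\cA_1|,|\cA_2|)/\epsilon\right)^{1/\omega}$ (polynomial precisely because $\omega$ is an absolute constant), and extracting a round whose strategy profile is an $\epsilon$-approximate Nash equilibrium as in \Cref{prop:hard}---are the bookkeeping the paper leaves implicit, including the assumption (noted just before the proposition) that the algorithm is deterministic.
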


The proof follows directly from the $\PPAD$-hardness result of~\citet{Chen09:Settling}. Of course, if we do not operate in the online learning model, it is computationally trivial to come up with algorithms that guarantee that one of the two players will have $0$ dynamic regret by simply best responding to the strategy of the other player.

Finally, we provide the proof of~\Cref{theorem:Kdreg}, the detailed version of which is provided below.

\begin{theorem}[Detailed version of~\Cref{theorem:Kdreg}]
    \label{theorem:Kdreg-detailed}
    Consider an $n$-player game such that $\| \nabla_{\vx_i} u_i(\vz) - \nabla_{\vx_i} u_i(\vz') \|_2 \leq L \| \vz - \vz'\|_2$, where $\vz, \vz' \in \bigtimes_{i=1}^n \cX_i$, for any player $i \in \range{n}$. Then, if all players employ $\ogd$ with learning rate $\eta > 0$ it holds that
    \begin{enumerate}
        \item $\sum_{i=1}^n \Kdreg^{(T)}_i = O(K \sqrt{n} L \diam^2_{\cZ})$ if $\eta = \Theta\left( \frac{1}{L\sqrt{n}} \right)$;
        \item \label{item:CGD} $\Kdreg_i^{(T)} = O(K^{3/4} T^{1/4} n^{1/4} L^{1/2} \diam^{3/2}_{\cX_i})$, for any $i \in \range{n}$, if $\eta = \Theta\left( \frac{K^{1/4} \diam_{\cX_i}^{1/2}}{n^{1/4} L^{1/2} T^{1/4}} \right)$.
    \end{enumerate}
\end{theorem}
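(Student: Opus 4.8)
The plan is to run everything through the dynamic RVU bound of \Cref{lemma:drvu}, instantiated separately for each player $i$ against a comparator sequence that switches at most $K-1$ times. Two structural facts drive the argument: (i) a comparator sequence with at most $K-1$ switches has first-order path length $\sum_{t}\|\vx_i^{(t+1,\star)}-\vx_i^{(t,\star)}\|_2 \le (K-1)\diam_{\cX_i}$, since each switch displaces the comparator by at most $\diam_{\cX_i}$; and (ii) $L$-smoothness converts the RVU utility-variation term into movement of the joint profile, $\|\vu_i^{(t)}-\vm_i^{(t)}\|_2 = \|\nabla_{\vx_i}u_i(\vz^{(t)})-\nabla_{\vx_i}u_i(\vz^{(t-1)})\|_2 \le L\|\vz^{(t)}-\vz^{(t-1)}\|_2$, using $\vm_i^{(t)}=\vu_i^{(t-1)}$. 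Substituting both into \Cref{lemma:drvu} gives, for each player,
\[
\Kdreg_i^{(T)} \le \frac{(2K-1)\diam_{\cX_i}^2}{2\eta} + \eta L^2\sum_{t\ge 2}\|\vz^{(t)}-\vz^{(t-1)}\|_2^2 - \frac{1}{2\eta}\sum_{t}\left(\|\vx_i^{(t)}-\hvx_i^{(t)}\|_2^2 + \|\vx_i^{(t)}-\hvx_i^{(t+1)}\|_2^2\right).
\]

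For the aggregate bound (Item~1) I would sum this over $i$. Writing $P \defeq \sum_t(\|\vz^{(t)}-\hvz^{(t)}\|_2^2+\|\vz^{(t)}-\hvz^{(t+1)}\|_2^2)$ for the joint second-order path length, using $\sum_i \diam_{\cX_i}^2 = \diam_\cZ^2$, and bounding $\sum_{t\ge 2}\|\vz^{(t)}-\vz^{(t-1)}\|_2^2 \le 2P$ (triangle inequality and Young's inequality routed through the intermediate iterate $\hvz^{(t)}$), the cross terms collect to
\[
\sum_{i}\Kdreg_i^{(T)} \le \frac{(2K-1)\diam_\cZ^2}{2\eta} + \left(2\eta n L^2 - \frac{1}{2\eta}\right)P.
\]
Choosing $\eta = \Theta(1/(L\sqrt n))$ renders the coefficient of $P$ nonpositive, so the path-length term is absorbed and only $\frac{(2K-1)\diam_\cZ^2}{2\eta} = O(K\sqrt n\,L\,\diam_\cZ^2)$ survives, which is exactly Item~1.

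The per-player bound (Item~2) is where the argument becomes delicate, and I expect it to be the main obstacle. The difficulty is that the clean cancellation above is available only for the \emph{sum} over players: a single player's RVU bound retains the \emph{aggregate} movement term $\eta L^2\sum_t\|\vz^{(t)}-\vz^{(t-1)}\|_2^2$ while keeping only its \emph{own} negative stability term, so there is no per-player cancellation, and a bound on $\sum_i\Kdreg_i^{(T)}$ alone does not control an individual $\Kdreg_i^{(T)}$. My plan is to discard the (nonpositive) own-stability term and control the aggregate movement separately through the stability handle extracted from the summed inequality, then optimize $\eta$ in the resulting expression $\frac{(2K-1)\diam_{\cX_i}^2}{2\eta} + \eta L^2\cdot(\text{movement})$. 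The balance point is $\eta = \Theta(K^{1/4}\diam_{\cX_i}^{1/2}n^{-1/4}L^{-1/2}T^{-1/4})$, and it is precisely the absence of per-player cancellation (which forces a $T$-dependent handle on the movement instead of the constant-order one available in the aggregate) that yields the $K^{3/4}T^{1/4}$ exponents rather than a tighter $\tilde O(K)$ rate.

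Finally I would flag the barrier to improvement, consistent with \Cref{remark:drvu}: sharpening Item~2 to $\tilde O(K)$ would require replacing the squared-Euclidean regularizer with a non-smooth one (such as the log-barrier), but such regularizers violate the precondition $\brg{x}{\vx}{\vx'}\le L\|\vx-\vx'\|$ of \Cref{lemma:beyondEucl}, so the dynamic RVU machinery used here does not extend to them without passing to the stronger feedback model.
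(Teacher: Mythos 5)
Your route is the same as the paper's: instantiate \Cref{lemma:drvu} per player with a comparator that switches at most $K-1$ times (yielding the $(2K-1)\diam_{\cX_i}^2/(2\eta)$ term), use $L$-smoothness to turn the prediction-error term into movement of the joint profile, sum over players and cancel against the negative stability terms for Item~1, and for Item~2 discard the own negative term and balance $\eta$. Item~1 is correct as written (modulo the dropped but harmless $\eta\|\vu_i^{(1)}\|_2^2$ term coming from $\vm_i^{(1)} = \vec{0}$).

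The gap is in Item~2, in the step ``control the aggregate movement \ldots through the stability handle extracted from the summed inequality.'' Read literally, that extraction does not go through: rearranging the summed inequality gives
\begin{equation*}
    \frac{c}{\eta}\sum_{t=1}^{T-1}\|\vz^{(t+1)}-\vz^{(t)}\|_2^2 \;\leq\; \frac{(2K-1)\diam_{\cZ}^2}{2\eta} \;-\; \sum_{i=1}^n \Kdreg_i^{(T)},
\end{equation*}
so you would need a lower bound on $\sum_{i=1}^n \Kdreg_i^{(T)}$. No such bound is available in this setting: each $\Kdreg_i^{(T)}$ only dominates external regret, which in a general-sum game can be negative (near-nonnegativity of summed regret is a minimax/mediator phenomenon in this paper, cf.\ \Cref{property:gennonnegative} and \Cref{property:nonnegativemed}, with no analogue for a raw general-sum smooth game). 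With only the trivial $-O(T)$ lower bound, the extraction yields movement $\lesssim K\diam_{\cZ}^2 + \eta T$, and balancing then produces exponents of order $K^{2/3}T^{1/3}$ (or a stray $T^{1/2}$ term if you keep the learning rate in the statement) --- not the claimed $K^{3/4}T^{1/4}$. What the paper uses instead, and what your own balance point silently presupposes, is the unconditional per-step stability of $\ogd$: by nonexpansiveness of the Euclidean projection and boundedness of the gradients over the compact domain, $\|\vx_j^{(t+1)}-\vx_j^{(t)}\|_2 = O(\eta)$ for every player $j$ and every round $t$, hence the aggregate movement is $O(\eta^2 n T)$, the middle term is $O(\eta^3 L^2 n T)$, and balancing against $K\diam_{\cX_i}^2/\eta$ gives exactly $\eta = \Theta\bigl(K^{1/4}\diam_{\cX_i}^{1/2}n^{-1/4}L^{-1/2}T^{-1/4}\bigr)$ and the bound $O(K^{3/4}T^{1/4}n^{1/4}L^{1/2}\diam_{\cX_i}^{3/2})$. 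Substituting this per-step stability argument for the ``summed inequality'' step makes your proof coincide with the paper's.
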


\begin{proof}
    First, applying~\Cref{lemma:drvu} subject to the constraint that $\sum_{t=1}^{T-1} \mathbbm{1} \{ \vx_i^{(t+1,\star)} \neq \vx_i^{(t,\star)}  \} \leq K - 1$ gives that for any Player $i \in \range{n}$,
    \begin{align}
        \Kdreg_i^{(T)} \leq \frac{\diam_{\cX_i}^2}{2\eta} \left( 2K - 1 \right) + \eta \|\vu_i^{(1)}\|^2_2 + &\eta \sum_{t=1}^{T-1} \|\vu_i^{(t+1)} - \vu_i^{(t)} \|^2_2 \notag \\
        &-\frac{1}{4\eta} \sum_{t=1}^{T-1} \| \vx_i^{(t+1)} - \vx_i^{(t)} \|^2_2.\label{eq:Kdreg}
    \end{align}
    Further, by $L$-smoothness we have that $$\| \vu_i^{(t+1)} - \vu_i^{(t)} \|^2_2 = \| \nabla_{\vx_i} u_i(\vz^{(t+1)}) - \nabla_{\vx_i} u_i(\vz^{(t)}) \|_2^2 \leq L^2 \sum_{i=1}^n \|\vx_i^{(t+1)} - \vx_i^{(t)} \|_2^2,$$ 
    for any $t \in \range{T-1}$, where $(\vx_1^{(t)}, \dots, \vx_n^{(t)}) = \vz^{(t)} \in \bigtimes_{i=1}^n \cX_i$ is the joint strategy profile at time $t$. Thus, summing~\eqref{eq:Kdreg} over all $i \in \range{n}$ and taking $\eta \leq \frac{1}{2L \sqrt{n}}$ implies that $\sum_{i=1}^n \Kdreg_i^{(T)} \leq \frac{2K - 1}{2 \eta} \sum_{i=1}^n \diam^2_{\cX_i} + \eta \sum_{i=1}^n \|\vu_i^{(1)}\|_2^2$, yielding the first part of the statement since $\diam_{\cZ}^2 = \sum_{i=1}^n \diam^2_{\cX_i}$, where we recall the notation $\cZ \defeq \bigtimes_{i=1}^n \cX_i$. The second part follows directly from~\eqref{eq:Kdreg} using the stability property of $\ogd$: $\|\vx_i^{(t+1)} - \vx_i^{(t)} \|_2 = O(\eta)$, for any time $t \in \range{T-1}$.
\end{proof}

\begin{remark}
    \label{remark:drvu}
    It is not hard to show that Item~\ref{item:CGD} above can be improved to $O_{K, T}(K)$ using \emph{clairvoyant mirror descent ($\cmd$)}~\citep{Piliouras21:Optimal}, but under a stronger feedback model. In particular, using the (squared) Euclidean regularizer it is direct to show (see \citep{Farina22:Clairvoyant,Nemirovski04:Prox}) that the dynamic regret of $\cmd$ is bounded solely by the first-order variation of the sequence of comparators, which suffices for this claim. It is open whether similar results can be obtained in the standard feedback model.
\end{remark}
\section{Experimental examples}
\label{sec:exp}

Finally, although the focus of this paper is theoretical, in this section we provide some illustrative experimental examples. In particular, \Cref{sec:exp-pot} contains experiments on time-varying potential games, while \Cref{sec:exp-zero} focuses on time-varying (two-player) zero-sum games. For simplicity, we will be assuming that each game is represented in normal form.

\subsection{Time-varying potential games}
\label{sec:exp-pot}

Here we consider time-varying $2$-player identical-interest games. We point out that such games are potential games (recall \Cref{def:pot}), and as such, they are indeed amenable to our theory in~\Cref{sec:mediator}.

In our first experiment, we first sampled two matrices $\mat{A}, \mat{P} \in \R^{d_x \times d_y}$, where $d_x = d_y = 1000$. Then, we defined each payoff matrix as $\mat{A}^{(t)} \defeq \mat{A}^{(t-1)} + \mat{P} t^{-\alpha}$ for $t \geq 1$, where $\mat{A}^{(0)} \defeq \mat{A}$. Here, $\alpha > 0$ is a parameter that controls the variation of the payoff matrices. In this time-varying setup, we let each player employ (online) $\gd$ with learning rate $\eta \defeq 0.1$. The results obtained under different random initializations of matrices $\mat{A}$ and $\mat{P}$ are illustrated in~\Cref{fig:pot-GD}.

Next, we operate in the same time-varying setup but each player is now employing multiplicative weights update ($\mwu$), instead of gradient descent, with $\eta \defeq 0.1$. As shown in \Cref{fig:pot-MWU}, while the cumulative equilibrium gap is much larger compared to using $\gd$ (\Cref{fig:pot-GD}), the dynamics still appear to be approaching equilibria, although our theory does not cover $\mwu$. We suspect that theoretical results such as \Cref{theorem:pot} should hold for $\mwu$ as well, but that has been left for future work.

In our third experiment for identical-interest games, we again first sampled two matrices $\mat{A}, \mat{P} \in \R^{d_x \times d_y}$, where $d_x = d_y = 1000$. Then, we defined $\mat{A}^{(t)} \defeq \mat{A}^{(t-1)} + \epsilon \mat{P}$ for $t \geq 1$, where $\mat{A}^{(0)} \defeq \mat{A}$. Here, $\epsilon > 0$ is the parameter intended to capture the variation of the payoff matrices. The results obtained under different random initializations of $\mat{A}$ and $\mat{P}$ are illustrated in~\Cref{fig:pot-GD-eps}. As an aside, it is worth pointing out that this particular setting can be thought of as a game in which the variation in the payoff matrices is controlled by another learning agent. In particular, our theoretical results could be helpful for characterizing the convergence properties of \emph{two-timescale} learning algorithms, in which the deviation of the game is controlled by a player constrained to be updating its strategies with a much smaller learning rate.

\begin{figure}[!ht]
    \centering
    \includegraphics[scale=0.55]{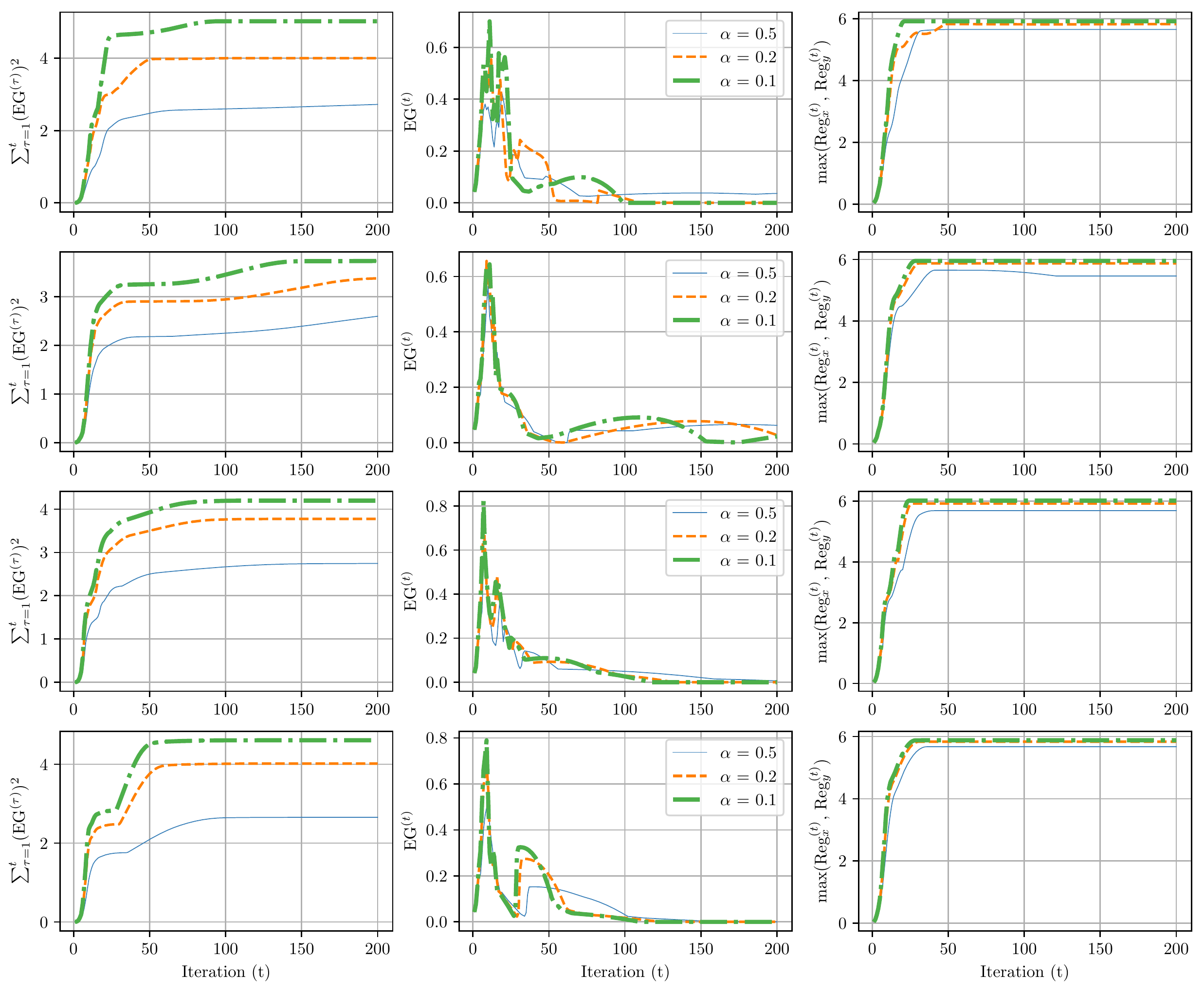}
    \caption{The equilibrium gap and the players' regrets in $2$-player time-varying identical-interest games when both players are employing (online) $\gd$ with learning rate $\eta \defeq 0.1$ for $T \defeq 200$ iterations. Each row corresponds to a different random initialization of the matrices $\mat{A}, \mat{P} \in \R^{d_x \times d_y}$, which in turn induces a different time-varying game. Further, each figure contains trajectories corresponding to three different values of $\alpha \in \{0.1, 0.2, 0.5\}$, but under the same initialization of $\mat{A}$ and $\mat{P}$. As expected, smaller values of $\alpha$ generally increase the equilibrium gap since the variation of the games is more significant. Nevertheless, for all games we observe that the players are gradually approaching equilibria.}
    \label{fig:pot-GD}
\end{figure}

\begin{figure}[!ht]
    \centering
    \includegraphics[scale=0.55]{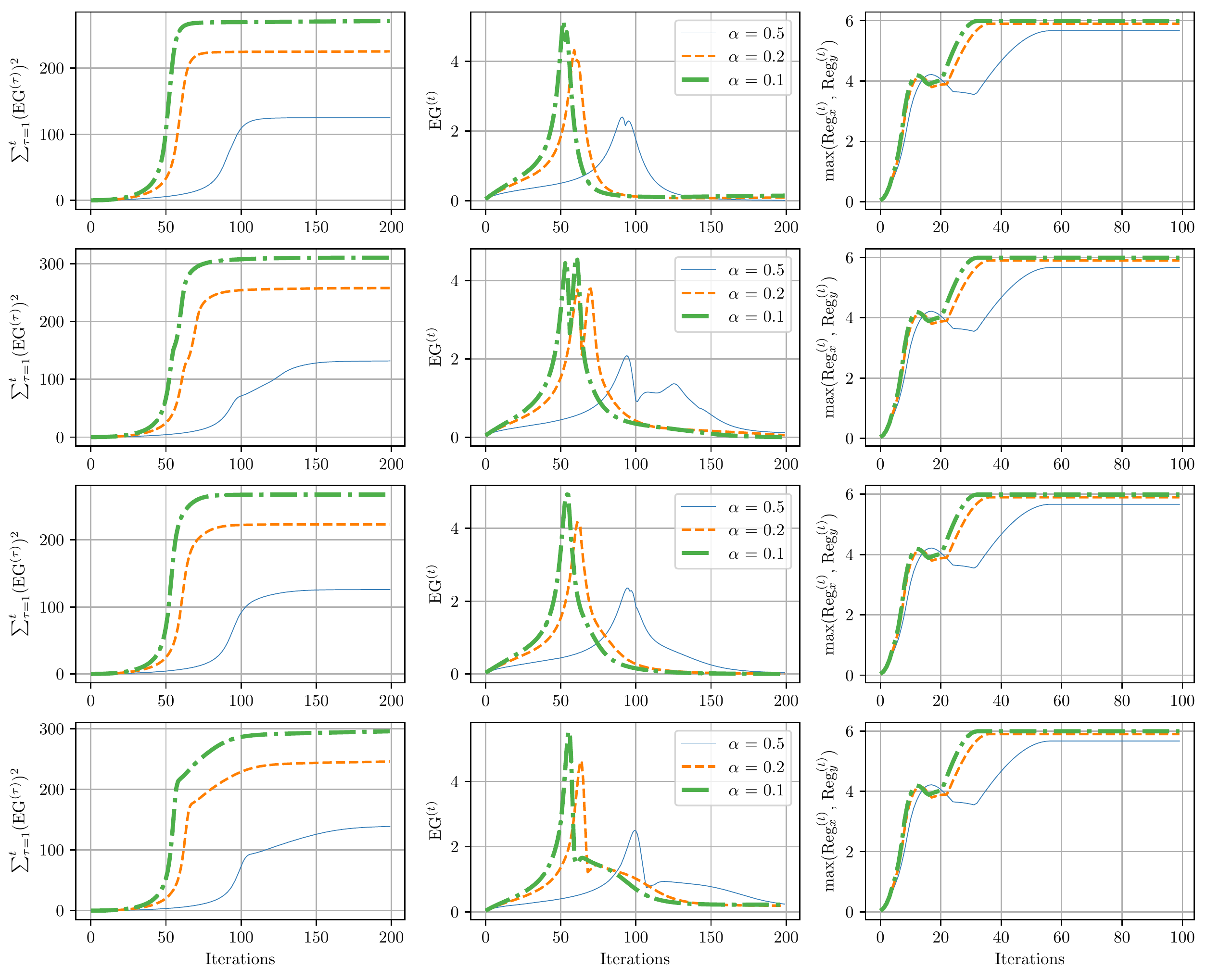}
    \caption{The equilibrium gap and the players' regrets in $2$-player time-varying identical-interest games when both players are employing (online) $\gd$ with learning rate $\eta \defeq 0.1$ for $T \defeq 200$ iterations. Each row corresponds to a different random initialization of the matrices $\mat{A}, \mat{P} \in \R^{d_x \times d_y}$, which in turn induces a different time-varying game. Further, each figure contains trajectories corresponding to three different values of $\alpha \in \{0.1, 0.2, 0.5\}$, but under the same initialization of $\mat{A}$ and $\mat{P}$. The $\mwu$ dynamics still appear to be approaching equilibria, although the cumulative gap is much larger compared to $\gd$ (\Cref{fig:pot-GD}).}
    \label{fig:pot-MWU}
\end{figure}

\begin{figure}[!ht]
    \centering
    \includegraphics[scale=0.55]{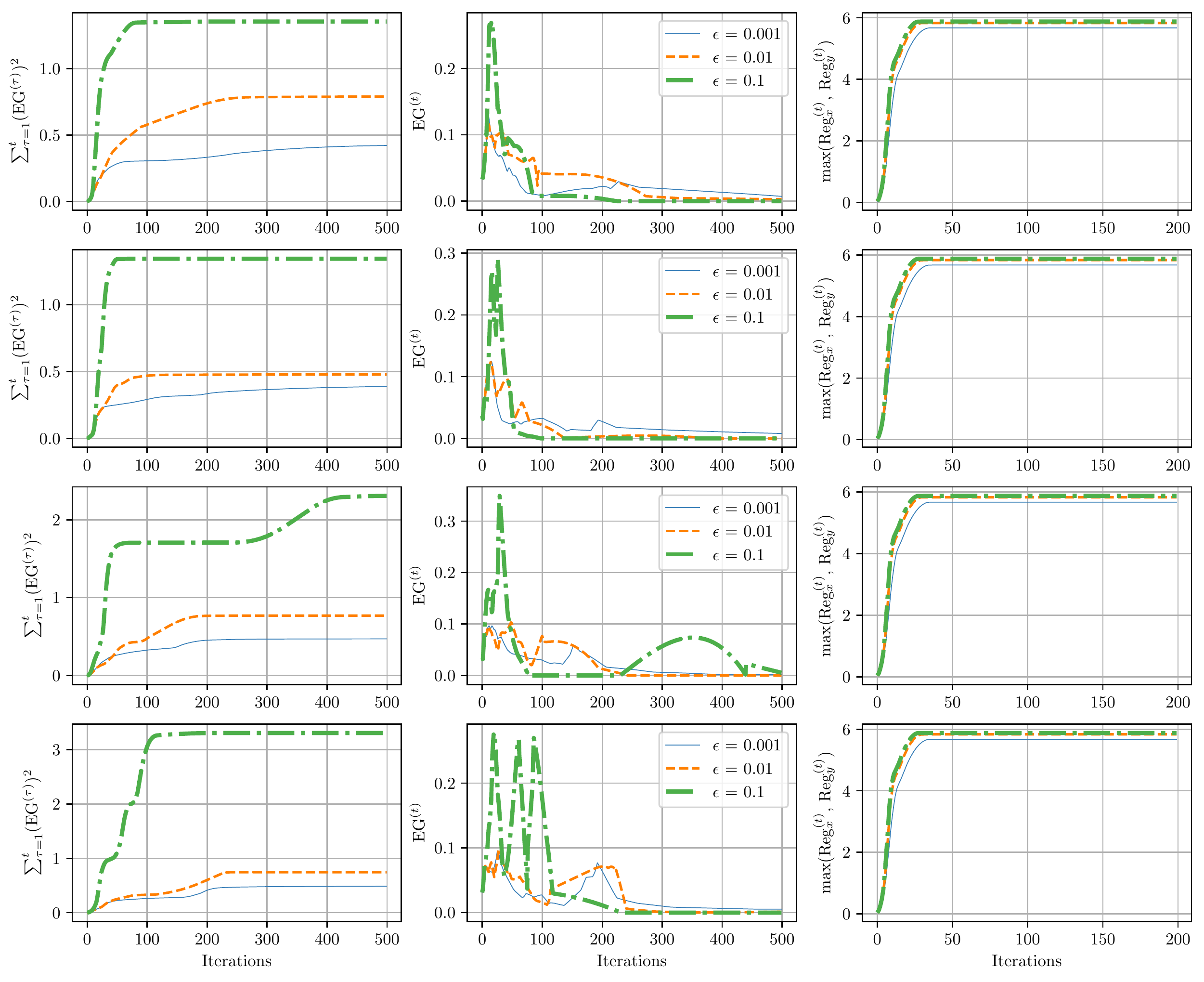}
    \caption{The equilibrium gap and the players' regrets in $2$-player time-varying identical-interest games when both players are employing (online) $\gd$ with learning rate $\eta \defeq 0.1$ for $T \defeq 500$ iterations. Each row corresponds to a different random initialization of the matrices $\mat{A}, \mat{P} \in \R^{d_x \times d_y}$, which in turn induces a different time-varying game. Further, each figure contains trajectories from three different values of $\epsilon \in \{0.1, 0.01, 0.001\}$, but under the same initialization of $\mat{A}$ and $\mat{P}$. As expected, larger values of $\epsilon$ generally increase the equilibrium gap since the variation of the games is more significant. Yet, even for the larger value $\epsilon = 0.1$, the dynamics are still appear to be approaching Nash equilibria.}
    \label{fig:pot-GD-eps}
\end{figure}

\subsection{Time-varying zero-sum games}
\label{sec:exp-zero}

We next conduct experiments on time-varying bilinear saddle-point problems when players are employing $\ogd$. Such problems were studied extensively earlier in \Cref{sec:bspp} from a theoretical standpoint.

First, we sampled two matrices $\mat{A}, \mat{P} \in \R^{d_x \times d_y}$, where $d_x = d_y = 10$; here we consider lower-dimensional payoff matrices compared to the experiments in \Cref{sec:exp-pot} for convenience in the graphical illustrations. Then, we defined each payoff matrix as $\mat{A}^{(t)} \defeq \mat{A}^{(t-1)} + \mat{P} t^{-\alpha}$ for $t \geq 1$, where $\mat{A}^{(1)} \defeq \mat{A}$. The results obtained under different random initializations are illustrated in \Cref{fig:comp-OGD}.



\begin{figure}[!ht]
    \centering
    \includegraphics[scale=0.55]{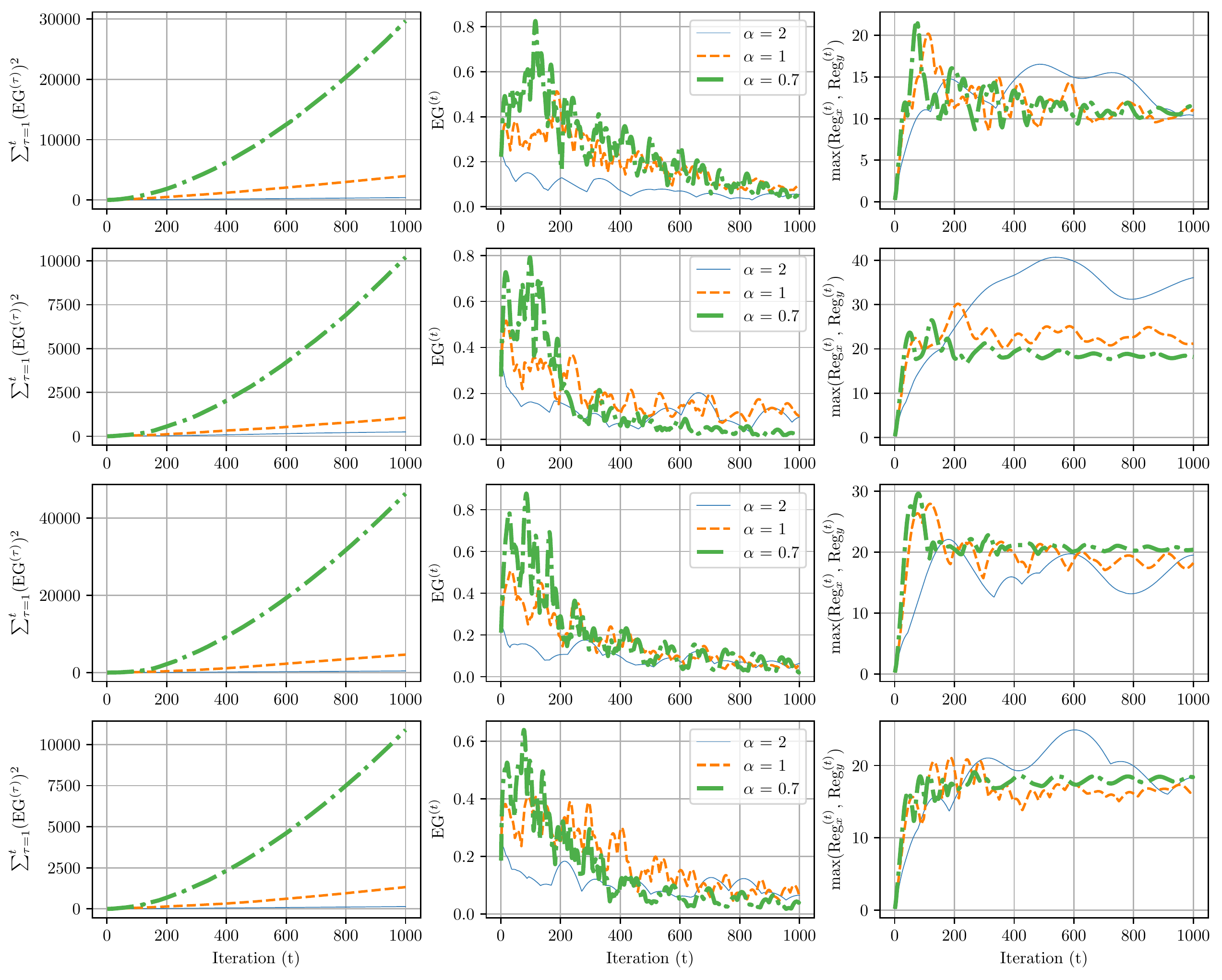}
    \caption{The equilibrium gap and the players' regrets in $2$-player time-varying zero-sum games when both players are employing $\ogd$ with learning rate $\eta \defeq 0.01$ and $T \defeq 1000$ iterations. Each row corresponds to a different random initialization of the matrices $\mat{A}, \mat{P} \in \R^{d_x \times d_y} $, which in turn induces a different time-varying game. Further, each figure contains trajectories from three different values of $\alpha \in \{0.7, 1, 2\}$, but under the same initialization of $\mat{A}$ and $\mat{P}$. The $\ogd$ dynamics appear to be approaching equilibria, albeit with a much slower rate compared to the ones observed earlier for potential games (\Cref{fig:pot-GD}).}
    \label{fig:comp-OGD}
\end{figure}
\fi

\appendix

\end{document}